\theoremstyle{plain}
\newtheorem{theorem}{Theorem}[section]
\theoremstyle{definition}
\newtheorem{definition}[theorem]{Definition}
\theoremstyle{remark}
\DeclareMathAlphabet\mathbb{U}{msb}{m}{n}
\def\Rset{\mathbb{R}}
\DeclareMathOperator*{\E}{\mathbb E}
\DeclareMathOperator*{\argmax}{argmax}
\DeclarePairedDelimiter{\abs}{\lvert}{\rvert} 
\DeclarePairedDelimiter{\bracket}{[}{]}
\DeclarePairedDelimiter{\curl}{\{}{\}}
\DeclarePairedDelimiter{\norm}{\|}{\|}
\DeclarePairedDelimiter{\paren}{(}{)}
\newcommand{\sA}{{\mathscr A}}
\newcommand{\sC}{{\mathscr C}}
\newcommand{\sD}{{\mathscr D}}
\newcommand{\sF}{{\mathscr F}}
\newcommand{\sH}{{\mathscr H}}
\newcommand{\sM}{{\mathscr M}}
\newcommand{\sP}{{\mathscr P}}
\newcommand{\sR}{{\mathscr R}}
\newcommand{\sS}{{\mathscr S}}
\newcommand{\sT}{{\mathscr T}}
\newcommand{\sX}{{\mathscr X}}
\newcommand{\sY}{{\mathscr Y}}
\newcommand{\sfB}{{\mathsf B}}
\newcommand{\Rad}{\mathfrak R}
\newcommand{\h}{\widehat}
\newcommand{\ov}{\overline}
\newcommand{\wt}{\widetilde}
\newcommand{\e}{\epsilon}
\newcommand{\set}[2][]{#1 \{ #2 #1 \} }
\newcommand{\ignore}[1]{}
\newcommand{\hh}{{\sf h}}
\newcommand{\NA}{---}
\icmltitlerunning{Cross-Entropy Loss Functions:
    Theoretical Analysis and Applications}
\begin{document}

\twocolumn[
  \icmltitle{Cross-Entropy Loss Functions:\\
    Theoretical Analysis and Applications}

\begin{icmlauthorlist}
% \icmlauthor{Corinna Cortes}{google}
\icmlauthor{Anqi Mao}{courant}
\icmlauthor{Mehryar Mohri}{google,courant}
\icmlauthor{Yutao Zhong}{courant}
\end{icmlauthorlist}

\icmlaffiliation{courant}{Courant Institute of Mathematical Sciences, New York, NY;}
\icmlaffiliation{google}{Google Research, New York, NY}

\icmlcorrespondingauthor{Anqi Mao}{aqmao@cims.nyu.edu}
\icmlcorrespondingauthor{Mehryar Mohri}{mohri@google.com}
\icmlcorrespondingauthor{Yutao Zhong}{yutao@cims.nyu.edu}

\icmlkeywords{learning theory, surrogate loss, consistency, adversarial robustness}

\vskip 0.3in
]

\printAffiliationsAndNotice{}

\begin{abstract}
Cross-entropy is a widely used loss function in applications. It
coincides with the logistic loss applied to the outputs of a neural
network, when the softmax is used. But, what guarantees can we rely on
when using cross-entropy as a surrogate loss? We present a theoretical
analysis of a broad family of loss functions, \emph{comp-sum losses},
that includes cross-entropy (or logistic loss), generalized
cross-entropy, the mean absolute error and other 
cross-entropy-like loss functions.
We give the first $\sH$-consistency bounds for these loss functions.
These are non-asymptotic guarantees that upper bound the zero-one loss
estimation error in terms of the estimation error of a surrogate loss,
for the specific hypothesis set $\sH$ used. We further show that our
bounds are \emph{tight}. These bounds depend on quantities called
\emph{minimizability gaps}. To make them more explicit, we give a
specific analysis of these gaps for comp-sum losses.
We also introduce a new family of loss functions, \emph{smooth
adversarial comp-sum losses}, that are derived from their comp-sum
counterparts by adding in a related smooth term.  We show that these
loss functions are beneficial in the adversarial setting by proving
that they admit $\sH$-consistency bounds.  This leads to new
adversarial robustness algorithms that consist of minimizing a
regularized smooth adversarial comp-sum loss.
While our main purpose is a theoretical analysis, we also present an
extensive empirical analysis comparing comp-sum losses. We further
report the results of a series of experiments demonstrating that our
adversarial robustness algorithms outperform the current
state-of-the-art, while also achieving a superior non-adversarial
accuracy.
\end{abstract}

\section{Introduction}
\label{sec:intro}

Most current learning algorithms rely on minimizing the cross-entropy
loss to achieve a good performance in a classification task. This is
because directly minimizing the zero-one classification loss is
computationally hard. But, what guarantees can we benefit from when
using cross-entropy as a surrogate loss?

Cross-entropy coincides with the (multinomial) logistic loss applied to the
outputs of a neural network, when the
softmax is used. It is known that the
logistic loss is Bayes consistent \citep{Zhang2003}. Thus,
asymptotically, a nearly optimal minimizer of the logistic loss over
the family of all measurable functions is also a nearly optimal
optimizer of the zero-one classification loss. However, this does not
supply any information about learning with a typically restricted
hypothesis set, which of course would not contain all measurable
functions. It also provides no guarantee for approximate minimizers
(non-asymptotic guarantee) since convergence could be arbitrarily
slow. What non-asymptotic guarantees can we rely on when minimizing
the logistic loss with a restricted hypothesis set, such as a family
of neural networks?

Recent work by
\citet*{awasthi2022Hconsistency,AwasthiMaoMohriZhong2022multi}
introduced the notion of \emph{$\sH$-consistency bounds}. These are
upper bounds on the zero-one estimation error of any predictor in a
hypothesis set $\sH$ in terms of its surrogate loss estimation error.
Such guarantees are thus both non-asymptotic and hypothesis
set-specific and therefore more informative than Bayes consistency
guarantees
\citep{Zhang2003,bartlett2006convexity,steinwart2007compare,
  tewari2007consistency}. For multi-class classification, the authors
derived such guarantees for the so-called \emph{sum-losses}, such as
the sum-exponential loss function of \citet{weston1998multi}, and
\emph{constrained losses}, such as the loss function of
\citet{lee2004multicategory}, where the scores (logits) must sum to
zero. To the best of our knowledge, no such guarantee has been given
for the more widely used logistic loss (or cross-entropy).

This paper presents the first $\sH$-consistency bounds for the
logistic loss, which can be used to derive directly guarantees for
current algorithms used in the machine learning community.  More
generally, we will consider a broader family of loss functions that we
refer to as \emph{comp-sum losses}, that is loss functions obtained by
composition of a concave function, such as logarithm in the case of
the logistic loss, with a sum of functions of differences of score,
such as the negative exponential. We prove $\sH$-consistency bounds
for a wide family of comp-sum losses, which includes as special cases
the logistic loss
\citep{Verhulst1838,Verhulst1845,Berkson1944,Berkson1951}, the
\emph{generalized cross-entropy loss} \citep{zhang2018generalized},
and the \emph{mean absolute error loss} \citep{ghosh2017robust}.
We further show that our bounds are \emph{tight} and thus cannot be
improved.

$\sH$-consistency bounds are expressed in terms of a quantity called
\emph{minimizability gap}, which only depends on the loss function and
the hypothesis set $\sH$ used. It is the difference of the best-in
class expected loss and the expected pointwise infimum of the loss.
For the loss functions we consider, the minimizability gap vanishes
when $\sH$ is the full family of measurable functions. However, in
general, the gap is non-zero and plays an important role, depending on
the property of the loss function and the hypothesis set. Thus, to
better understand $\sH$-consistency bounds for comp-sum losses, we
specifically analyze their minimizability gaps, which we
use to compare their guarantees.

A recent challenge in the application of neural networks is their
robustness to imperceptible perturbations
\citep{szegedy2013intriguing}. While neural networks trained on large
datasets often achieve a remarkable performance
\citep{SutskeverVinyalsLe2014,KrizhevskySutskeverHinton2012}, their
accuracy remains substantially lower in the presence of such
perturbations. One key issue in this scenario is the definition of a
useful surrogate loss for the adversarial loss. To tackle this
problem, we introduce a family of loss functions designed for
adversarial robustness that we call \emph{smooth adversarial comp-sum
loss functions}. These are loss functions derived from their comp-sum
counterparts by augmenting them with a natural smooth term.  We show
that these loss functions are beneficial in the adversarial setting by
proving that they admit $\sH$-consistency bounds. This leads to a
family of algorithms for adversarial robustness that consist of
minimizing a regularized smooth adversarial comp-sum loss.

While our main purpose is a theoretical analysis, we also present an
extensive empirical analysis. We compare the empirical performance of
comp-sum losses for different tasks and relate that to their
theoretical properties. We further report the results of experiments
with the CIFAR-10, CIFAR-100 and SVHN datasets comparing the
performance of our algorithms based on smooth adversarial comp-sum
losses with that of the state-of-the-art algorithm for this task
\textsc{trades} \citep{zhang2019theoretically}. The results show that
our adversarial algorithms outperform \textsc{trades} and also achieve
a substantially better non-adversarial (clean) accuracy.

The rest of this paper is organized as follows. In
Section~\ref{sec:preliminaries}, we introduce some basic concepts and
definitions related to comp-sum loss functions.  In
Section~\ref{sec:H-consistency-bounds}, we present our
$\sH$-consistency bounds for comp-sum losses.  We further carefully
compare their minimizability gaps in Section~\ref{sec:comparison}. In
Section~\ref{sec:adversarial}, we define and motivate our smooth
adversarial comp-sum losses, for which we prove $\sH$-consistency
bounds, and briefly discuss corresponding adversarial algorithms. In
Section~\ref{sec:experiments}, we report the results of our
experiments both to compare comp-sum losses in several tasks, and to
compare the performance of our algorithms based on smooth adversarial
comp-sum losses. In Section~\ref{sec:future-work}, we discuss avenues
for future research. In Appendix~\ref{app:realted-work}, we give a
comprehensive discussion of related work.

\section{Preliminaries}
\label{sec:preliminaries}

We consider the familiar multi-class classification
setting and denote by $\sX$ the input space,
by $\sY = [n] = \set{1, \ldots, n}$ the set of 
classes or categories ($n \geq 2$) and by $\sD$ a distribution
over $\sX \times \sY$.

We study general loss functions $\ell\colon \sH_{\rm{all}} \times \sX
\times \sY \to \Rset$ where $\sH_{\rm{all}}$ is the family of all
measurable functions $h\colon \sX \times \sY \to \Rset$. In
particular, the zero-one classification loss is defined, for all $h
\in \sH_{\rm{all}}$, $x \in \sX$ and $y \in \sY$, by $\ell_{0-1}(h, x,
y) = 1_{\hh(x) \neq y}$, where $\hh(x) = \argmax_{y \in \sY}h(x, y)$
with an arbitrary but fixed deterministic strategy used for breaking
the ties. For simplicity, we fix that strategy to be
the one selecting the label with the highest index under the natural
ordering of labels.

We denote by $\sR_\ell(h)$ the generalization error or expected loss
of a hypothesis $h \colon \sX \times \sY \to \Rset$: $\sR_\ell(h) =
\E_{(x, y) \sim \sD}[\ell(h, x, y)]$. For a hypothesis set $\sH
\subseteq \sH_{\rm{all}}$ of functions mapping from $\sX \times \sY$
to $\Rset$, $\sR^*_\ell(\sH)$ denotes the best-in class expected loss:
$\sR^*_\ell(\sH) = \inf_{h \in \sH} \sR_\ell(h)$.

We will prove $\sH$-consistency bounds, which are inequalities
relating the zero-one classification estimation loss $\ell_{0-1}$ of
any hypothesis $h \in \sH$ to that of its surrogate loss $\ell$
\citep*{awasthi2022Hconsistency,AwasthiMaoMohriZhong2022multi}. They
take the following form: $\forall h \in \sH, \sR_{\ell_{0-1}}(h) -
\sR^*_{\ell_{0-1}}(\sH) \leq f(\sR_\ell(h) - \sR^*_\ell(\sH))$, where
$f$ is a non-decreasing real-valued function. Thus, they show that the
estimation zero-one loss of $h$, $\sR_{\ell_{0-1}}(h) -
\sR^*_{\ell_{0-1}}(\sH)$, is bounded by $f(\e)$ when its surrogate
estimation loss, $\sR_\ell(h) - \sR^*_\ell(\sH)$, is bounded by
$\e$. These guarantees are thus non-asymptotic and depend on the
hypothesis set $\sH$ considered.

$\sH$-consistency bounds are expressed in terms of a quantity
depending on the hypothesis set $\sH$ and the loss function $\ell$
called \emph{minimizability gap}, and defined by $\sM_\ell(\sH) =
\sR^*_{\ell}(\sH) - \E_x\bracket[\big]{\inf_{h \in \sH} \E_y
  \bracket*{\ell(h, X, y) \mid X = x}}$. By the super-additivity of
the infimum, since $\sR^*_{\ell}(\sH) = \inf_{h \in \sH}
\E_x\bracket[\big]{\E_y \bracket*{\ell(h, X, y) \mid X = x}}$, the
minimizability gap is always non-negative.  It measures the difference
between the best-in-class expected loss and the expected infimum of
the pointwise expected loss.  When the loss function $\ell$ only
depends on $h(x,\cdot)$ for all $h$, $x$, and $y$, that is $\ell(h, x,
y) = \Psi(h(x,1), \ldots, h(x,n), y)$, for some function $\Psi$, then
it is not hard to show that the minimizability gap vanishes for the
family of all measurable functions: $\sM(\sH_{\rm{all}}) = 0$
\citep{steinwart2007compare}[lemma~2.5].  In general, however, the
minimizabiliy gap is non-zero for a restricted hypothesis set $\sH$
and is therefore important to analyze.  Note that the minimizabiliy
gap can be upper bounded by the approximation error $\sA(\sH) =
\sR^*_{\ell}(\sH) - \E_x\bracket[\big]{\inf_{h \in \sH_{\rm{all}}}
  \E_y \bracket*{\ell(h, X, y) \mid X = x}}$.  It is however a finer
quantity than the approximation error and can thus lead to more
favorable guarantees.

\ignore{
  This holds for many loss functions used in practice but typically
  not for loss functions used for adversarial robustness.
}

\ignore{
— definition of comp-sum losses.
— notions of calibration function C and minimizability gaps.
— definition of H-cons bounds.
— desired H-cons bounds for comp-sum losses.
}

\begin{figure}[t]
% \vskip 0.2in
\begin{center}
\hspace{-.25cm}
\includegraphics[scale=0.35]{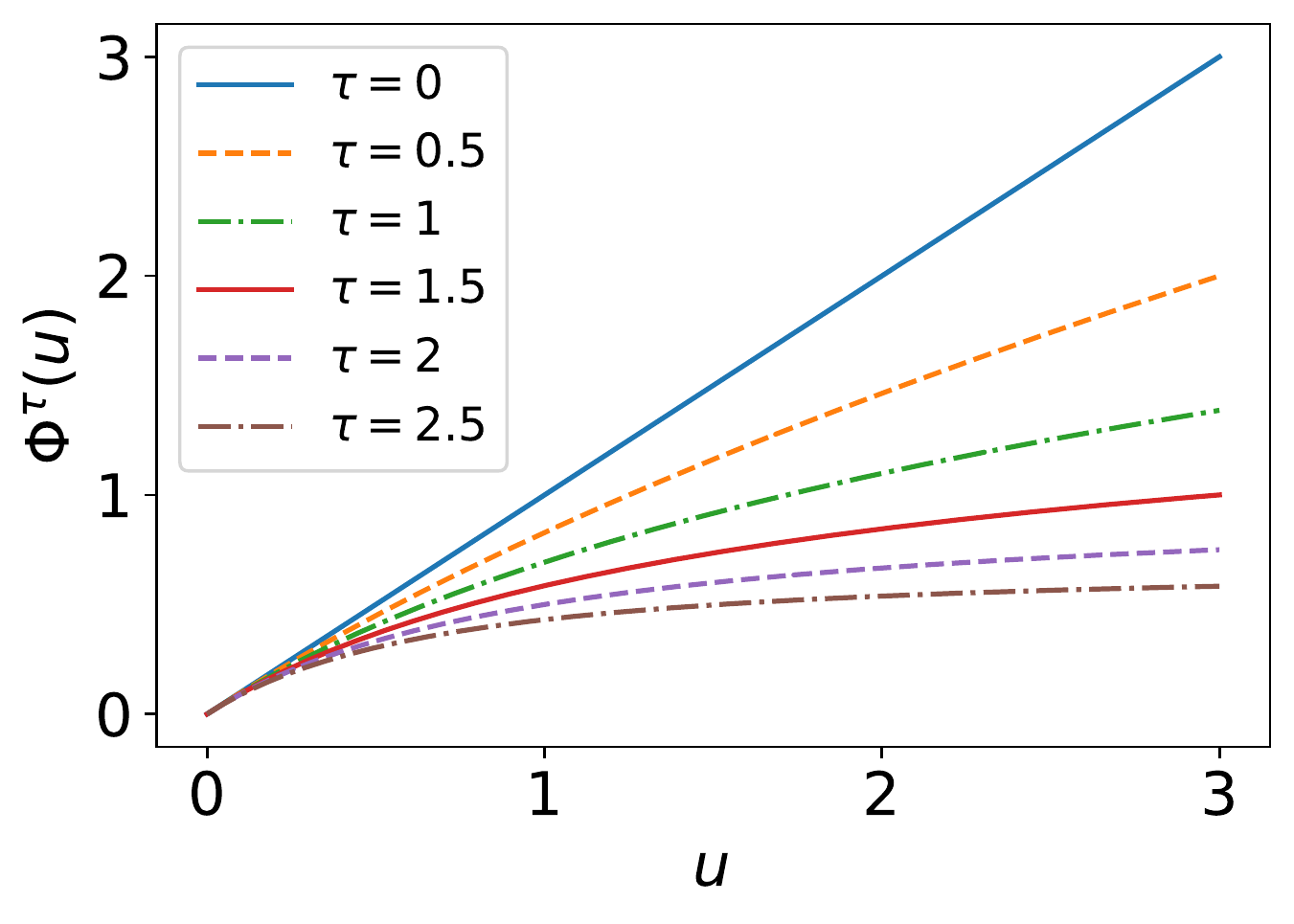}
\vskip -0.15in
\caption{Function $\Phi^{\tau}$ with different values of $\tau$.}
\label{fig:phi}
\end{center}
\vskip -0.2in
\end{figure}

\textbf{Comp-sum losses}.
In this paper, we derive guarantees for \emph{comp-sum losses}, a
family of functions including the logistic loss
that is defined via a composition of two functions $\Phi_1$ and
$\Phi_2$\ignore{, as in \citep{KuznetsovMohriSyed2014}}:
\begin{align}
\label{eq:comp-sum_loss}
\ell_{\Phi_1[\Phi_2]}^{\mathrm{comp}}(h, x, y)
= \Phi_1\paren[\bigg]{\sum_{y'\neq y}\Phi_2\paren*{h(x, y) - h(x, y')}},
\end{align}
where $\Phi_2$ is a non-increasing function upper bounding
$\mathds{1}_{u\leq 0}$ over $u\in\Rset$ and $\Phi_1$ a non-decreasing
auxiliary function.  We will specifically consider $\Phi_2(u) =
\exp(-u)$ as with the loss function related to AdaBoost
\citep{freund1997decision} and $\Phi_1$ chosen out of the following
family of functions $\Phi^\tau$, $\tau \geq 0$, defined for all $u \geq
0$ by
\begin{equation}
\label{eq:Phi1}
\Phi^{\tau}(u) =
\begin{cases}
\frac{1}{1 - \tau} \paren*{(1 + u)^{1 - \tau} - 1} & \tau \geq 0, \tau \neq 1 \\
\log(1 + u) & \tau = 1.
\end{cases}
\end{equation}
Figure~\ref{fig:phi} shows the plot of function $\Phi^{\tau}$ for
different values of $\tau$.  Functions $\Phi^\tau$ verify the
following identities:
\begin{align}
\label{eq:Phi1-derivative}
\frac{\partial \Phi^{\tau}}{\partial u}(u)
= \frac{1}{(1 + u)^{\tau}}, \quad \Phi^{\tau}(0) = 0.
\end{align}
In view of that, by l’H\^opital's rule, $\Phi^\tau$ is continuous as a
function of $\tau$ at $\tau = 1$. To simplify the notation, we will
use $\ell_{\tau}^{\rm{comp}}$ as a short-hand for
$\ell_{\Phi_1[\Phi_2]}^{\mathrm{comp}}$ when $\Phi_1 = \Phi^{\tau}$
and $\Phi_2(u) = \exp(-u)$. $\ell_{\tau}^{\rm{comp}}(h, x, y)$ can be
expressed as follows for any $h$, $x$, $y$ and $\tau \geq 0$:
\begin{multline}
\label{eq:comp-loss}
\ell_{\tau}^{\rm{comp}}(h, x, y)
= \Phi^{\tau}\paren*{\sum_{y'\in \sY} e^{h(x, y') - h(x, y)}-1} \\
=
\begin{cases}
  \frac{1}{1 - \tau}
  \paren*{\bracket*{\sum_{y'\in\sY} e^{{h(x, y') - h(x, y)}}}^{1 - \tau} - 1}
  & \tau \neq 1 \\
\log\paren*{\sum_{y'\in \sY} e^{h(x, y') - h(x, y)}} & \tau = 1.
\end{cases}
\end{multline}
When $\tau = 0$, $\ell_{\tau}^{\rm{comp}}$ coincides with the
sum-exponential loss
\citep{weston1998multi,AwasthiMaoMohriZhong2022multi}
\begin{align*}
  \ell_{\tau = 0}^{\mathrm{comp}}(h, x, y)
  = \sum_{y'\neq y} e^{h(x, y') - h(x, y)}.
\end{align*}
When $\tau = 1$, it coincides with the (multinomial) logistic loss
\citep{Verhulst1838,Verhulst1845,Berkson1944,Berkson1951}:
\begin{align*}
  \ell_{\tau = 1}^{\mathrm{comp}}(h, x, y)
  =- \log \bracket*{\frac{e^{h(x,y)}}{\sum_{y' \in \sY} e^{h(x,y')}}}.
\end{align*}
For $1 < \tau < 2$, it matches the
\emph{generalized cross entropy loss} \citep{zhang2018generalized}:
\begin{align*}
  \ell_{1 < \tau < 2}^{\mathrm{comp}}(h, x, y)
  = \frac{1}{\tau - 1}\bracket*{1 - \bracket*{\frac{e^{h(x,y)}}
    {\sum_{y'\in \sY} e^{h(x,y')}}}^{\tau - 1}},
\end{align*}
for $\tau = 2$, the \emph{mean absolute error loss}
\citep{ghosh2017robust}:
\begin{align*}
  \ell_{\tau = 2}^{\mathrm{comp}}(h, x, y) =
  1 - \frac{e^{h(x,y)}}{\sum_{y'\in \sY} e^{h(x, y')}}.
\end{align*}
Since for any $\tau\geq0$, $\frac{\partial \Phi^{\tau}}{\partial u}$
is non-increasing and satisfies $\frac{\partial \Phi^{\tau}}{\partial
  u}(0)=1$, $\Phi^{\tau}(0) = 0$, for any $\tau \geq 0$,
$\Phi^{\tau}$ is concave, non-decreasing, differentiable,
$1$-Lipschitz, and satisfies that
\begin{equation}
\label{eq:Phi1-property}
\forall u \geq 0,~\Phi^{\tau} (u) \leq u.
\end{equation}

\section{$\sH$-Consistency Bounds for Comp-Sum Losses}
\label{sec:H-consistency-bounds}

In this section, we present and discuss $\sH$-consistency bounds for
comp-sum losses in the standard multi-class classification
scenario. We say that a hypothesis set is \emph{symmetric} when it
does not depend on a specific ordering of the classes, that is, when
there exists a family $\sF$ of functions $f$ mapping from $\sX$ to
$\Rset$ such that $\curl*{\bracket*{h(x, 1),\ldots,h(x, c)}\colon h\in
  \sH} = \curl*{\bracket*{f_1(x),\ldots, f_c(x)}\colon f_1, \ldots,
  f_c\in \sF}$, for any $x \in \sX$. We say that a hypothesis set
$\sH$ is \emph{complete} if the set of scores it generates spans
$\Rset$, that is, $\curl*{h(x, y)\colon h\in \sH} = \Rset$, for any
$(x, y)\in \sX \times \sY$. The hypothesis sets widely used in
practice are all symmetric and complete.

\subsection{$\sH$-Consistency Guarantees}

The following holds for all comp-sum loss functions and
all symmetric and complete hypothesis sets, which includes
those typically considered in applications.

\begin{restatable}[\textbf{$\sH$-consistency bounds for comp-sum losses}]
  {theorem}{BoundCompSum}
\label{Thm:bound_comp_sum}
Assume that $\sH$ is symmetric and complete. Then, for any $\tau\in
[0,\infty)$ and any $h \in \sH$, the following inequality holds:
\ifdim\columnwidth= \textwidth
\begin{align*}
\sR_{\ell_{0-1}}(h)-\sR_{\ell_{0-1}}^*(\sH)
\leq \Gamma_{\tau}
  \paren*{\sR_{\ell_{\tau}^{\rm{comp}}}(h) - \sR_{\ell_{\tau}^{\rm{comp}}}^*(\sH)
    + \sM_{\ell_{\tau}^{\rm{comp}}}(\sH)}
- \sM_{\ell_{0-1}}(\sH),
\end{align*}
\else
\begin{align*}
& \sR_{\ell_{0-1}}(h)-\sR_{\ell_{0-1}}^*(\sH)\\
& \mspace{-6mu} \leq \Gamma_{\tau}
  \paren*{\sR_{\ell_{\tau}^{\rm{comp}}}(h) - \sR_{\ell_{\tau}^{\rm{comp}}}^*(\sH)
    + \sM_{\ell_{\tau}^{\rm{comp}}}(\sH)}
\!-\! \sM_{\ell_{0-1}}(\sH),
\end{align*}
\fi where $\Gamma_{\tau}(t)= \sT_{\tau}^{-1}(t)$ is the inverse of
$\sH$-consistency comp-sum transformation, defined for all $\beta \in
    [0,1]$ by
$\sT_{\tau}(\beta)
= \begin{cases}
\frac{2^{1-\tau}}{1-\tau}\bracket*{1 -\bracket*{\frac{\paren*{1 + \beta}^{\frac1{2 - \tau }} +  \paren*{1 - \beta}^{\frac1{2 - \tau }}}{2}}^{2 - \tau }} & \tau \in [0,1)\\
\frac{1+\beta}{2}\log\bracket*{1+\beta} + \frac{1-\beta}{2}\log\bracket*{1-\beta} & \tau =1 \\
\frac{1}{(\tau-1)n^{\tau-1}}\bracket*{\bracket*{\frac{\paren*{1 + \beta}^{\frac1{2 - \tau }} +  \paren*{1 - \beta}^{\frac1{2 - \tau }}}{2}}^{2 - \tau } \mspace{-20mu} -1} & \tau \in (1,2)\\
\frac{1}{(\tau-1)n^{\tau-1}}\,
\beta & \tau \in [2,\plus \infty).
\end{cases}$ 
\end{restatable}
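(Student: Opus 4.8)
For a loss $\ell$ write $\cC_\ell(h,x)=\E_y\bracket*{\ell(h,x,y)\mid X=x}$ for the conditional $\ell$-risk, $\cC_\ell^*(\sH,x)=\inf_{h\in\sH}\cC_\ell(h,x)$, and $\Delta\cC_{\ell,\sH}(h,x)=\cC_\ell(h,x)-\cC_\ell^*(\sH,x)$ for the conditional regret. By the general reduction of \citet*{awasthi2022Hconsistency,AwasthiMaoMohriZhong2022multi} --- Jensen's inequality for a non-decreasing concave $\Gamma$ combined with the definition of the minimizability gaps --- the claimed bound follows once we prove that, for every $x$ and every $h\in\sH$,
\begin{equation}\label{eq:pointwise}
\sT_\tau\paren[\big]{\Delta\cC_{\ell_{0-1},\sH}(h,x)}\;\le\;\Delta\cC_{\ell_\tau^{\rm{comp}},\sH}(h,x),
\end{equation}
together with the fact that $\sT_\tau$ is non-decreasing and convex on $[0,1]$ with $\sT_\tau(0)=0$, so that $\Gamma_\tau=\sT_\tau^{-1}$ is well defined, non-decreasing, and concave. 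These properties of $\sT_\tau$ are a direct computation (for instance $\sT_1''(\beta)=(1-\beta^2)^{-1}>0$, the $\tau\in[2,\infty)$ branch is linear, and the remaining branches reduce, via $s=1/(2-\tau)$, to the convexity of $\beta\mapsto\paren[\big]{\tfrac12(1+\beta)^s+\tfrac12(1-\beta)^s}^{1/s}$), so the real content is \eqref{eq:pointwise}.

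\textbf{Reparametrization and conditional risks.}
Fix $x$ and let $p=(p_1,\dots,p_n)$ be the conditional law of $y$ given $x$. For $h$, set $q_y=e^{h(x,y)}/\sum_{y'\in\sY}e^{h(x,y')}$; since $\ell_\tau^{\rm{comp}}$ depends on $h$ only through score differences, \eqref{eq:comp-loss} gives $\sum_{y'\in\sY}e^{h(x,y')-h(x,y)}-1=(1-q_y)/q_y$, hence $\cC_{\ell_\tau^{\rm{comp}}}(h,x)=\sum_y p_y\,\Phi^\tau\paren[\big]{(1-q_y)/q_y}$, which equals $-\sum_y p_y\log q_y$ when $\tau=1$ and $\frac{1}{1-\tau}\paren[\big]{\sum_y p_y q_y^{\tau-1}-1}$ when $\tau\neq1$. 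Since $\sH$ is symmetric and complete, as $h$ ranges over $\sH$ the vector $q$ ranges over the interior of the probability simplex, so $\cC^*_{\ell_\tau^{\rm{comp}}}(\sH,x)$ is the infimum of $\cC_{\ell_\tau^{\rm{comp}}}$ over the whole simplex; this infimum is attained at $q^\star_y\propto p_y^{1/(2-\tau)}$ for $\tau\in[0,2)$ (which reads $q^\star=p$ at $\tau=1$ and degenerates, as $\tau\to2^-$, to the point mass on $\argmax_y p_y$, the minimizer for $\tau\ge2$), and one obtains a closed form for it. For the zero-one loss, $\cC_{\ell_{0-1}}(h,x)=1-p_{\hh(x)}$ with $\hh(x)=\argmax_y q_y$ under the fixed tie-break, and $\cC^*_{\ell_{0-1}}(\sH,x)=1-\max_y p_y$, so $\Delta\cC_{\ell_{0-1},\sH}(h,x)=\max_y p_y-p_{\hh(x)}$.

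\textbf{The pointwise inequality.}
It remains to show that, for all $p$ in the simplex and all $q$ in the simplex with $\argmax_y q_y=j$, $\cC(q;p)-\cC^\star(p)\ge\sT_\tau(\max_y p_y-p_j)$, where $\cC(\cdot;p)$ and $\cC^\star(p)$ denote the conditional $\ell_\tau^{\rm{comp}}$-risk and its infimum. For $\tau\in[2,\infty)$ this is a short chain: using $\cC^\star(p)=\tfrac{1}{\tau-1}(1-\max_y p_y)$ and $q_y^{\tau-1}\le q_y$ (valid since $q_y\in[0,1]$, $\tau-1\ge1$),
\begin{align*}
\cC(q;p)-\cC^\star(p)&=\tfrac{1}{\tau-1}\paren[\big]{\textstyle\max_y p_y-\sum_y p_y q_y^{\tau-1}}\\
&\ge\tfrac{1}{\tau-1}\,\textstyle\sum_y(\max_{y'} p_{y'}-p_y)q_y\ge\tfrac{\max_y p_y-p_j}{(\tau-1)\,n},
\end{align*}
where the second line keeps only the $y=j$ term, uses $\max_{y'}p_{y'}-p_y\ge0$, and bounds $q_j=\max_y q_y\ge1/n$; since $n^{\tau-1}\ge n$ this is $\ge\sT_\tau(\max_y p_y-p_j)$. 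For $\tau\in[0,2)$ one instead solves the extremal problem $\min\{\cC(q;p)-\cC^\star(p)\}$ over all $(p,q)$ with $\argmax_y q_y=j$ and $\max_y p_y-p_j=\beta$: after relabeling $j=n$ and replacing $\cC(q;p)$ by $\inf\{\cC(q';p)\colon\argmax_y q'_y=n\}$, the minimizer of the inner $q'$-problem is read off its KKT conditions (the only binding inequality forces $q'_n$ to be the largest coordinate), and one then optimizes over $p$. For $\tau\le1$ the extremal $p$ is supported on two labels with masses $\tfrac{1\pm\beta}{2}$ and the optimal $q'$ ties the two relevant coordinates, which after simplification gives the first two branches; for $\tau\in(1,2)$ the extremal configuration instead spreads mass across all $n$ labels (interpolating between that two-label configuration as $\tau\to1$ and a near-uniform one as $\tau\to2$), which is the origin of the $n^{\tau-1}$ factor in the third branch. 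This extremal analysis --- and in particular establishing the matching lower bounds, not merely exhibiting the configurations --- is the main obstacle: it requires a convexity and monotonicity study of $q'\mapsto\cC(q';p)$ and of $\cC^\star(p)$ that is genuinely different for $\tau<1$, $\tau=1$, and $1<\tau<2$, precisely because the minimizer $q^\star$ changes character (it follows a power of $p$ for $\tau<2$ but collapses to a vertex of the simplex at $\tau=2$). Granted \eqref{eq:pointwise}, the theorem follows from the reduction above with $\Gamma_\tau=\sT_\tau^{-1}$.
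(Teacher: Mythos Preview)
Your reduction to the pointwise calibration inequality \eqref{eq:pointwise} and your treatment of the case $\tau\ge 2$ are correct; in fact your $\tau\ge 2$ argument yields the slightly sharper factor $1/((\tau-1)n)$ rather than $1/((\tau-1)n^{\tau-1})$, which is enough since $n^{\tau-1}\ge n$ for $\tau\ge 2$.

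The genuine gap is for $\tau\in[0,2)$. There you only \emph{state} the extremal problem and \emph{describe} candidate minimizers, explicitly conceding that ``establishing the matching lower bounds \dots\ is the main obstacle''. Nothing in your outline pins down the minimum of $\cC(q;p)-\cC^\star(p)$ subject to $\argmax_y q_y=j$ and $\max_y p_y-p_j=\beta$: the KKT step for the inner $q$-problem is not written out, the outer optimization over $p$ is not performed, and your claim that for $\tau\in(1,2)$ the extremal $p$ ``spreads mass across all $n$ labels'' is a guess motivated by the $n^{\tau-1}$ factor rather than a derivation. (In fact that factor in the paper does \emph{not} come from the distribution $p$; it arises from optimizing nuisance scores, see below.) So for the main range $\tau\in[0,2)$---which includes the logistic loss---the proof is not yet a proof.

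The paper avoids this obstacle with a different idea. Instead of computing the true $\cC^\star$ and attacking the full simplex optimization, it introduces an auxiliary one-parameter family $\bar h_\mu$ that modifies $h$ only at the two labels $y_{\max}$ and $\hh(x)$ while keeping $\sum_{y}e^{h(x,y)}$ unchanged. Since $\cC^\star_{\ell_\tau^{\rm comp}}(\sH,x)\le\inf_\mu\cC_{\ell_\tau^{\rm comp}}(\bar h_\mu,x)$, one gets $\Delta\cC_{\ell_\tau^{\rm comp},\sH}(h,x)\ge\cC_{\ell_\tau^{\rm comp}}(h,x)-\inf_\mu\cC_{\ell_\tau^{\rm comp}}(\bar h_\mu,x)$, and the right-hand side is an explicit \emph{two-variable} expression in $e^{h(x,y_{\max})}$, $e^{h(x,\hh(x))}$ and the total $\sum_y e^{h(x,y)}$ that can be minimized in closed form. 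That minimization over scores is where the $n^{\tau-1}$ appears for $\tau>1$; a final one-variable minimization in $\alpha=p(x,y_{\max})+p(x,\hh(x))$ (attained at $\alpha=1$) delivers $\sT_\tau(\beta)$ for every $\tau\ge 0$ uniformly. This two-label reduction is the missing technical device in your approach; it replaces the multi-dimensional extremal analysis you left open by a sequence of explicit scalar optimizations.
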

By l’H\^opital's rule, $\sT_{\tau}$ is continuous as a function of
$\tau$ at $\tau = 1$. Using the fact that $\lim_{x\to
  0^{+}}\paren*{a^{\frac1x}+b^{\frac1x}}^x= \max\curl*{a,b}$,
$\sT_{\tau}$ is continuous as a function of $\tau$ at $\tau =
2$. Furthermore, for any $\tau\in [0,\plus\infty)$, $\sT_{\tau}$ is a
  convex and increasing function, and satisfies that
  $\sT_{\tau}(0)=0$. Note that for the sum-exponential loss ($\tau=0$)
  and logistic loss ($\tau=1$), the expression $\sT_{\tau}$ matches
  that of their binary $\sH$-consistency estimation error
  transformation $1-\sqrt{1-t^2}$ and
  $\frac{1+t}{2}\log(1+t)+\frac{1-t}{2}\log(1-t)$ in the binary
  classification setting \cite{awasthi2022Hconsistency}, which were
  proven to be tight. We will show that, for these loss functions and
  in this multi-class classification setting, $\sT_{\tau}$s admit a
  tight functional forms as well. We illustrate the function
  $\Gamma_{\tau}$ with different values of $\tau$ in
  Figure~\ref{fig:Gamma}.

\begin{figure}[t]
% \vskip 0.2in
\begin{center}
\hspace{-.25cm}
\includegraphics[scale=0.35]{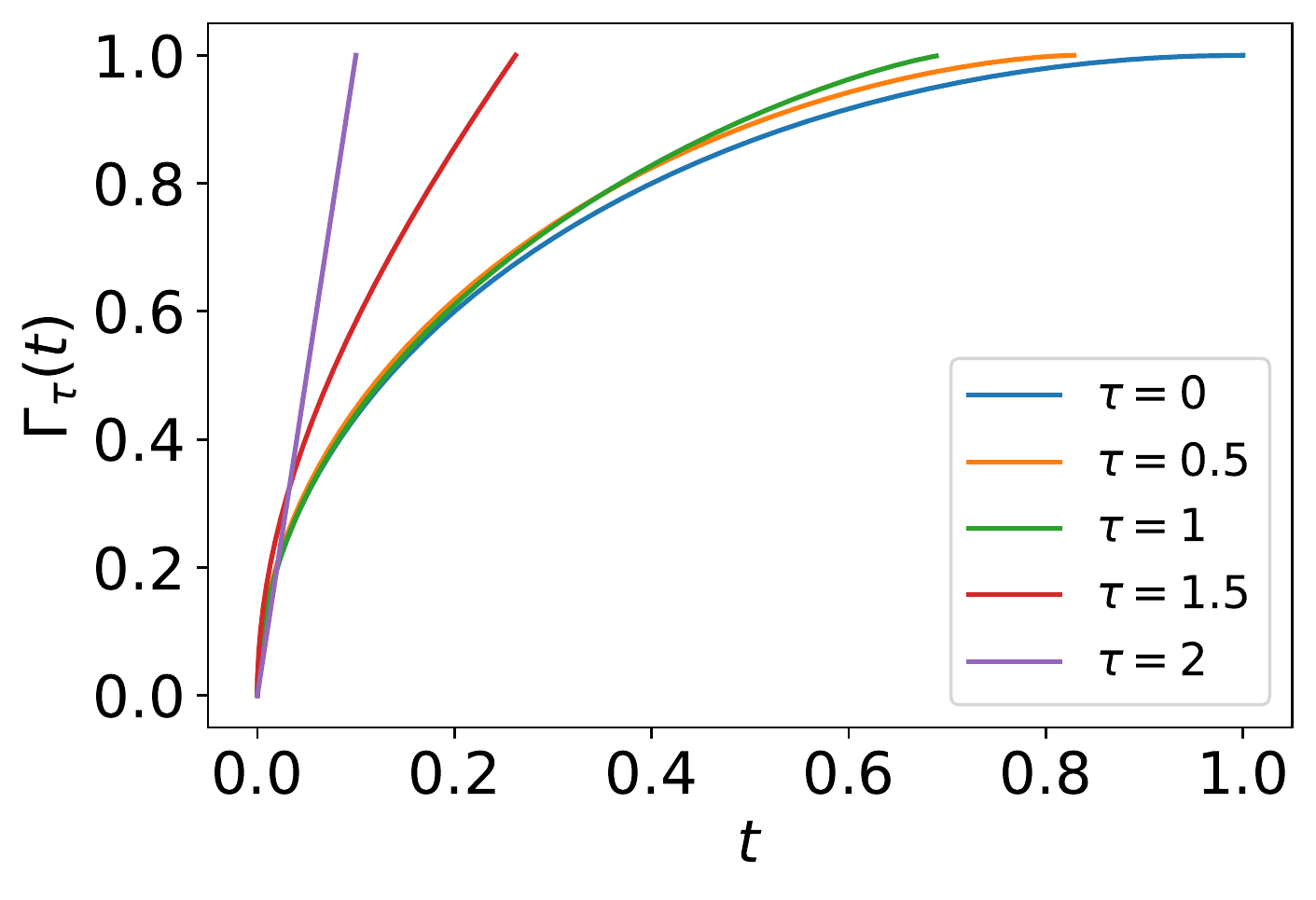}
\vskip -0.15in
\caption{Function $\Gamma_{\tau}$ with different values of $\tau$ for $n = 10$.}
\label{fig:Gamma}
\end{center}
\vskip -0.2in
\end{figure}

\begin{figure*}[t]
% \vskip 0.2in
\begin{center}
\hspace{-.25cm}
\includegraphics[scale=0.3]{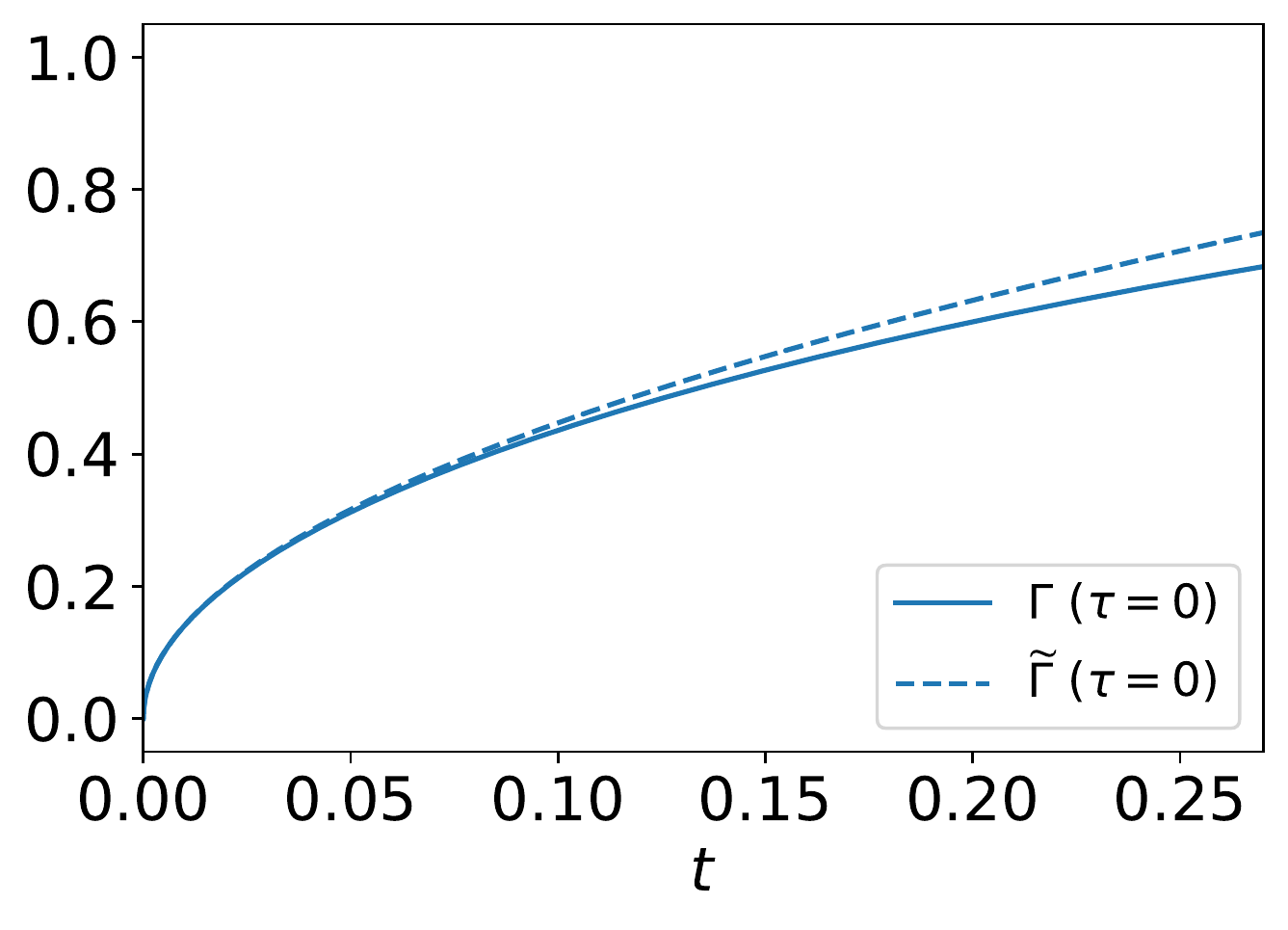}
\includegraphics[scale=0.3]{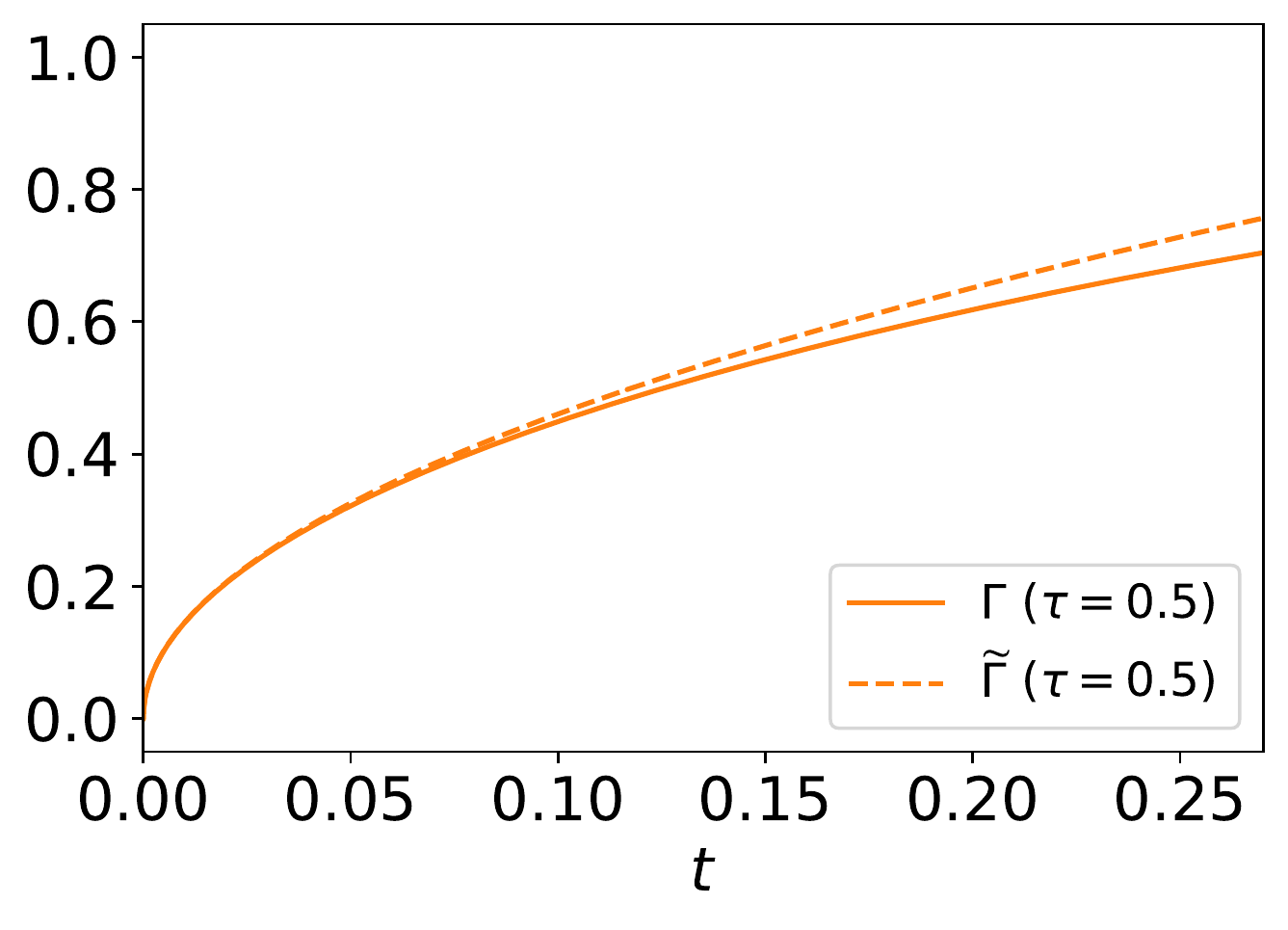}
\includegraphics[scale=0.3]{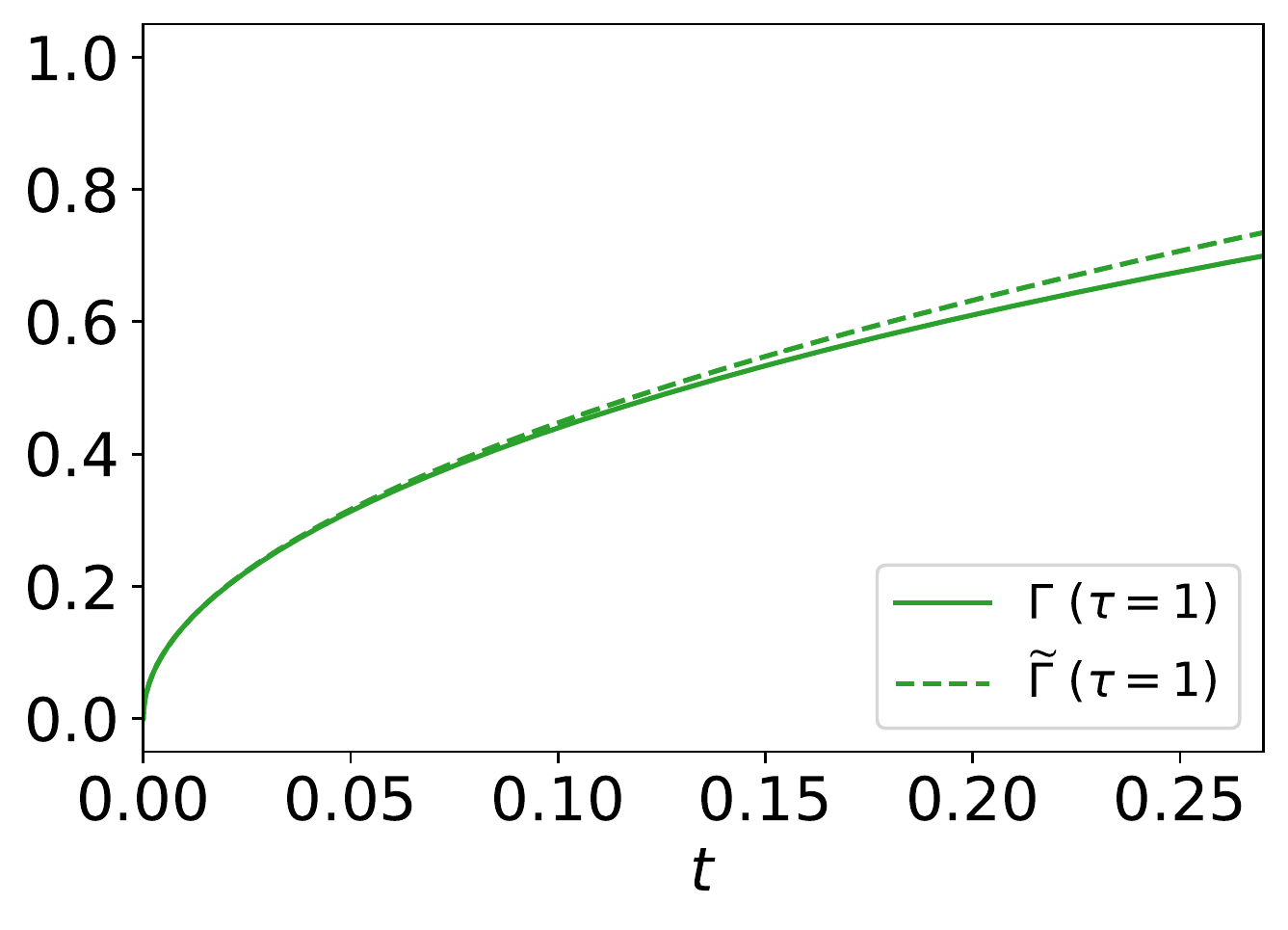}
\includegraphics[scale=0.3]{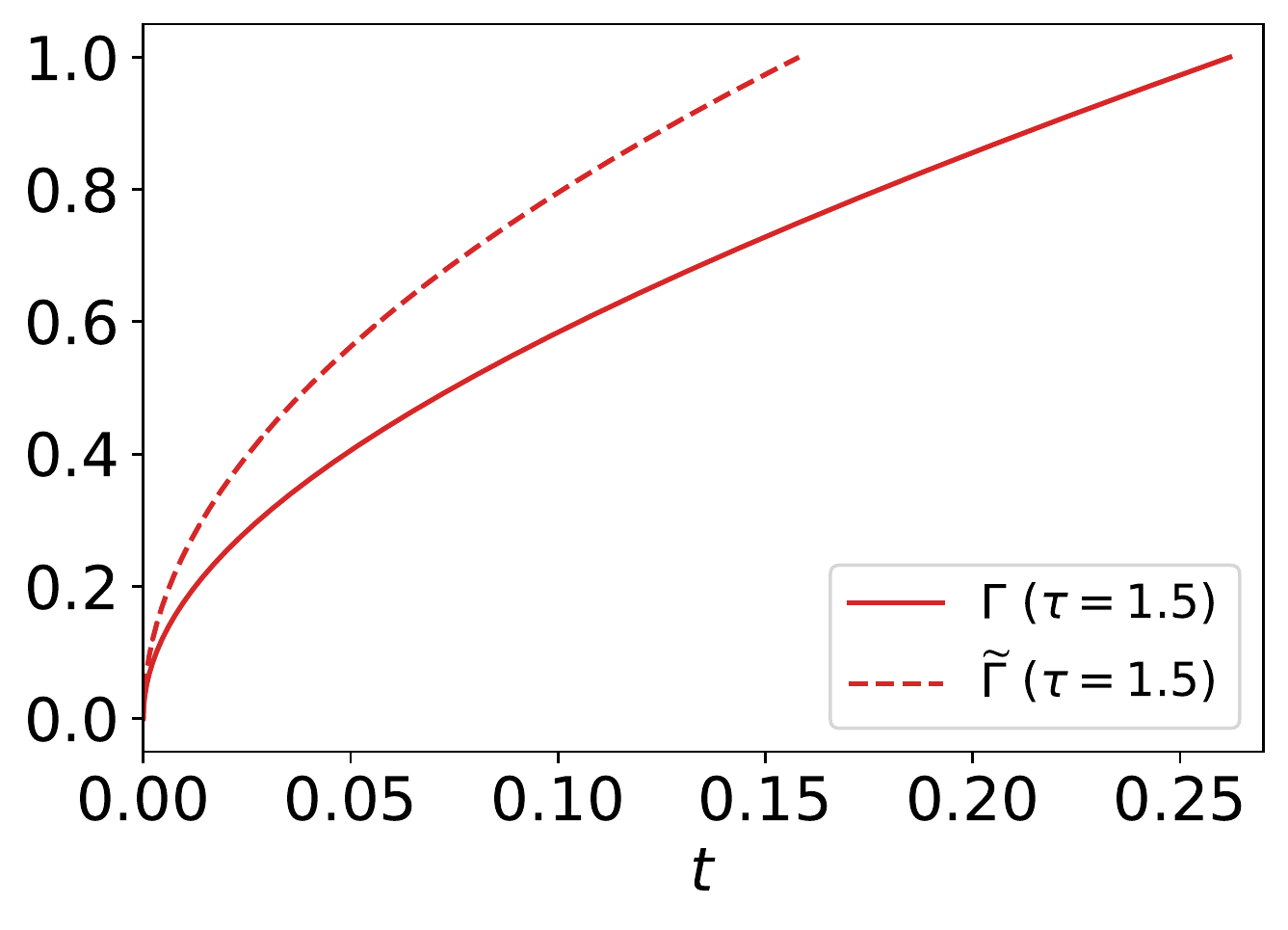}
\vskip -0.15in
\caption{Function $\Gamma_{\tau}$ and its upper bound $\wt\Gamma_{\tau}$ with different values of $\tau$ and $n = 10$.}
\label{fig:Gamma-wt}
\end{center}
\vskip -0.2in
\end{figure*}

By using Taylor expansion, $\sT_{\tau}(\beta)$ can be lower bounded by
its polynomial approximation with the tightest order as
\begin{equation}
\label{eq:wt-sT}
\sT_{\tau}(\beta)\geq \wt\sT_{\tau}(\beta)=
\begin{cases}
\frac{\beta^2}{2^{\tau}(2-\tau)} & \tau \in [0,1)\\
\frac{\beta^2}{2n^{\tau-1}} & \tau \in [1,2)\\
\frac{\beta}{(\tau-1)n^{\tau-1}}& \tau \in [2,\plus \infty).
\end{cases}
\end{equation}
Accordingly, $\Gamma_{\tau}(t)$ can be upper bounded by the inverse of
$\wt\sT_{\tau}$, which is denoted by $\wt \Gamma_{\tau}(t)=
\wt\sT_{\tau}^{-1}(t)$, as shown below
\begin{equation}
\label{eq:wt-Gamma}
\Gamma_{\tau}(t)\leq  \wt \Gamma_{\tau}(t)= \begin{cases}
\sqrt{2^{\tau}(2-\tau) t} & \tau\in [0,1)\\
\sqrt{2n^{\tau-1} t } & \tau\in [1,2) \\
(\tau - 1) n^{\tau - 1} t & \tau \in [2,\plus \infty).
\end{cases}
\end{equation}
A detailed derivation is given in
Appendix~\ref{app:Gamma-upper-bound}. The plots of function
$\Gamma_{\tau}$ and their corresponding upper bound $\wt\Gamma_{\tau}$
($n=10$) are shown in Figure~\ref{fig:Gamma-wt}, for different values
of $\tau$; they illustrate the quality of the approximations via
$\wt\Gamma_{\tau}$.

Recall that the minimizability gaps vanish when $\sH$ is the family of
all measurable functions or when $\sH$ contains the Bayes predictor.
In their absence, the theorem
shows that if the estimation loss
$(\sR_{\ell_{\tau}^{\rm{comp}}}(h) -
\sR_{\ell_{\tau}^{\rm{comp}}}^*(\sH))$ is reduced to $\e$, then, for
$\tau \in [0, 2)$, in particular for the logistic loss ($\tau = 1$)
  and the generalized cross-entropy loss ($\tau \in (1, 2)$), modulo a
  multiplicative constant, the zero-one estimation loss
  $(\sR_{\ell_{0-1}}(h) - \sR_{\ell_{0-1}}^*(\sH))$ is bounded by
  $\sqrt{\e}$. For the logistic loss, the following 
  guarantee holds for all $h \in \sH$:
\[
\sR_{\ell_{0-1}}(h) - \sR_{\ell_{0-1}}^*(\sH)
\leq 
  \sqrt{2 \paren[\big]{\sR_{\ell_{1}^{\rm{comp}}}(h) - \sR_{\ell_{1}^{\rm{comp}}}^*(\sH)}}.
\]
The bound is even more favorable for the mean absolute error loss
($\tau = 2$) or for comp-sum losses $\ell_{\tau}^{\rm{comp}}$ with
$\tau \in (2, +\infty)$ since in that case, modulo a multiplicative
constant, the zero-one estimation loss $(\sR_{\ell_{0-1}}(h) -
\sR_{\ell_{0-1}}^*(\sH))$ is bounded by $\e$. In general, the
minimizability gaps are not null however and, in addition to the
functional form of $\Gamma_\tau$, two other key features help compare
comp-sum losses: (i) the magnitude of the minimizability gap
$\sM_{\ell_{\tau}^{\rm{comp}}}(\sH)$; and (ii) the dependency of the
multiplicative constant on the number of classes, which makes it less
favorable for $\tau \in (1, +\infty)$. Thus, we will specifically
further analyze the minimizability gaps in the next section
(Section~\ref{sec:comparison}).

The proof of the theorem is given in
Appendix~\ref{app:bound_comp-sum}. It consists of using the general
$\sH$-consistency bound tools given by \citet{awasthi2022Hconsistency,
  AwasthiMaoMohriZhong2022multi} and of analyzing the calibration gap
of the loss function $\ell_{\tau}^{\rm{comp}}$ for different values of
$\tau$ in order to lower bound it in terms of the zero-one loss
calibration gap. As pointed out by
\citet{AwasthiMaoMohriZhong2022multi}, deriving such bounds is
non-trivial in the multi-class classification setting. In the proof,
we specifically choose auxiliary functions $\ov h_{\mu}$ target to the
comp-sum losses, which satisfies the property $\sum_{y\in \sY}e^{h(x,
  y)} = \sum_{y\in \sY}e^{\ov h_{\mu}(x, y)}$. Using this property, we
then establish several general lemmas that are applicable to any
$\tau\in [0,\infty)$ and are helpful to lower bound the calibration
  gap of $\ell_{\tau}^{\rm{comp}}$. This is significantly different
  from the proofs of \citet{AwasthiMaoMohriZhong2022multi}
  whose analysis depends on concrete loss functions case by
  case. Furthermore, our proof technique actually leads to the
  tightest bounds as shown below. Our proofs are novel and cover the
  full comp-sum loss family, which includes the logistic loss. Next,
  we further prove that the functional form of our bounds
  $\sH$-consistency bounds cannot be improved.

\begin{restatable}[\textbf{Tightness}]{theorem}{Tightness}
\label{Thm:tightness}
Assume that $\sH$ is symmetric and complete. Then, for any $\tau\in [0,1]$ and $\beta \in[0,1]$, there exist a distribution $\sD$ and a
hypothesis $h\in\sH$ such that $\sR_{\ell_{0-1}}(h)-
\sR_{\ell_{0-1},\sH}^*+\sM_{\ell_{0-1},\sH}= \beta$ and
$\sR_{\ell_{\tau}^{\rm{comp}}}(h) - \sR_{\ell_{\tau}^{\rm{comp}}}^*(\sH) + \sM_{\ell_{\tau}^{\rm{comp}}}(\sH)=
\sT_{\tau}(\beta)$.
\end{restatable}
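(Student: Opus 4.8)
The plan is to reduce both quantities in the statement to pointwise ones and then exhibit a single worst-case distribution. For any loss $\ell$, writing $\cC_\ell(h,x) = \E_y[\ell(h,x,y)\mid X=x]$ and $\cC^*_{\ell,\sH}(x) = \inf_{h'\in\sH}\cC_\ell(h',x)$, one has
\[
\sR_\ell(h) - \sR^*_\ell(\sH) + \sM_\ell(\sH) = \E_x\big[\cC_\ell(h,x) - \cC^*_{\ell,\sH}(x)\big],
\]
since $\sR^*_\ell(\sH) = \inf_{h'}\E_x[\cC_\ell(h',x)]$ and $\sM_\ell(\sH) = \sR^*_\ell(\sH) - \E_x[\cC^*_{\ell,\sH}(x)]$. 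Hence it is enough to build $\sD$ and $h\in\sH$ whose expected conditional $\ell_{0-1}$-calibration gap is $\beta$ and whose expected conditional $\ell_\tau^{\mathrm{comp}}$-calibration gap is $\sT_\tau(\beta)$; and if $\sD$ is concentrated on one point $x_0$, the minimizability gaps vanish (they are nonnegative and are a difference of two equal quantities here), so only the pointwise statement at $x_0$ remains.

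At $x_0$ I would put conditional mass $p_1 = \frac{1+\beta}{2}$ on class $1$, $p_2 = \frac{1-\beta}{2}$ on class $2$ (and, when $n>2$, mass $0$ on the rest), and take $h$ to assign equal scores to classes $1,2$ at $x_0$ and arbitrarily negative scores to the rest, so the tie-breaking rule forces $\hh(x_0)=2$. On the zero-one side, completeness gives $\cC^*_{\ell_{0-1},\sH}(x_0)=1-\max_y p_y=\frac{1-\beta}{2}$ while $\cC_{\ell_{0-1}}(h,x_0)=1-p_{\hh(x_0)}=\frac{1+\beta}{2}$, so the gap is exactly $\beta$. On the comp-sum side, the pointwise minimizer has softmax $q^*_y\propto p_y^{1/(2-\tau)}$ (the Bayes-type computation already used to prove Theorem~\ref{Thm:bound_comp_sum}), which gives, for $\tau\neq1$, $\cC^*_{\ell_\tau^{\mathrm{comp}},\sH}(x_0)=\frac{1}{1-\tau}\big[(p_1^{1/(2-\tau)}+p_2^{1/(2-\tau)})^{2-\tau}-1\big]$, while $\cC_{\ell_\tau^{\mathrm{comp}}}(h,x_0)=\Phi^{\tau}(1)=\frac{2^{1-\tau}-1}{1-\tau}$ because $h$ has equal top scores; substituting $p_1^{1/(2-\tau)}+p_2^{1/(2-\tau)}=2^{-1/(2-\tau)}\big((1+\beta)^{1/(2-\tau)}+(1-\beta)^{1/(2-\tau)}\big)$ and simplifying gives the difference $\sT_\tau(\beta)$ exactly. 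The case $\tau=1$ is identical with $\cC^*_{\ell_1^{\mathrm{comp}},\sH}(x_0)=-p_1\log p_1-p_2\log p_2$ and $\cC_{\ell_1^{\mathrm{comp}}}(h,x_0)=\log 2$, yielding $\sT_1(\beta)$.

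The step I expect to be the real obstacle is exactness when $n>2$: with genuine finite scores the hypothesis above only satisfies $\cC_{\ell_\tau^{\mathrm{comp}}}(h,x_0)\to\Phi^{\tau}(1)$ as the non-top scores diverge to $-\infty$, so the comp-sum identity holds only in a limit. To get true equality one must either allow scores in $[-\infty,+\infty)$ (the zero-probability classes then contribute $0$ and $\cC_{\ell_\tau^{\mathrm{comp}}}(h,x_0)=\Phi^{\tau}(1)$ on the nose), or use a distribution with positive mass on every class --- or a two-point distribution together with a restricted symmetric and complete hypothesis set so that the minimizability gap carries part of the weight --- with the extra parameters chosen so that the pointwise calibration inequality $\sT_\tau(\cC_{\ell_{0-1}}(h,x)-\cC^*_{\ell_{0-1},\sH}(x))\le \cC_{\ell_\tau^{\mathrm{comp}}}(h,x)-\cC^*_{\ell_\tau^{\mathrm{comp}},\sH}(x)$ underlying Theorem~\ref{Thm:bound_comp_sum} is attained with equality. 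Showing that such a choice exists --- by an intermediate-value argument exploiting the continuity and monotonicity of the conditional comp-sum risk in the scores --- is where the care is needed; the rest is the explicit algebra above.
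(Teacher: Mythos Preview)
Your construction is exactly the paper's: concentrate $\sD$ on a single point $x_0$ with $p(x_0,1)=\tfrac{1+\beta}{2}$, $p(x_0,2)=\tfrac{1-\beta}{2}$, $p(x_0,y)=0$ for $y\ge 3$, take $h$ with equal scores on classes $1,2$ (so the tie-break gives $\hh(x_0)=2$) and $e^{h(x_0,y)}=0$ for $y\ge 3$, and read off $\beta$ and $\sT_\tau(\beta)$ from \eqref{eq:comp-loss} and \eqref{eq:comp-sum-Cstar}. The algebra you sketch is precisely what the paper does.

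On the $n>2$ issue you worry about: the paper does not carry out any intermediate-value or limiting argument. It simply writes ``take $h_\tau\in\sH$ such that $e^{h_\tau(x,y)}=0$ for $3\le y\le n$'' and proceeds, i.e.\ it treats $-\infty$ scores as admissible under completeness (equivalently, works in the closure). So the obstacle you identify is genuine but is not one the paper resolves --- your proposal is already at least as careful as the published proof, and the extra machinery you outline would only make it more rigorous, not less.
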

The proof is given in Appendix~\ref{app:bound_comp-sum}.  The theorem
shows that the bounds given by the $\sH$-consistency comp-sum
transformation $\sT_{\tau}$, or, equivalently, by its inverse
$\Gamma_{\tau}$ in Theorem~\ref{Thm:bound_comp_sum} is tight for any
$\tau\in [0, 1]$, which includes as special cases the logistic loss
($\tau = 1$).

\subsection{Learning Bounds}

Our $\sH$-consistency bounds can be used to derive zero-one learning
bounds for a hypothesis set $\sH$. For a sample size $m$, let
$\Rad_m^\tau(\sH)$ denote the Rademacher complexity of the family of
functions $\curl*{(x, y) \mapsto \ell_{\tau}^{\rm{comp}}(h, x, y)
  \colon h \in \sH}$ and $B_\tau$
an upper bound on the loss $\ell_{\tau}^{\rm{comp}}$.
\ignore{
$\ell_{\tau}^{\rm{comp}}(h, x, y)$, for $h \in \sH$ and $(x, y) \times
  \sX \times \sY$.
}
\begin{restatable}{theorem}{GenBound}
\label{th:genbound}
With probability at least $1 - \delta$ over the draw of a sample $S$
from $\sD^m$, the following zero-one loss estimation bound holds
for an empirical minimizer $\h h_S \in \sH$ of the comp-sum loss
$\ell_{\tau}^{\rm{comp}}$ over $S$:
\ifdim\columnwidth= \textwidth
{
\begin{align*}
\sR_{\ell_{0-1}}(\h h_S) - \sR_{\ell_{0-1}}^*(\sH)
\leq \Gamma_{\tau}
  \paren[\bigg]{\sM_{\ell_{\tau}^{\rm{comp}}}(\sH) + 4 \Rad_m^\tau(\sH) +
2 B_\tau \sqrt{\tfrac{\log \frac{2}{\delta}}{2m}}}
- \sM_{\ell_{0-1}}(\sH).
\end{align*}
}
\else
{
\begin{align*}
& \sR_{\ell_{0-1}}(\h h_S) - \sR_{\ell_{0-1}}^*(\sH)\\
& \mspace{-6mu} \leq \Gamma_{\tau}
  \paren[\bigg]{\sM_{\ell_{\tau}^{\rm{comp}}}(\sH) + 4 \Rad_m^\tau(\sH) +
2 B_\tau \sqrt{\tfrac{\log \frac{2}{\delta}}{2m}}}
\!-\! \sM_{\ell_{0-1}}(\sH).
\end{align*}
}
\fi
\end{restatable}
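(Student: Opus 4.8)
The plan is to combine the $\sH$-consistency bound of Theorem~\ref{Thm:bound_comp_sum} with a standard uniform-convergence argument controlling the surrogate estimation error. Since $\Gamma_{\tau}$ is non-decreasing, applying Theorem~\ref{Thm:bound_comp_sum} to $h = \h h_S$ reduces the claim to showing that, with probability at least $1 - \delta$ over the draw of $S \sim \sD^m$,
\begin{equation*}
\sR_{\ell_{\tau}^{\rm{comp}}}(\h h_S) - \sR_{\ell_{\tau}^{\rm{comp}}}^*(\sH)
\leq 4\,\Rad_m^\tau(\sH) + 2 B_\tau \sqrt{\tfrac{\log \frac{2}{\delta}}{2m}}.
\end{equation*}
Indeed, substituting this estimate inside $\Gamma_{\tau}\paren*{\,\cdot\, + \sM_{\ell_{\tau}^{\rm{comp}}}(\sH)} - \sM_{\ell_{0-1}}(\sH)$ and using monotonicity of $\Gamma_{\tau}$ yields exactly the stated inequality.

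To establish the displayed surrogate bound, I would use the familiar error decomposition. Let $\h\sR_{\ell_{\tau}^{\rm{comp}}}$ denote the empirical comp-sum loss on $S$ and, for any $\e > 0$, fix $h^*_\e \in \sH$ with $\sR_{\ell_{\tau}^{\rm{comp}}}(h^*_\e) \leq \sR_{\ell_{\tau}^{\rm{comp}}}^*(\sH) + \e$ (an exact minimizer need not exist). Since $\h h_S$ minimizes the empirical comp-sum loss over $\sH$, we have $\h\sR_{\ell_{\tau}^{\rm{comp}}}(\h h_S) \leq \h\sR_{\ell_{\tau}^{\rm{comp}}}(h^*_\e)$, whence
\begin{align*}
\sR_{\ell_{\tau}^{\rm{comp}}}(\h h_S) - \sR_{\ell_{\tau}^{\rm{comp}}}^*(\sH)
&\leq \bracket*{\sR_{\ell_{\tau}^{\rm{comp}}}(\h h_S) - \h\sR_{\ell_{\tau}^{\rm{comp}}}(\h h_S)}
+ \bracket*{\h\sR_{\ell_{\tau}^{\rm{comp}}}(h^*_\e) - \sR_{\ell_{\tau}^{\rm{comp}}}(h^*_\e)} + \e \\
&\leq \sup_{h \in \sH}\bracket*{\sR_{\ell_{\tau}^{\rm{comp}}}(h) - \h\sR_{\ell_{\tau}^{\rm{comp}}}(h)}
+ \sup_{h \in \sH}\bracket*{\h\sR_{\ell_{\tau}^{\rm{comp}}}(h) - \sR_{\ell_{\tau}^{\rm{comp}}}(h)} + \e.
\end{align*}
Next I would bound each one-sided supremum by the standard Rademacher complexity estimate: since $\ell_{\tau}^{\rm{comp}}$ takes values in $[0, B_\tau]$, the bounded-differences (McDiarmid) inequality together with symmetrization gives, for each supremum separately, an upper bound of $2\,\Rad_m^\tau(\sH) + B_\tau \sqrt{\tfrac{\log(2/\delta)}{2m}}$ that holds with probability at least $1 - \delta/2$. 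A union bound over the two events (total failure probability $\delta$) and then letting $\e \to 0$ gives the displayed surrogate bound, which completes the proof.

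There is no substantial obstacle: the argument is a routine composition of the already-established $\sH$-consistency bound with a textbook uniform-convergence estimate. The only points needing mild care are handling the possibly non-attained infimum $\sR_{\ell_{\tau}^{\rm{comp}}}^*(\sH)$ through approximate minimizers $h^*_\e$, and the fact that a \emph{two-sided} deviation is required (one side to control $\sR - \h\sR$ at $\h h_S$, the other to control $\h\sR - \sR$ at $h^*_\e$) — this is exactly why the constants come out as $4\,\Rad_m^\tau(\sH)$ and $2 B_\tau \sqrt{\cdot}$ rather than half that, while splitting $\delta$ evenly across the two one-sided events produces the $\log(2/\delta)$ term.
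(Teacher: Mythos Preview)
Your proposal is correct and follows essentially the same route as the paper's proof: both reduce to bounding the surrogate estimation error via Theorem~\ref{Thm:bound_comp_sum} and the monotonicity of $\Gamma_\tau$, use the standard decomposition with an approximate minimizer $h^*_\e$, apply a two-sided uniform Rademacher bound (the paper cites it directly as $\abs{\sR - \h\sR} \leq 2\Rad_m^\tau(\sH) + B_\tau\sqrt{\log(2/\delta)/(2m)}$ with probability $1-\delta$, which is exactly your union of two one-sided bounds at level $\delta/2$), and let $\e \to 0$. The constants and the $\log(2/\delta)$ factor match for the same reason you identify.
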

The proof is given in Appendix~\ref{app:genbound}. To our knowledge,
these are the first zero-one estimation loss guarantees for empirical
minimizers of a comp-sum loss such as the logistic loss. Our previous
comments about the properties of $\Gamma_\tau$, in particular its
functional form or its dependency on the number of classes $n$,
similarly apply here. These are precise bounds that take into account
the minimizability gaps.

\section{Comparison of Minimizability Gaps}
\label{sec:comparison}

We now further analyze these quantities and make our guarantees even
more explicit.
Consider a composed loss function defined by $(\Phi_1 \circ \ell_2)(h,
x, y)$, for all $h \in \sH$ and $(x, y) \in \sX \times \sY$, with
$\Phi_1$ concave and non-decreasing.
Then, by Jensen's inequality, we can write:
\begin{equation}
\begin{aligned}
\label{eq:concave-Phi1}
  \sR^*_{\Phi_1 \circ \ell_2}(\sH)
  & = \inf_{h \in \sH} \curl*{\E_{(x, y) \sim \sD}[(\Phi_1 \circ \ell_2)(h, x, y)] }\\
  & \leq \inf_{h \in \sH} \curl*{\Phi_1\paren*{\E_{(x, y) \sim \sD}[\ell_2(h, x, y)]}}\\
  & = \Phi_1\paren*{\inf_{h \in \sH} \curl*{\E_{(x, y) \sim \sD}[\ell_2(h, x, y)]}}\\
  & = \Phi_1\paren*{\sR^*_{\ell_2}(\sH)}.
\end{aligned}
\end{equation}
Recall that the comp-sum losses $\ell_{\tau}^{\rm{comp}}$ can be
written as $\ell_{\tau}^{\rm{comp}}= \Phi^{\tau}\circ
\ell_{\tau=0}^{\rm{comp}}$, where $\ell_{\tau=0}^{\rm{comp}}(h, x, y)
= \sum_{y'\neq y}\exp\paren*{h(x, y') - h(x, y)}$. $\Phi^\tau$ is
concave since we have $\frac{\partial^2 \Phi^\tau}{\partial^2 u}(u) =
\frac{-\tau}{(1 + u)^{\tau + 1}} \leq 0$ for all $\tau \geq 0$ and $u
\geq 0$. Using these observations, the following results can be shown.
\begin{restatable}[\textbf{Characterization of minimizability gaps - stochastic case}]
  {theorem}{GapUpperBound}
\label{Thm:gap-upper-bound}
Assume that $\sH$ is symmetric and complete. Then, for the comp-sum losses $\ell_{\tau}^{\rm{comp}}$, the minimizability gaps can be upper bounded as follows:
\begin{align}
\label{eq:gap-upper-bound}
\sM_{\ell_{\tau}^{\rm{comp}}}(\sH)
\leq \Phi^{\tau}\paren*{\sR^*_{\ell_{\tau=0}^{\rm{comp}}}(\sH)} - \E_x[\sC^*_{\ell_{\tau}^{\rm{comp}}}(\sH, x)],
\end{align}
where $\sC^*_{\ell_{\tau}^{\rm{comp}}}(\sH, x)$ is given by 
\begin{equation}
\label{eq:comp-sum-Cstar}
\mspace{-5mu}
\begin{cases}
\frac{1}{1 - \tau} \paren*{\bracket*{\sum_{y\in \sY}p(x,y)^{\frac{1}{2-\tau}}}^{2 - \tau} - 1} & \tau\geq 0, \tau\neq1, \tau \neq 2\\
-\sum_{y\in \sY} p(x,y) \log\bracket*{p(x,y) } & \tau=1\\
1 - \max_{y\in \sY}p(x,y) & \tau =2.
\end{cases}
\mspace{-20mu}
\end{equation}
\ignore{is a decreasing function of $\tau \geq 0$ for any distribution.}
\end{restatable}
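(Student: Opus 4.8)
The plan is to expand the minimizability gap through its definition and bound the two resulting terms separately: the best-in-class term by a Jensen-type inequality, and the averaged pointwise infimum exactly, by solving a constrained scalar optimization at each point.

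By definition, $\sM_{\ell_\tau^{\rm{comp}}}(\sH) = \sR^*_{\ell_\tau^{\rm{comp}}}(\sH) - \E_x\bracket*{\inf_{h\in\sH}\E_y\bracket*{\ell_\tau^{\rm{comp}}(h,X,y)\mid X=x}}$. Since $\ell_\tau^{\rm{comp}} = \Phi^\tau\circ\ell_{\tau=0}^{\rm{comp}}$ with $\Phi^\tau$ concave and non-decreasing, the chain \eqref{eq:concave-Phi1} yields $\sR^*_{\ell_\tau^{\rm{comp}}}(\sH)\le\Phi^\tau\paren*{\sR^*_{\ell_{\tau=0}^{\rm{comp}}}(\sH)}$, so it suffices to show that the averaged pointwise infimum equals $\E_x[\sC^*_{\ell_\tau^{\rm{comp}}}(\sH,x)]$ with $\sC^*$ as in \eqref{eq:comp-sum-Cstar}; combining the two then gives the stated bound.

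To compute the pointwise infimum, fix $x$ and write $p_y := p(x,y)$. Because $\ell_\tau^{\rm{comp}}(h,x,\cdot)$ depends on $h$ only through the score vector $(h(x,1),\dots,h(x,n))$, and because $\sH$ being symmetric and complete forces this vector to range over all of $\Rset^n$, the infimum equals $\inf_{s\in\Rset^n}\sum_y p_y\,\Phi^\tau\paren*{\sum_{y'}e^{s_{y'}-s_y}-1}$. I would substitute the softmax $q_y := e^{s_y}/\sum_{y'}e^{s_{y'}}$, which is onto the open probability simplex and satisfies $\sum_{y'}e^{s_{y'}-s_y}-1 = (1-q_y)/q_y$, reducing the task to minimizing $\sum_y p_y\,\Phi^\tau\paren*{(1-q_y)/q_y}$ over that simplex. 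By \eqref{eq:Phi1}, for $\tau\neq1$ the objective becomes $\tfrac{1}{1-\tau}\paren*{\sum_y p_y q_y^{\tau-1}-1}$; a Lagrange-multiplier computation over $\set{\sum_y q_y=1}$ gives the stationary point $q_y^\star\propto p_y^{1/(2-\tau)}$, at which $\sum_y p_y q_y^{\tau-1}=\paren*{\sum_y p_y^{1/(2-\tau)}}^{2-\tau}$, hence the value $\tfrac{1}{1-\tau}\bracket[\big]{\paren*{\sum_y p_y^{1/(2-\tau)}}^{2-\tau}-1}$. For $\tau=1$ the objective is the cross-entropy $-\sum_y p_y\log q_y$, minimized at $q=p$ by Gibbs' inequality, with value $-\sum_y p_y\log p_y$; for $\tau=2$ it is $1-\sum_y p_y q_y$, minimized by concentrating $q$ on $\argmax_y p_y$, with value $1-\max_y p_y$. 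These are exactly the three branches of \eqref{eq:comp-sum-Cstar}, and the latter two also arise as the continuous limits $\tau\to1$ and $\tau\to2$ of the generic expression (by l'H\^opital's rule, and by $\lim_{x\to0^+}\paren*{a^{1/x}+b^{1/x}}^x=\max\curl*{a,b}$), which is a convenient consistency check.

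The crux is the pointwise optimization. One must (i) justify that the Lagrange stationary point is the relevant extremum, which hinges on the sign of $1-\tau$ together with the convexity versus concavity of $q\mapsto\Phi^\tau((1-q)/q)$ — whose second derivative one computes to have the sign of $2-\tau$ — so that the optimum is genuinely an interior minimizer in the appropriate range; (ii) treat the edge cases $\tau=1$ and $\tau=2$ directly and verify they agree with the continuous limits of the generic formula; and (iii) confirm that the softmax map is onto the open simplex and, when the optimizer sits on its boundary, pass to a limit. Assembling the per-$\tau$ cases and substituting into $\sM_{\ell_\tau^{\rm{comp}}}(\sH)=\sR^*_{\ell_\tau^{\rm{comp}}}(\sH)-\E_x[\sC^*_{\ell_\tau^{\rm{comp}}}(\sH,x)]$ together with the Jensen bound is then routine.
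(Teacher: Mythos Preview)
Your proposal is correct and follows essentially the same approach as the paper: bound $\sR^*_{\ell_\tau^{\rm{comp}}}(\sH)$ via the concavity inequality \eqref{eq:concave-Phi1}, then compute the pointwise infimum $\sC^*_{\ell_\tau^{\rm{comp}}}(\sH,x)$ by solving the first-order conditions. The only cosmetic difference is that you reparametrize by the softmax $q_y$ and optimize over the simplex, whereas the paper differentiates directly in the score coordinates $h(x,y)$ and works with the equivalent quantity $\sS(x,y)=1/q_y$; your convexity check via the sign of $2-\tau$ is a bit more explicit than the paper's blanket assertion that the conditional risk is convex in the scores.
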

Note that the expressions for $\sC^*_{\ell_{\tau}^{\rm{comp}}}(\sH, x)$ in
\eqref{eq:comp-sum-Cstar} can be formulated in terms of the $(2 -
\tau)$-R\'enyi entropy.

\begin{restatable}[\textbf{Characterization of minimizability gaps - deterministic case}]
  {theorem}{GapUpperBoundDetermi}
\label{Thm:gap-upper-bound-determi}
Assume that for any $x \in \sX$, we have $\curl*{\paren*{h(x, 1),
    \ldots, h(x, n)}\colon h \in \sH}$ = $[-\Lambda,
  +\Lambda]^n$. Then, for comp-sum losses $\ell_{\tau}^{\rm{comp}}$
and any deterministic distribution, the minimizability gaps can be
upper bounded as follows:
\begin{align}
\label{eq:gap-upper-bound-determi}
\sM_{\ell_{\tau}^{\rm{comp}}}(\sH)
\leq \Phi^{\tau}\paren*{\sR^*_{\ell_{\tau=0}^{\rm{comp}}}(\sH)} - \sC^*_{\ell_{\tau}^{\rm{comp}}}(\sH, x),
\end{align}
where $\sC^*_{\ell_{\tau}^{\rm{comp}}}(\sH, x)$ is given by 
\begin{equation}
\label{eq:comp-sum-Cstar-determi}
\begin{cases}
\frac{1}{1 - \tau} \paren*{\bracket*{1 + e^{-2 \Lambda}(n - 1)}^{1 - \tau} - 1} & \tau\geq 0, \tau\neq1\\
\log\bracket*{1 + e^{-2 \Lambda}(n - 1) } & \tau=1.
\end{cases}
\end{equation}
\end{restatable}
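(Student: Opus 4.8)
The plan is to combine the concavity inequality \eqref{eq:concave-Phi1} with an explicit evaluation of the pointwise conditional infimum appearing in the definition of the minimizability gap, exploiting both that the distribution is deterministic and that the score vectors sweep out the whole cube $[-\Lambda,\Lambda]^n$.

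First I would write $\sM_{\ell_{\tau}^{\rm{comp}}}(\sH) = \sR^*_{\ell_{\tau}^{\rm{comp}}}(\sH) - \E_x\bracket[\big]{\inf_{h\in\sH}\E_y\bracket*{\ell_{\tau}^{\rm{comp}}(h,X,y)\mid X=x}}$ and bound the two pieces separately. Since $\ell_{\tau}^{\rm{comp}} = \Phi^{\tau}\circ \ell_{\tau=0}^{\rm{comp}}$ with $\Phi^{\tau}$ concave and non-decreasing, inequality \eqref{eq:concave-Phi1} gives $\sR^*_{\ell_{\tau}^{\rm{comp}}}(\sH)\leq \Phi^{\tau}\paren*{\sR^*_{\ell_{\tau=0}^{\rm{comp}}}(\sH)}$, which is exactly the first term on the right-hand side of the claimed bound. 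It then remains to show that the pointwise conditional infimum equals $\sC^*_{\ell_{\tau}^{\rm{comp}}}(\sH,x)$ as given in \eqref{eq:comp-sum-Cstar-determi}.

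For this, I would use that a deterministic distribution places, for each $x$, all of its conditional mass on a single label $y_x\in\sY$, so that $\E_y\bracket*{\ell_{\tau}^{\rm{comp}}(h,X,y)\mid X=x} = \ell_{\tau}^{\rm{comp}}(h,x,y_x) = \Phi^{\tau}\paren*{\sum_{y'\neq y_x}e^{h(x,y')-h(x,y_x)}}$. Because $\Phi^{\tau}$ is non-decreasing and the infimum over the closed cube is attained, $\inf_{h\in\sH}\ell_{\tau}^{\rm{comp}}(h,x,y_x) = \Phi^{\tau}\paren*{\inf_{h\in\sH}\sum_{y'\neq y_x}e^{h(x,y')-h(x,y_x)}}$. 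By hypothesis the vector $(h(x,1),\dots,h(x,n))$ ranges over all of $[-\Lambda,\Lambda]^n$ as $h$ varies in $\sH$; each summand $e^{h(x,y')-h(x,y_x)}$ is increasing in $h(x,y')$ and decreasing in the shared variable $h(x,y_x)$, so the sum is jointly minimized at the vertex with $h(x,y_x)=\Lambda$ and $h(x,y')=-\Lambda$ for all $y'\neq y_x$, which gives the value $(n-1)e^{-2\Lambda}$. Hence the pointwise infimum equals $\Phi^{\tau}\paren*{(n-1)e^{-2\Lambda}}$, and substituting the definition \eqref{eq:Phi1} of $\Phi^{\tau}$ reproduces exactly the two cases of $\sC^*_{\ell_{\tau}^{\rm{comp}}}(\sH,x)$ in \eqref{eq:comp-sum-Cstar-determi}; since this value does not depend on $x$, the outer $\E_x$ leaves it unchanged. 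Combining this with the first-term bound completes the proof.

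I do not expect a serious obstacle here: the only step needing care is the vertex computation, where one must check that the cube constraint is actually attained and that coordinatewise monotonicity forces the minimizer of the sum-exponential term to the stated vertex. (If, in addition, the scores could be chosen independently across inputs, the same argument would make $\sR^*_{\ell_{\tau=0}^{\rm{comp}}}(\sH)$ itself equal to $(n-1)e^{-2\Lambda}$, but this stronger fact is not needed for the bound as stated.)
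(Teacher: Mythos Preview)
Your proposal is correct and follows essentially the same approach as the paper: apply the concavity bound \eqref{eq:concave-Phi1} for the first term, use determinism to reduce the conditional risk to $\Phi^{\tau}\bigl(\sum_{y'\neq y_x}e^{h(x,y')-h(x,y_x)}\bigr)$, minimize the inner sum over the cube at the vertex $h(x,y_x)=\Lambda$, $h(x,y')=-\Lambda$, and observe that the result is independent of $x$. The only cosmetic difference is that you phrase the minimization as ``$\Phi^{\tau}$ non-decreasing, so the infimum passes inside,'' whereas the paper directly exhibits the minimizer $h^*$; the content is identical.
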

The proofs of these theorems are given in
Appendix~\ref{app:gap-upper-bound}. Note that, when $\tau = 0$,
$\Phi^{\tau}(u)=u$ gives the sum exponential loss $\Phi^{\tau}\circ
\ell_{\tau=0}^{\rm{comp}} = \ell_{\tau=0}^{\rm{comp}}$. For
deterministic distributions, by \eqref{eq:comp-sum-Cstar-determi}, we
obtain $\sC^*_{\ell_{\tau=0}^{\rm{comp}}}(\sH, x) = e^{-2 \Lambda}(n -
1)$.  Therefore, \eqref{eq:comp-sum-Cstar-determi} can be rewritten as
$\sC^*_{\ell_{\tau}^{\rm{comp}}}(\sH, x) =
\Phi^{\tau}\paren*{\sC^*_{\ell_{\tau=0}^{\rm{comp}}}(\sH, x)}$.  Thus,
inequality~\eqref{eq:gap-upper-bound-determi} can be rewritten as
follows:
\begin{align}
\label{eq:gap-upper-bound-determi-final}
\mspace{-10mu}
\sM_{\ell_{\tau}^{\rm{comp}}}(\sH)
&\leq \Phi^{\tau}\paren*{\sR^*_{\ell_{\tau=0}^{\rm{comp}}}(\sH)} - \Phi^{\tau}\paren*{\sC^*_{\ell_{\tau=0}^{\rm{comp}}}(\sH, x)}.
\end{align}
We will denote the right-hand side by $\wt \sM_{\ell_{\tau}^{\rm{comp}}}(\sH)$, 
$\wt \sM_{\ell_{\tau}^{\rm{comp}}}(\sH) = \Phi^{\tau}\paren*{\sR^*_{\ell_{\tau=0}^{\rm{comp}}}(\sH)} - \Phi^{\tau}\paren*{\sC^*_{\ell_{\tau=0}^{\rm{comp}}}(\sH, x)}$.
Note that we always have $\sR^*_{\ell_{\tau=0}^{\rm{comp}}}(\sH)\geq
\E_x\bracket*{\sC^*_{\ell_{\tau=0}^{\rm{comp}}}(\sH, x)}$. Here,
$\E_x\bracket[big]{\sC^*_{\ell_{\tau=0}^{\rm{comp}}}(\sH,
  x)}= \sC^*_{\ell_{\tau=0}^{\rm{comp}}}(\sH, x)$ since
$\sC^*_{\ell_{\tau=0}^{\rm{comp}}}(\sH, x)$ is independent of $x$ as shown in \eqref{eq:comp-sum-Cstar-determi}. Then,
\eqref{eq:gap-upper-bound-determi-final} can be used to compare the
minimizability gaps for different $\tau$. 

\begin{restatable}{lemma}{LemmaCompare}
\label{lemma:lemma-compare}
For any $u_1 \geq u_2\geq 0$, $\Phi^{\tau}(u_1)-\Phi^{\tau}(u_2)$ is non-increasing with respect to $\tau$.
\end{restatable}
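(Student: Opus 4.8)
The plan is to bypass the piecewise definition \eqref{eq:Phi1} entirely and work with the integral representation of $\Phi^{\tau}$ supplied by the derivative identity \eqref{eq:Phi1-derivative}. Since $\Phi^{\tau}(0) = 0$ and $\frac{\partial \Phi^{\tau}}{\partial u}(u) = (1+u)^{-\tau}$, the fundamental theorem of calculus gives, for every $\tau \geq 0$ and any $u_1 \geq u_2 \geq 0$,
\[
\Phi^{\tau}(u_1) - \Phi^{\tau}(u_2) = \int_{u_2}^{u_1} (1+u)^{-\tau}\,du.
\]
This identity holds uniformly in $\tau$, including at $\tau = 1$ (where the integrand is $(1+u)^{-1}$ and the right-hand side reproduces $\log(1+u_1) - \log(1+u_2)$), so no separate continuity-at-$\tau=1$ argument is needed.

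Next I would exploit pointwise monotonicity of the integrand in $\tau$. For fixed $u \geq u_2 \geq 0$ we have $1 + u \geq 1$, hence $\log(1+u) \geq 0$, and therefore $\tau \mapsto (1+u)^{-\tau} = e^{-\tau \log(1+u)}$ is non-increasing. Consequently, for $\tau \leq \tau'$, the inequality $(1+u)^{-\tau} \geq (1+u)^{-\tau'}$ holds for all $u \in [u_2, u_1]$; integrating it over $[u_2, u_1]$ yields $\Phi^{\tau}(u_1) - \Phi^{\tau}(u_2) \geq \Phi^{\tau'}(u_1) - \Phi^{\tau'}(u_2)$, which is precisely the assertion. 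Equivalently, one may differentiate under the integral sign and note
\[
\frac{d}{d\tau}\bracket*{\Phi^{\tau}(u_1) - \Phi^{\tau}(u_2)} = -\int_{u_2}^{u_1} \log(1+u)\,(1+u)^{-\tau}\,du \leq 0,
\]
since the integrand is non-negative on $[u_2, u_1]$.

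I do not anticipate any real obstacle: the only minor points of care are to invoke the integral form uniformly in $\tau$ rather than manipulating the case split in \eqref{eq:Phi1} and \eqref{eq:comp-loss}, and to observe that the degenerate case $u_1 = u_2$ gives the constant $0$, which is trivially non-increasing. Both the monotone-integrand argument and the differentiation argument are elementary once the representation above is in place.
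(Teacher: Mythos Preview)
Your argument is correct and is genuinely cleaner than the paper's. The paper differentiates the closed-form expression \eqref{eq:Phi1} with respect to $\tau$ for $\tau\neq 1$, rewrites the derivative as $\frac{g(u_1,\tau)-g(u_2,\tau)}{(1-\tau)^2}$ with $g(t,\tau)=(1+t)^{1-\tau}-(1-\tau)(1+t)^{1-\tau}\log(1+t)$, shows $\partial g/\partial t\leq 0$, and then patches in the case $\tau=1$ by a separate continuity argument. Your route instead passes through the integral representation $\Phi^{\tau}(u_1)-\Phi^{\tau}(u_2)=\int_{u_2}^{u_1}(1+u)^{-\tau}\,du$, which is valid uniformly in $\tau$ (including $\tau=1$) thanks to \eqref{eq:Phi1-derivative}, and reduces the claim to pointwise monotonicity of the integrand. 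This avoids both the algebra with $g$ and the case split at $\tau=1$; it is strictly more elementary and arguably the ``right'' proof, at the modest cost of not producing an explicit formula for the $\tau$-derivative.
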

The proof is given in
Appendix~\ref{app:lemma-compare}.
Lemma~\ref{lemma:lemma-compare} implies that $\wt \sM_{\ell_{\tau}^{\rm{comp}}}(\sH)$ is a non-increasing function of $\tau$. Thus, given a hypothesis set $\sH$, we have:
\begin{equation}
\label{eq:order-M}
\mspace{-3mu} \wt \sM_{\ell_{\tau=0}}(\sH) \geq \wt \sM_{\ell_{\tau=1}}(\sH) \geq  \wt\sM_{\ell_{1<\tau<2}}(\sH)\geq \wt \sM_{\ell_{\tau=2}}(\sH). \mspace{-3mu}
\end{equation}
By Section~\ref{sec:preliminaries}, these minimizability gaps
specifically correspond to that of sum-exponential loss ($\tau=0$),
logistic loss ($\tau=1$), generalized cross-entropy loss ($1<\tau<2$)
and mean absolute error loss ($\tau=2$) respectively. Note that for
those loss functions, by Theorem~\ref{Thm:bound_comp_sum}, when the
estimation error $\sR_{\ell_{\tau}^{\rm{comp}}}(h) -
\sR_{\ell_{\tau}^{\rm{comp}}}^*(\sH)$ is minimized to zero, the
estimation error of zero-one classification loss is upper bounded by
$\wt \Gamma_{\tau}\paren*{\sM_{\ell_{\tau}}}$. Therefore,
\eqref{eq:order-M} combined with the form of $\wt\Gamma_{\tau}$ helps
compare the sum-exponential loss ($\tau=0$), logistic loss ($\tau=0$),
generalized cross-entropy loss ($1<\tau<2$) and mean absolute error
loss ($\tau=2$) in practice. See Section~\ref{sec:experiments-non-adv}
for a discussion of the empirical results in light of these
theoretical findings.

\section{Smooth Adversarial Comp-Sum Losses}
\label{sec:adversarial}

A recent challenge in the application of neural networks is their
robustness to imperceptible perturbations
\citep{szegedy2013intriguing}. While neural networks trained on large
datasets have achieved breakthroughs in speech and visual recognition
tasks in recent years
\citep{SutskeverVinyalsLe2014,KrizhevskySutskeverHinton2012}, their
accuracy remains substantially lower in the presence of such
perturbations even for state-of-the-art robust algorithms. One key
factor in the design of robust algorithms is the choice of the
surrogate loss function used for training since directly optimizing
the target adversarial zero-one loss with most hypothesis sets is
NP-hard. To tackle this problem, we introduce a family of loss
functions designed for adversarial robustness that we call
\emph{smooth adversarial comp-sum loss functions}. These are loss
functions obtained by augmenting comp-sum losses with a natural
corresponding smooth term.  We show that these loss functions are
beneficial in the adversarial setting by proving that they admit
$\sH$-consistency bounds. This leads to a family of algorithms for
adversarial robustness that consist of minimizing a regularized smooth
adversarial comp-sum loss.

\subsection{Definition}

In adversarial robustness, the target adversarial zero-one
classification loss is defined as the worst loss incurred over an
$\ell_p$ perturbation ball of $x$ with perturbation size $\gamma$, $p
\in [1, +\infty]$, $\sfB_p(x, \gamma) = \curl*{x'\colon \norm*{x -
    x'}_p \leq \gamma}$:
\[
\ell_{\gamma}(h, x, y) = \sup_{x' \in \sfB_p(x, \gamma)} \ell_{0-1}(h, x', y).
\]
We first introduce the adversarial comp-sum
$\rho$-margin losses, which is defined as the supremum based counterpart of comp-sum losses \eqref{eq:comp-sum_loss} with $\Phi_1 = \Phi^{\tau}$ and $\Phi_2(u) =
\Phi_{\mathrm{\rho}}(u) = \min\curl*{\max\curl*{0, 1 -
    \frac{u}{\rho}}, 1}$, the $\rho$-margin loss function (see for
example \citep{MohriRostamizadehTalwalkar2018}):
\begin{align*}
\wt
\ell^{\mathrm{comp}}_{\tau,\rho}(h, x, y)
\mspace{-5mu}
=
\mspace{-10mu}
\sup_{x':\norm*{x - x'}_p\leq \gamma}
\mspace{-25mu}
\Phi^{\tau}
\paren*{\sum_{y' \neq y}\Phi_{\rho}\paren*{h(x', y') - h(x', y)}}.
\end{align*}
In the next section, we will show that $\wt
\ell^{\mathrm{comp}}_{\tau,\rho}$ admits an $\sH$-consistency bound
with respect to the adversarial zero-one loss $\ell_{\gamma}$. Since
$\Phi_{\rho}$ is not-convex, we will further derive the \emph{smooth
adversarial comp-sum loss} based on $\wt
\ell^{\mathrm{comp}}_{\tau,\rho}$, that has similar $\sH$-consistency
guarantees and is better to optimize.  By the expression of the
derivative of $\Phi^\tau$ in \eqref{eq:Phi1-derivative}, for all $\tau
\geq 0$ and $u \geq 0$, we have $\abs*{\frac{\partial
    \Phi^{\tau}}{\partial u}(u)} = \frac{1}{(1 + u)^{\tau}} \leq 1$,
thus $\Phi^{\tau}$ is $1$-Lipschitz over $\Rset_+$. Define
$\Delta_h(x, y, y') = h(x, y) - h(x, y')$ and let $\ov \Delta_h(x, y)$
denote the $(n - 1)$-dimensional vector $\paren[big]{\Delta_h(x, y,
  1), \ldots, \Delta_h(x, y, y - 1), \Delta_h(x, y, y + 1), \ldots,
  \Delta_h(x, y, n)}$.  For any $\tau \geq 0$, since $\Phi^{\tau}$ is
$1$-Lipschitz and non-decreasing, we have:
\begin{align*}
& \wt \ell^{\mathrm{comp}}_{\tau,\rho}(h, x, y) 
- \ell^{\mathrm{comp}}_{\tau,\rho}(h, x, y)\\
& \sup_{x' \in \sfB(x, \gamma)}\sum_{y'\neq y}\Phi_{\rho}\paren*{-\Delta_h(x',y,y')}
- \ignore{\sum_{y'\neq y}} \Phi_{\rho}\paren*{-\Delta_h(x, y, y')}.
\end{align*}
Since $\Phi_{\rho}(u)$ is $\frac{1}{\rho}$-Lipschitz, 
by the Cauchy-Schwarz inequality, for any
$\nu \geq \frac{\sqrt{n - 1}}{\rho} \geq \frac{1}{\rho}$, we have
\begin{align*}
& \wt \ell^{\mathrm{comp}}_{\tau,\rho}(h, x, y) \\
%  & \leq \ell^{\mathrm{comp}}_{\tau,\rho}(h, x, y) + \frac{1}{\rho}\sup_{x' \in \sfB(x, \gamma)}\norm*{\ov \Delta_h(x',y)-\ov \Delta_h(x,y)}_1\\
  & \leq  \ell^{\mathrm{comp}}_{\tau,\rho}(h, x, y) + \nu \sup_{x' \in \sfB(x, \gamma)} \norm*{\ov \Delta_h(x',y)-\ov \Delta_h(x,y)}_2\\
  & \leq  \ell^{\mathrm{comp}}_{\tau}\paren*{\frac{h}{\rho},x,y} + \nu \sup_{x' \in \sfB(x, \gamma)} \norm*{\ov \Delta_h(x',y)-\ov \Delta_h(x,y)}_2,
\end{align*}
where we used the inequality $\exp\paren*{-u/\rho}\geq
\Phi_{\rho}(u)$.  We will refer to a loss function defined by the last
expression as a \emph{smooth adversarial comp-sum loss} and denote it
by $\ell^{\mathrm{comp}}_{\mathrm{smooth}}$. In the next section, we
will provide strong $\sH$-consistency guarantees for
$\ell^{\mathrm{comp}}_{\mathrm{smooth}}$.

\subsection{Adversarial $\sH$-Consistency Guarantees}

To derive guarantees for our smooth adversarial comp-sum loss, we
first prove an adversarial $\sH$-consistency bound for adversarial
comp-sum $\rho$-margin losses $\wt \ell^{\mathrm{comp}}_{\tau,\rho}$
for any symmetric and \emph{locally $\rho$-consistent} hypothesis set.
\begin{definition}
We say that a hypothesis set $\sH$ is \emph{locally $\rho$-consistent}
if for any $x\in \sX$, there exists a hypothesis $h \in \sH$ such that
$\inf_{x'\colon \norm*{x - x'}\leq \gamma}\abs*{h(x', i) - h(x',
  j)}\geq \rho>0$ for any $i\neq j \in \sY$ and for any $x'\in
\curl*{x'\colon \norm*{x - x'}\leq \gamma}$, $\curl*{h(x',y):y\in
  \sY}$ has the same ordering.
\end{definition}
Common hypothesis sets used in practice, such as the family of linear
models, that of neural networks and of course that of all measurable
functions are all locally $\rho$-consistent for some $\rho > 0$. The
guarantees given in the following result are thus general and widely
applicable.
\begin{restatable}[\textbf{$\sH$-consistency bound of $\wt \ell^{\mathrm{comp}}_{\tau,\rho}$}]
{theorem}{BoundCompRhoAdv}
\label{Thm:bound_comp_rho_adv}
Assume that $\sH$ is symmetric and locally $\rho$-consistent. Then,
for any choice of the hyperparameters $\tau, \rho > 0$, any hypothesis
$h \in \sH$, the following inequality holds:
\ifdim\columnwidth= \textwidth
\begin{equation*}
\label{eq:bound_comp_rho_adv}
      \sR_{\ell_{\gamma}}(h)- \sR^*_{\ell_{\gamma}}(\sH)
      \leq \Phi^{\tau}(1)\\
      \paren*{\sR_{\wt \ell^{\mathrm{comp}}_{\tau,\rho}}(h)-\sR^*_{\wt \ell^{\mathrm{comp}}_{\tau,\rho}}(\sH) + \sM_{\wt \ell^{\mathrm{comp}}_{\tau,\rho}}(\sH)} - \sM_{\ell_{\gamma}}(\sH).
\end{equation*}
\else
\begin{align}
\label{eq:bound_comp_rho_adv}
& \sR_{\ell_{\gamma}}(h)- \sR^*_{\ell_{\gamma}}(\sH) \leq\\
& \Phi^{\tau}(1) \bracket*{\sR_{\wt \ell^{\mathrm{comp}}_{\tau,\rho}}(h) -
  \sR^*_{\wt \ell^{\mathrm{comp}}_{\tau,\rho}}(\sH) + \sM_{\wt \ell^{\mathrm{comp}}_{\tau,\rho}}(\sH)}
\!-\!\! \sM_{\ell_{\gamma}}(\sH).\nonumber
\end{align}
\fi
\end{restatable}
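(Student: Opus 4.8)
The plan is to follow the general $\sH$-consistency recipe of \citet{awasthi2022Hconsistency,AwasthiMaoMohriZhong2022multi}: it suffices to establish, for every $h\in\sH$ and $x\in\sX$, the pointwise conditional-regret inequality
$\Phi^{\tau}(1)\,\Delta\sC_{\ell_{\gamma}}(h,x)\le\Delta\sC_{\wt\ell^{\mathrm{comp}}_{\tau,\rho}}(h,x)$,
where for a loss $\ell$ we write $\sC_{\ell}(h,x)=\E_y\bracket*{\ell(h,x,y)\mid X=x}$, $\sC^*_{\ell}(\sH,x)=\inf_{h'\in\sH}\sC_{\ell}(h',x)$ and $\Delta\sC_{\ell}(h,x)=\sC_{\ell}(h,x)-\sC^*_{\ell}(\sH,x)$. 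Indeed, since the linear calibration map $\sT_{\tau}(\beta)=\Phi^{\tau}(1)\beta$ and its inverse $\Gamma_{\tau}=\sT_{\tau}^{-1}$ (the function in the statement) are convex/concave and monotone, integrating this pointwise bound against the marginal of $x$ and invoking Jensen's inequality, together with the identity $\sR_{\ell}(h)-\sR^*_{\ell}(\sH)+\sM_{\ell}(\sH)=\E_x\bracket*{\Delta\sC_{\ell}(h,x)}$ (valid for both $\ell_{\gamma}$ and $\wt\ell^{\mathrm{comp}}_{\tau,\rho}$), yields the claimed bound.

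I would first pin down the two conditional minima at a fixed $x$, writing $p(x,\cdot)$ for the conditional distribution and $p_{(1)}\ge\dots\ge p_{(n)}$ for its decreasing rearrangement. Local $\rho$-consistency and symmetry furnish a hypothesis in $\sH$ realizing any prescribed ordering of the $n$ classes with all pairwise margins at least $\rho$ over the whole ball $\sfB_p(x,\gamma)$; taking the order sorted by $p(x,\cdot)$ gives $\sC^*_{\ell_{\gamma}}(\sH,x)=1-p_{(1)}$ and $\sC^*_{\wt\ell^{\mathrm{comp}}_{\tau,\rho}}(\sH,x)=\sum_{k\ge 2}p_{(k)}\Phi^{\tau}(k-1)$. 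The matching lower bounds use that (i) at most one class can be the tie-broken maximizer of $h(x',\cdot)$ for every $x'\in\sfB_p(x,\gamma)$ simultaneously, and (ii) since $\Phi_{\rho}(u)\ge\mathds{1}_{u\le 0}$ and $\Phi^{\tau}$ is continuous and non-decreasing,
\[
\wt\ell^{\mathrm{comp}}_{\tau,\rho}(h,x,y)=\Phi^{\tau}\paren[\Big]{\sup_{x'\in\sfB_p(x,\gamma)}\sum_{y'\neq y}\Phi_{\rho}\paren*{h(x',y)-h(x',y')}}\ge\Phi^{\tau}\paren*{M_h(x,y)},
\]
where $m_h(x',y)=\abs*{\{y'\neq y\colon h(x',y')\ge h(x',y)\}}$ and $M_h(x,y)=\sup_{x'\in\sfB_p(x,\gamma)}m_h(x',y)$, together with a rearrangement argument applied at a single perturbation point (note $\{m_h(x,y)\}_{y}$ sorted increasingly is entrywise at least $(0,1,\dots,n-1)$).

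The crux is the pointwise regret inequality, which I would prove by case analysis on $\sS(h,x)=\{y\colon \hh(x')=y \text{ for all } x'\in\sfB_p(x,\gamma)\}$, so that $\sC_{\ell_{\gamma}}(h,x)=1-\sum_{y\in\sS(h,x)}p(x,y)$ with $\abs*{\sS(h,x)}\le 1$. If $\sS(h,x)$ is a most-likely class, then $\Delta\sC_{\ell_{\gamma}}(h,x)=0$ and the inequality is trivial. If $\sS(h,x)=\{y_{(j)}\}$ with $j\ge 2$, then $M_h(x,y_{(j)})=0$ and, because every other class lies below $y_{(j)}$ at every point of the ball, the increasingly-sorted values of $\{M_h(x,y)\}_{y\neq y_{(j)}}$ are entrywise at least $(1,2,\dots,n-1)$; combining the display above with a rearrangement step, the telescoping identity $\Phi^{\tau}(j-1)-\Phi^{\tau}(1)=\sum_{i=2}^{j-1}\paren*{\Phi^{\tau}(i)-\Phi^{\tau}(i-1)}$, and $p_{(i)}\ge p_{(j)}$ for $i<j$, yields $\Delta\sC_{\wt\ell^{\mathrm{comp}}_{\tau,\rho}}(h,x)\ge\Phi^{\tau}(1)\paren*{p_{(1)}-p_{(j)}}=\Phi^{\tau}(1)\,\Delta\sC_{\ell_{\gamma}}(h,x)$. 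If $\sS(h,x)=\emptyset$, then $\Delta\sC_{\ell_{\gamma}}(h,x)=p_{(1)}$, each class has $M_h(x,y)\ge 1$, and since at a fixed $x'$ at most $k$ classes $y$ can satisfy $m_h(x',y)\le k-1$, the increasingly-sorted values of $\{M_h(x,y)\}_{y\in\sY}$ are entrywise at least $(1,1,2,3,\dots,n-1)$; a rearrangement step then gives $\sC_{\wt\ell^{\mathrm{comp}}_{\tau,\rho}}(h,x)\ge\sum_k p_{(k)}\Phi^{\tau}\paren*{\max\{k-1,1\}}=\Phi^{\tau}(1)\,p_{(1)}+\sum_{k\ge 2}p_{(k)}\Phi^{\tau}(k-1)$, i.e.\ $\Delta\sC_{\wt\ell^{\mathrm{comp}}_{\tau,\rho}}(h,x)\ge\Phi^{\tau}(1)\,\Delta\sC_{\ell_{\gamma}}(h,x)$.

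The main obstacle is this combinatorial core. Taking the supremum over the perturbation ball turns $M_h(x,\cdot)$ into a pointwise maximum rather than a genuine ranking, so several classes may simultaneously have small $M_h$; the work is to quantify how spread out the multiset $\{M_h(x,y)\}_{y}$ is forced to be once no class is robustly on top, and then to match this spread, via a rearrangement inequality and the monotonicity of $\Phi^{\tau}$, against the sorted conditional probabilities defining $\sC^*_{\wt\ell^{\mathrm{comp}}_{\tau,\rho}}(\sH,x)$. Secondary technical points are the handling of ties and of the tie-breaking convention in $\hh$ (ties only increase the surrogate side, so the bounds are unaffected), the exchange of $\sup_{x'}$ with the continuous non-decreasing $\Phi^{\tau}$, and the use of local $\rho$-consistency and symmetry to produce the separating hypotheses that realize the two conditional minima.
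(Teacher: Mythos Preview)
Your proposal is correct and follows essentially the same route as the paper's proof: both reduce to the pointwise conditional-regret inequality $\Delta\sC_{\wt\ell^{\mathrm{comp}}_{\tau,\rho}}(h,x)\ge\Phi^{\tau}(1)\,\Delta\sC_{\ell_{\gamma}}(h,x)$, compute the two conditional minima via the locally $\rho$-consistent hypothesis ordered by $p(x,\cdot)$, lower-bound the surrogate by $\Phi^{\tau}$ of a rank count (your $M_h$), and conclude via a rearrangement inequality. The only difference is organizational. The paper works with the indicator $\mathds{1}_{h\in\ov\sH_{\gamma}(x)}$ (equivalently your $\sS(h,x)\neq\emptyset$) and handles all cases in one shot by pairing the vector $(\Phi^{\tau}(1),\Phi^{\tau}(1),\Phi^{\tau}(2),\dots,\Phi^{\tau}(c-1))$ against $(p(x,\hh(x)),p(x,\{c-1\}^h_x),\dots)$ versus its sorted version in a single rearrangement step; your explicit three-case split reaches the same inequality but requires the extra telescoping/ordering argument in the middle case that the paper's vector trick sidesteps.
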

The proof is given in
Appendix~\ref{app:deferred_proofs_adv_comp}. Using the inequality
$\ell^{\mathrm{comp}}_{\mathrm{smooth}}\geq \wt
\ell^{\mathrm{comp}}_{\tau,\rho}$ yields the following similar
guarantees for smooth adversarial comp-sum loss under the same
condition of hypothesis sets.
\begin{restatable}[\textbf{Guarantees for smooth adversarial comp-sum losses}]
  {corollary}{SmoothCompRhoAdv}
\label{cor:smooth_comp_rho_adv}
Assume that $\sH$ is symmetric and locally $\rho$-consistent. Then,
for any choice of the hyperparameters $\tau, \rho >0$, any hypothesis
$h \in \sH$, the following inequality
holds:
\begin{align}
\label{eq:smooth_comp_rho_adv}
& \sR_{\ell_{\gamma}}(h)- \sR^*_{\ell_{\gamma}}(\sH) \leq\\
\mspace{-10mu}
& \Phi^{\tau}(1)
\bracket*{\sR_{\ell^{\mathrm{comp}}_{\mathrm{smooth}}}(h)-\sR^*_{\wt \ell^{\mathrm{comp}}_{\tau,\rho}}(\sH) + \sM_{\wt \ell^{\mathrm{comp}}_{\tau,\rho}}(\sH)\! } \!-\!\! \sM_{\ell_{\gamma}}(\sH).\nonumber
\end{align}
\end{restatable}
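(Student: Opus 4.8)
The plan is to obtain Corollary~\ref{cor:smooth_comp_rho_adv} directly from Theorem~\ref{Thm:bound_comp_rho_adv} by a monotonicity argument, with no new calibration analysis required. First I would recall the construction from Section~\ref{sec:adversarial}: the smooth adversarial comp-sum loss $\ell^{\mathrm{comp}}_{\mathrm{smooth}}$ was defined precisely as an upper bound on the adversarial comp-sum $\rho$-margin loss, so that pointwise, for every $h$, $x$, $y$, one has $\ell^{\mathrm{comp}}_{\mathrm{smooth}}(h, x, y) \geq \wt \ell^{\mathrm{comp}}_{\tau,\rho}(h, x, y)$. This is exactly the chain of inequalities established there, which uses that $\Phi^{\tau}$ is $1$-Lipschitz and non-decreasing on $\Rset_+$, that $\Phi_{\rho}$ is $\frac{1}{\rho}$-Lipschitz, the Cauchy--Schwarz inequality (with the choice $\nu \geq \frac{\sqrt{n-1}}{\rho}$), and the bound $\exp(-u/\rho) \geq \Phi_{\rho}(u)$.

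Taking expectations over $(x, y) \sim \sD$ preserves this pointwise inequality, so $\sR_{\ell^{\mathrm{comp}}_{\mathrm{smooth}}}(h) \geq \sR_{\wt \ell^{\mathrm{comp}}_{\tau,\rho}}(h)$ for every $h \in \sH$. Therefore
\[
\sR_{\wt \ell^{\mathrm{comp}}_{\tau,\rho}}(h) - \sR^*_{\wt \ell^{\mathrm{comp}}_{\tau,\rho}}(\sH) + \sM_{\wt \ell^{\mathrm{comp}}_{\tau,\rho}}(\sH)
\leq
\sR_{\ell^{\mathrm{comp}}_{\mathrm{smooth}}}(h) - \sR^*_{\wt \ell^{\mathrm{comp}}_{\tau,\rho}}(\sH) + \sM_{\wt \ell^{\mathrm{comp}}_{\tau,\rho}}(\sH),
\]
where the reference terms $\sR^*_{\wt \ell^{\mathrm{comp}}_{\tau,\rho}}(\sH)$ and $\sM_{\wt \ell^{\mathrm{comp}}_{\tau,\rho}}(\sH)$ are deliberately kept in their $\wt \ell$ form, so no comparison of infima or minimizability gaps is needed. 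Since $\Phi^{\tau}(1) \geq 0$, multiplying both sides by $\Phi^{\tau}(1)$ preserves the inequality, and subtracting the constant $\sM_{\ell_{\gamma}}(\sH)$ then converts the bound of Theorem~\ref{Thm:bound_comp_rho_adv} into the claimed inequality~\eqref{eq:smooth_comp_rho_adv}.

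There is essentially no obstacle at this level: all the substantive work lies in Theorem~\ref{Thm:bound_comp_rho_adv}, which handles the adversarial calibration gap of the $\rho$-margin comp-sum losses over symmetric, locally $\rho$-consistent hypothesis sets. The only point requiring care in the corollary is to ensure that the pointwise domination $\ell^{\mathrm{comp}}_{\mathrm{smooth}} \geq \wt \ell^{\mathrm{comp}}_{\tau,\rho}$ holds for the same hyperparameters $\tau, \rho > 0$, perturbation radius $\gamma$, and norm $p$ that appear in the theorem, and that $\nu$ was taken in the admissible range $\nu \geq \frac{\sqrt{n-1}}{\rho}$; both are guaranteed by the definition of $\ell^{\mathrm{comp}}_{\mathrm{smooth}}$.
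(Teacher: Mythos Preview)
Your proposal is correct and follows exactly the approach the paper takes: the paper derives the corollary in one line by invoking the pointwise inequality $\ell^{\mathrm{comp}}_{\mathrm{smooth}} \geq \wt \ell^{\mathrm{comp}}_{\tau,\rho}$ established in Section~\ref{sec:adversarial} and plugging it into Theorem~\ref{Thm:bound_comp_rho_adv}, which is precisely what you spell out.
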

This is the first $\sH$-consistency bound for the comp-sum loss in the adversarial robustness.
As with the non-adversarial scenario in
Section~\ref{sec:H-consistency-bounds}, the minimizability gaps
appearing in those bounds in Theorem~\ref{Thm:bound_comp_rho_adv} and
Corollary~\ref{cor:smooth_comp_rho_adv} actually equal to zero in most
common cases. More precisely, Theorem \ref{Thm:bound_comp_rho_adv}
guarantees $\sH$-consistency for distributions such that the
minimizability gaps vanish:
\begin{align*}
 \sR_{\ell_{\gamma}}(h)- \sR^*_{\ell_{\gamma}}(\sH) \leq \Phi^{\tau}(1) \bracket*{\sR_{\wt \ell^{\mathrm{comp}}_{\tau,\rho}}(h) -
  \sR^*_{\wt \ell^{\mathrm{comp}}_{\tau,\rho}}(\sH)}.  
\end{align*}
For $\tau \in [0, \infty)$ and $\rho>0$, if the estimation loss
  $(\sR_{\wt \ell^{\mathrm{comp}}_{\tau,\rho}}(h) - \sR_{\wt
    \ell^{\mathrm{comp}}_{\tau,\rho}}^*(\sH))$ is reduced to $\e$,
  then, the adversarial zero-one estimation loss
  $(\sR_{\ell_{\gamma}}(h) - \sR_{\ell_{\gamma}}^*(\sH))$ is bounded
  by $\e$ modulo a multiplicative constant. A similar guarantee
  applies to smooth adversarial comp-sum loss as well. These
  guarantees suggest an adversarial robustness algorithm that consists
  of minimizing a regularized empirical smooth adversarial comp-sum
  loss, $\ell^{\mathrm{comp}}_{\mathrm{smooth}}$. We call this
  algorithm \textsc{adv-comp-sum}. In the next section, we report
  empirical results for \textsc{adv-comp-sum}, demonstrating that it
  significantly outperform the current state-of-the-art loss/algorithm
  \textsc{trades}.
\ignore{Note that \textsc{trades} corresponds to a specific loss
  function \citep{zhang2019theoretically} and it is not clear if
  \textsc{trades} benefits from adversarial $\sH$-consistency bounds.
}

\section{Experiments}
\label{sec:experiments}

We first report empirical results comparing the
performance of comp-sum losses for different values of $\tau$.
Next, we report a series of empirical results comparing
our adversarial robust algorithm \textsc{adv-comp-sum} with
several baselines.

\subsection{Standard Multi-Class Classification}
\label{sec:experiments-non-adv}

We compared comp-sum losses with different values of $\tau$ on
CIFAR-10 and CIFAR-100 datasets
\citep{Krizhevsky09learningmultiple}\ignore{and SVHN
  \citep{Netzer2011}}. All models were trained via Stochastic Gradient
Descent (SGD) with Nesterov momentum \citep{nesterov1983method}, batch
size $1\mathord,024$ and weight decay $1\times 10^{-4}$.  \ignore{We
  use ResNet-$34$ for CIFAR-10 and CIFAR-100, and ResNet-$16$ for
  SVHN.} We used ResNet-$34$ and trained for $200$ epochs using the
cosine decay learning rate schedule \citep{loshchilov2016sgdr} without
restarts. The initial learning rate was selected from $\curl{0.01,
  0.1, 1.0}$; the best model is reported for each surrogate loss. We
report the zero-one classification accuracy of the models and the
standard deviation for three trials.

\begin{table}[h]
\vskip -0.2in
\caption{Zero-one classification accuracy for comp-sum surrogates;
  mean $\pm$ standard deviation over three runs for different $\tau$.}
    \label{tab:comparison-standrad}
    \vskip -0.25in
\begin{center}
\resizebox{0.80\columnwidth}{!}{
    \begin{tabular}{@{\hspace{0pt}}llllll@{\hspace{0pt}}}
    %\toprule
      $\tau$ & $0$ & $0.5$ & $1.0$ & $1.5$ & $2.0$ \\
      \midrule
     CIFAR-10  & 87.37  & 90.28 &  92.59 &  92.03  & 90.35 \\
     $\pm$ &  0.57 & 0.10 &   0.10 & 0.08 &  0.24\\
      \midrule
      CIFAR-100 & 57.87  & 65.52& 70.93 & 69.87  & 8.99  \\
      $\pm$ &
      0.60 & 0.34 &  0.34 &  0.39 &  0.98   \\
    %\bottomrule
    \end{tabular}
    }
\end{center}
    \vskip -0.22in
\end{table}

Table~\ref{tab:comparison-standrad} shows that on CIFAR-10 and
CIFAR-100, the logistic loss ($\tau = 1$) outperforms the comp-sum
loss ($\tau = 0.5$) and, by an even larger margin, the sum-exponential
loss ($\tau = 0$). This is consistent with our theoretical analysis
based on $\sH$-consistency bounds in Theorem~\ref{Thm:bound_comp_sum}
since all three losses have the same square-root functional form and
since, by Lemma~\ref{lemma:lemma-compare} and \eqref{eq:wt-Gamma}, the
magnitude of the minimizability gap decreases with $\tau$.

Table~\ref{tab:comparison-standrad} also shows that on CIFAR-10 and
CIFAR-100, the logistic loss ($\tau = 1$) and the generalized
cross-entropy loss ($\tau = 1.5$) achieve relatively close results
that are clearly superior to that of mean absolute error loss
($\tau=2$). This empirical observation agrees with our theoretically
analysis based on their $\sH$-consistency bounds
(Theorem~\ref{Thm:bound_comp_sum}): by
Lemma~\ref{lemma:lemma-compare}, the minimizability gap of $\tau=1.5$
and $\tau=2$ is smaller than that of $\tau = 1$; however, by
\eqref{eq:wt-Gamma}, the dependency of the multiplicative constant on
the number of classes appears for $\tau=1.5$ in the form of
$\sqrt{n}$, which makes the generalized cross-entropy loss less
favorable, and for $\tau = 2$ in the form of $n$, which makes the mean
absolute error loss least favorable. Another reason for the inferior
performance of the mean absolute error loss ($\tau=2$) is that, as
observed in our experiments, it is difficult to optimize in practice,
using deep neural networks on complex datasets. This has also been
previously reported by \citet{zhang2018generalized}. In fact, the mean
absolute error loss can be formulated as an $\ell_1$-distance and is
therefore not smooth; but it has the advantage of robustness, as shown
in \citep{ghosh2017robust}.

\begin{table*}[t]
\vskip -0.1in
\caption{Clean accuracy and robust accuracy under
  PGD$^{40}_{\mathrm{margin}}$ and AutoAttack; mean $\pm$ standard
  deviation over three runs for both \textsc{adv-comp-sum} and the
  state-of-the-art \textsc{trades} in
  \citep{gowal2020uncovering}. Accuracies of some well-known
  adversarial defense models are included for
  completeness. \textsc{adv-comp-sum} significantly outperforms
  \textsc{trades} for both robust and clean accuracy in all the
  settings.}
    \label{tab:comparison}
    \vskip -0.2in
\begin{center}
    \resizebox{.66\textwidth}{!}{
    \begin{tabular}{@{\hspace{0pt}}lllll@{\hspace{0pt}}}
    %\toprule
      Method & Dataset & Clean & PGD$^{40}_{\mathrm{margin}}$ & AutoAttack \\
    \midrule
    \citet{gowal2020uncovering} (WRN-70-16) & \multirow{10}{*}{CIFAR-10} & 85.34 $\pm$ 0.04 & 57.90 $\pm$ 0.13 & 57.05 $\pm$ 0.17\\
    \textbf{\textsc{adv-comp-sum} (WRN-70-16)} &  & \textbf{86.16 $\pm$ 0.16} & \textbf{59.35 $\pm$ 0.07} &  \textbf{57.77 $\pm$ 0.08} \\
    \citet{gowal2020uncovering} (WRN-34-20) & &  85.21 $\pm$ 0.16 & 57.54 $\pm$ 0.18 & 56.70 $\pm$ 0.14 \\
    \textbf{\textsc{adv-comp-sum} (WRN-34-20)} &  &  \textbf{85.59 $\pm$ 0.17} & \textbf{58.92 $\pm$ 0.06} &  \textbf{57.41 $\pm$ 0.06}\\
    \citet{gowal2020uncovering} (WRN-28-10) & &  84.33 $\pm$ 0.18 & 55.92 $\pm$ 0.20 & 55.19 $\pm$ 0.23 \\
    \textbf{\textsc{adv-comp-sum} (WRN-28-10)} & & \textbf{84.50 $\pm$ 0.33} & \textbf{57.28 $\pm$ 0.05} &  \textbf{55.79 $\pm$ 0.06}\\
    \cmidrule{1-1} \cmidrule{3-5}
    \citet{pang2020bag} (WRN-34-20)&  &  86.43& \NA&54.39\\
    \citet{DBLP:conf/icml/RiceWK20} (WRN-34-20)&  &  85.34& \NA&53.42 \\
    \citet{wu2020adversarial} (WRN-34-10)&  &  85.36& \NA&56.17 \\
    \citet{qin2019adversarial} (WRN-40-8)&  &  86.28& \NA&52.84 \\
    \midrule
    \citet{gowal2020uncovering} (WRN-70-16) & \multirow{2}{*}{CIFAR-100} & 60.56 $\pm$ 0.31 & 31.39 $\pm$ 0.19 & 29.93 $\pm$ 0.14\\
    \textbf{\textsc{adv-comp-sum} (WRN-70-16)}  & &  \textbf{63.10 $\pm$ 0.24} & \textbf{33.76 $\pm$ 0.18} & \textbf{31.05 $\pm$ 0.15}\\
    \midrule
    \citet{gowal2020uncovering} (WRN-34-20) & \multirow{2}{*}{SVHN} & 93.03 $\pm$ 0.13 & 61.01 $\pm$ 0.16 & 57.84 $\pm$ 0.19\\
    \textbf{\textsc{adv-comp-sum} (WRN-34-20)}  & &  \textbf{93.98 $\pm$ 0.12} & \textbf{62.97 $\pm$ 0.05} &  \textbf{58.13 $\pm$ 0.12}\\
    %\bottomrule
    \end{tabular}
    }
\end{center}
    \vskip -0.25in
\end{table*}

\subsection{Adversarial Multi-Class Classification}

Here, we report empirical results for our adversarial robustness
algorithm \textsc{adv-comp-sum} on CIFAR-10, CIFAR-100
\citep{Krizhevsky09learningmultiple} and SVHN \citep{Netzer2011}
datasets\ignore{, showing that it outperforms the current
  state-of-the-art algorithm, \textsc{trades},}. No generated data or
extra data was used.

\textbf{Experimental settings.}  We followed exactly the experimental
settings of \citet{gowal2020uncovering} and adopted precisely the same
training procedure and neural network architectures, which are
WideResNet (WRN) \citep{zagoruyko2016wide} with SiLU activations
\citep{hendrycks2016gaussian}. Here, WRN-$n$-$k$ denotes a residual
network with $n$ convolutional layers and a widening factor $k$. For
CIFAR-10 and CIFAR-100, the simple data augmentations, 4-pixel padding
with $32 \times 32$ random crops and random horizontal flips, were
applied. We used 10-step Projected Gradient-Descent (PGD) with random
starts to generate training attacks. All models were trained via
Stochastic Gradient Descent (SGD) with Nesterov momentum
\citep{nesterov1983method}, batch size $1\mathord,024$ and weight
decay $5\times 10^{-4}$. We trained for $400$ epochs using the cosine
decay learning rate schedule \citep{loshchilov2016sgdr} without
restarts. The initial learning rate is set to $0.4$. We used model
weight averaging \citep{DBLP:conf/uai/IzmailovPGVW18} with decay rate
$0.9975$. For \textsc{trades}, we adopted exactly the same setup as
\citet{gowal2020uncovering}. For our smooth adversarial comp-sum
losses, we set both $\rho$ and $\nu$ to $1$ by default. In practice,
they can be selected by cross-validation and that could potentially
lead to better performance. The per-epoch computational cost of our
method is similar to that of \textsc{trades}.

\textbf{Evaluation.} We used early stopping on a held-out validation
set of $1\mathord,024$ samples by evaluating its robust accuracy
throughout training with 40-step PGD on the margin loss, denoted by
PGD$^{40}_{\mathrm{margin}}$, and selecting the best check-point
\citep{DBLP:conf/icml/RiceWK20}.
We report the \emph{clean accuracy}, that is the standard
classification accuracy on the test set, and the robust accuracy with
$\ell_{\infty}$-norm perturbations bounded by $\gamma = 8/255$ under
PGD attack, measured by PGD$^{40}_{\mathrm{margin}}$ on the full test
set, as well as under AutoAttack \citep{croce2020reliable} ({\small
  \url{https://github.com/fra31/auto-attack}}), the state-of-the-art
attack for measuring empirically adversarial robustness. We averaged
accuracies over three runs and report the standard deviation for both
\textsc{adv-comp-sum} and \textsc{trades}, reproducing the results
reported for \textsc{trades} in \citep{gowal2020uncovering}.

\textbf{Results}.  Table~\ref{tab:comparison} shows that
\textsc{adv-comp-sum} outperforms \textsc{trades} on CIFAR-10 for all
the neural network architectures adopted (WRN-70-16, WRN-34-20 and
WRN-28-10). Here, \textsc{adv-comp-sum} was implemented with
$\tau=0.4$. Other common choices of $\tau$ yield similar results,
including $\tau=1$ (logistic loss). In all the settings, robust
accuracy under AutoAttack is higher by at least 0.6\% for
\textsc{adv-comp-sum}, by at least 1.36\% under the
PGD$^{40}_{\mathrm{margin}}$ attack.

It is worth pointing out that the improvement in robustness accuracy
for our models does not come at the expense of a worse clean accuracy
than \textsc{trades}. In fact, \textsc{adv-comp-sum} consistently
outperforms \textsc{trades} for the clean accuracy as well. For the
largest model WRN-70-16, the improvement is over 0.8\%. For
completeness, we also include in Table~\ref{tab:comparison} the
results for some other well-known adversarial defense
models. \textsc{adv-comp-sum} with the smallest model WRN-28-10
surpasses \citep{pang2020bag, DBLP:conf/icml/RiceWK20,
  qin2019adversarial}. \citep{wu2020adversarial} is significantly
outperformed by \textsc{adv-comp-sum} with a slightly larger model
WRN-34-20, by more than 1.2\% in the robust accuracy and also in the
clean accuracy.

To show the generality of our approach, we carried out experiments
with other datasets, including CIFAR-100 and SVHN. For WRN-70-16 on
CIFAR-100, \textsc{adv-comp-sum} outperforms \textsc{trades} by 1.12\%
in the robust accuracy and 2.54\% in the clean accuracy. For WRN-34-20
on SVHN, \textsc{adv-comp-sum} also outperforms \textsc{trades} by
0.29\% in the robust accuracy and 0.95\% in the clean accuracy.

Let us underscore that outperforming the state-of-the-art results of
\citet{gowal2020uncovering} in the same scenario and without resorting
to additional unlabeled data has turned out to be very challenging:
despite the large research emphasis on this topic in the last several
years and the many publications, none was reported to surpass that
performance, using an alternative surrogate loss.\ignore{ Thus, the
  improvements we report both in clean accuracy and robust accuracy
  with respect to the results of \citep{gowal2020uncovering} are
  significant.}

\section{Discussion}
\label{sec:future-work}

\textbf{Applications of $\sH$-consistency bounds}.  Given a hypothesis
set $\sH$, our quantitative $\sH$-consistency bounds can help select
the most favorable surrogate loss, which depends on (i) the functional
form of the $\sH$-consistency bound: for instance, the bound for the
mean absolute error loss exhibits a linear dependency, while that of
the logistic loss and generalized cross-entropy losses exhibit a
square-root dependency, resulting in a less favorable convergence
rate; (ii) the smoothness of the loss and, more generally, its
optimization properties; for example, the mean absolute error loss is
less smooth than the logistic loss, and surrogate losses with more
favorable bounds may lead to more challenging optimizations; in fact,
the zero-one loss serves as its own surrogate with the tightest bound
for any hypothesis set, but is known to result in NP-complete
optimization problems for many common choices of $\sH$; (iii)
approximation properties of the surrogate loss function: for instance,
given a choice of $\sH$, the minimizability gap for a surrogate loss
may be more or less favorable; (iv) the dependency of the
multiplicative constant on the number of classes: for example, the
linear dependency of $n$ in the bound for the mean absolute error loss
makes it less favorable than the logistic loss.

Another application is the derivation of generalization bounds for
surrogate loss minimizers (see Theorem ~\ref{th:genbound}), expressed
in terms of the quantities discussed above.

\textbf{Concurrent work}. The concurrent and independent study of
\citet{zheng2023revisiting} also provides an $\sH$-consistency bound
for the logistic loss. Their bound holds for the special case of $\sH$
being a constrained linear hypothesis set, subject to an additional
assumption on the distribution. In contrast, our bounds do not require
any distributional assumption. However, it should be noted that our
results are only applicable to complete hypothesis sets.  In upcoming
work, we present $\sH$-consistency bounds for non-complete hypothesis
sets and arbitrary distributions.

\textbf{Future work}. In addition to the extension to non-complete
hypothesis sets just mentioned, it would be valuable to investigate
the application or generalization of $\sH$-consistency bounds in
scenarios involving noisy labels
\citep{ghosh2017robust,zhang2018generalized}.
For comp-sum losses, this paper focuses on the case where $\Phi_2$ is
the exponential loss and $\Phi_1$ is based on \eqref{eq:Phi1}. This
includes the cross-entropy loss (or logistic loss), generalized
cross-entropy, the mean absolute error and other cross-entropy-like
functions, which are the most widely used ones in the family of
comp-sum losses. The study of other such loss functions and the
comparison with other families of multi-class loss functions
\citep{AwasthiMaoMohriZhong2022multi} is left to the future work.
Although our algorithm demonstrates improvements over the
current state-of-the-art technique, adversarial robustness remains a
challenging problem. A key issue seems to be that of generalization
for complex families of neural networks (see for example
\citep*{awasthi2020adversarial}). A more detailed study of that
problem might help enhance the performance of our algorithm.
Finally, in addition to their immediate implications, our results and
techniques have broader applications in analyzing surrogate losses and
algorithms across different learning scenarios. For instance, they can
be used in the context of ranking, as demonstrated in recent work by
\citet*{MaoMohriZhong2023}. Furthermore, they can be extended to
address the challenges of learning with abstention
\citep*{cortes2016learning,cortes2016boosting}. Additionally, our
findings can be valuable in non-i.i.d.\ learning settings, such as
drifting \citep{MohriMunozMedina2012} or time series prediction
\citep{KuznetsovMohri2018,KuznetsovMohri2020}.

\section{Conclusion}
\label{sec:conclusion}

We presented a detailed analysis of the theoretical properties of a
family of surrogate losses that includes the logistic loss (or
cross-entropy with the softmax). These are more precise and more
informative guarantees than Bayes consistency since they are
non-asymptotic and specific to the hypothesis set used. Our bounds are
tight and can be made more explicit, when combined with our analysis
of minimizability gaps. These inequalities can help compare different
surrogate losses and evaluate their advantages in different scenarios.
We showcased one application of this analysis by extending comp-sum
losses to the adversarial robustness setting, which yields principled
surrogate losses and algorithms for that scenario. We believe that our
analysis can be helpful to the design of algorithms in many other
scenarios.

\section*{Acknowledgements}

Part of the work of A.~Mao and Y.~Zhong was done during their
internship at Google Research.

\bibliography{comp}
\bibliographystyle{icml2023}

\newpage
\appendix
\onecolumn

\renewcommand{\contentsname}{Contents of Appendix}
\tableofcontents
\addtocontents{toc}{\protect\setcounter{tocdepth}{3}} 
\clearpage

\newpage
\section{Related work}
\label{app:realted-work}

\textbf{Consistency guarantees}. The concept of Bayes consistency, or
the related one of classification calibration
\citep{Zhang2003,bartlett2006convexity,steinwart2007compare,
  MohriRostamizadehTalwalkar2018}, have been extensively explored in a
wide range of contexts, including multi-class classification
\citep{zhang2004statistical,chen2006consistency,
  chen2006consistency2,tewari2007consistency,liu2007fisher,
  dogan2016unified,wang2020weston,ramaswamy2012classification,
  narasimhan2015consistent,agarwal2015consistent,williamson2016composite,
  ramaswamy2016convex,finocchiaro2019embedding,frongillo2021surrogate,finocchiaro2022embedding,wang2023classification}, multi-label
classification
\citep{gao2011consistency,dembczynski2012consistent,zhang2020convex},
learning with rejection
\citep{cortes2016learning,cortes2016boosting,ramaswamy2015consistent,
  cao2022generalizing}, ranking
\citep{duchi2010consistency,ravikumar2011ndcg,ramaswamy2013convex,
  gao2015consistency,uematsu2017theoretically,MaoMohriZhong2023},
cost-sensitive classification
\citep{pires2013cost,pires2016multiclass}, structured prediction
\citep{ciliberto2016consistent,osokin2017structured,blondel2019structured},
top-$k$ classification \citep{thilagar2022consistent},  structured abstain problem \citep{nueve2022structured} and ordinal
regression \citep{pedregosa2017consistency}. Bayes-consistency does
not supply any information about learning with a typically restricted
hypothesis set. Another line of research focuses on realizable
$\sH$-consistency guarantees
\citep{long2013consistency,zhang2020bayes,KuznetsovMohriSyed2014},
which provides hypothesis set-specific consistency guarantees under
the assumption that the underlying distribution is
$\sH$-realizable. However, none of these guarantees is informative for
approximate minimizers (non-asymptotic guarantee) since
convergence could be arbitrarily slow.

The concept of $\sH$-consistency bounds was first introduced by
\citet{awasthi2022Hconsistency} in binary classification and
subsequently extended by \citet{AwasthiMaoMohriZhong2022multi} to the
scenario of multi-class classification. Such guarantees are both
non-asymptotic and hypothesis set-specific. This paper presents the
first tight $\sH$-consistency bounds for the \emph{comp-sum losses},
that includes cross-entropy (or logistic loss), generalized
cross-entropy, the mean absolute error and other loss
cross-entropy-like functions.

The concurrent and independent study of
\citet{zheng2023revisiting} also provides an $\sH$-consistency bound
for the logistic loss. Their bound holds for the special case of $\sH$
being a constrained linear hypothesis set, subject to an additional
assumption on the distribution. In contrast, our bounds do not require
any distributional assumption. However, it should be noted that our
results are only applicable to complete hypothesis sets.  In upcoming
work, we present $\sH$-consistency bounds for non-complete hypothesis
sets and arbitrary distributions.

\textbf{Adversarial robustness}. From a theoretical perspective, there
has been significant dedication towards providing guarantees for
adversarial robustness
\citep{szegedy2013intriguing,biggio2013evasion,goodfellow2014explaining,
  madry2017towards,carlini2017towards}, including PAC Learnability
\citep{feige2015learning,feige2018robust,montasser2019vc,
  attias2022adversarially,ashtiani2020black,bhattacharjee2021sample,
  cullina2018pac,dan2020sharp,montasser2020efficiently,
  montasser2020reducing,
  attiascharacterization,diakonikolas2020complexity,
  montasser2021adversarially,kontorovich2021fat,
  montasser2022transductive}, robust generalization
\citep{khim2018adversarial,attias2019improved,xing2021adversarially,
  yin2019rademacher,schmidt2018adversarially,awasthi2020adversarial,
  attias2022improved,xiaostability,viallard2021pac,bubeck2021universal,li2023achieving},
adversarial examples
\citep{bubeck2018adversarial2,bubeck2019adversarial,
  bartlett2021adversarial,bubeck2021single}, consistency guarantees and optimal adversarial classifiers
\citep{pmlr-v125-bao20a,awasthi2021calibration,awasthi2021finer,awasthi2021existence,awasthi2022Hconsistency,AwasthiMaoMohriZhong2022multi,meunier2022towards,li2023achieving,frank2023adversarial} and optimization
\citep{awasthi2019robustness,robey2021adversarial,awasthi2023dc}.

From an algorithmic standpoint, numerous defense strategies have been
proposed historically, including adversarial surrogate loss functions
\citep{kurakin2016adversarial,madry2017towards,tsipras2018robustness,
  kannan2018adversarial,zhang2019theoretically,wang2020improving,
  levidomain,jin2022enhancing,awasthi2023theoretically},
curriculum and adaptive attack methods
\citep{cai2018curriculum,wang2019convergence,zhang2020attacks,
  dingmma,cheng2022cat},
efficient adversarial training
\citep{shafahi2019adversarial,zhang2019you,wong2020fast,
  andriushchenko2020understanding},
ensemble techniques
\citep{tramer2018ensemble,pang2019improving,yang2020dverge,guo2022fast},
unlabeled data
\citep{carmon2019unlabeled,alayrac2019labels,zhai2019adversarially},
data augmentation \citep{rebuffi2021fixing,rebuffi2021data}, neural
network architectures
\citep{xieintriguing,xie2019feature,liu2020towards,guo2020meets},
weight averaging/perturbation
\citep{gowal2020uncovering,wu2020adversarial,tsai2021formalizing,
  yu2018interpreting,prabhu2019understanding}
and other techniques
\citep{zhang2019defense,qin2019adversarial,goldblum2020adversarially,
  songimproving,pang2020boosting,lee2020adversarial,qian2021improving}.

This paper introduces a new family of loss functions, \emph{smooth
adversarial comp-sum losses}, derived from their comp-sum counterparts
by adding in a related smooth term.  We show that these loss functions
are beneficial in the adversarial setting by proving that they admit
$\sH$-consistency bounds.  This leads to new adversarial robustness
algorithms that consist of minimizing a regularized smooth adversarial
comp-sum loss. We report the results of a series of experiments
demonstrating that our algorithms outperform the current
state-of-the-art, while also achieving a superior non-adversarial
accuracy.

\section{Proofs of \texorpdfstring{$\sH$}{H}-consistency bounds
  for comp-sum losses (Theorem~\ref{Thm:bound_comp_sum})
  and tightness (Theorem~\ref{Thm:tightness})}
\label{app:bound_comp-sum}

To begin with the proof, we first introduce some notation. We denote
by $p(x, y) = \sD(Y = y \!\mid\! X = x)$ the conditional probability
of $Y=y$ given $X = x$. The generalization error for a surrogate loss
can be rewritten as $ \sR_{\ell}(h) = \mathbb{E}_{X}
\bracket*{\sC_{\ell}(h, x)} $, where $\sC_{\ell}(h, x)$ is the
conditional $\ell$-risk, defined by
\begin{align*}
\sC_{\ell}(h, x) = \sum_{y\in \sY} p(x, y) \ell(h, x, y).
\end{align*}
We denote by $\sC_{\ell}^*(\sH,x) = \inf_{h\in
  \sH}\sC_{\ell}(h, x)$ the minimal conditional
$\ell$-risk. Then, the minimizability gap can be rewritten as follows:
\begin{align*}
\sM_{\ell}(\sH)
 = \sR^*_{\ell}(\sH) - \mathbb{E}_{X} \bracket* {\sC_{\ell}^*(\sH, x)}.
\end{align*}
We further refer to $\sC_{\ell}(h, x)-\sC_{\ell}^*(\sH,x)$ as
the calibration gap and denote it by $\Delta\sC_{\ell,\sH}(h, x)$. 

For any $h \in \sH$ and $x\in \sX$, by the symmetry and completeness
of $\sH$, we can always find a family of hypotheses $\curl*{\ov
  h_{\mu} \colon \mu \in \mathbb{R}}\subset \sH$ such that
$h_{\mu}(x,\cdot)$ take the following values:
\begin{align}
\label{eq:value-h-mu}
\ov h_{\mu}(x,y) = 
\begin{cases}
  h(x, y) & \text{if $y \not \in \curl*{y_{\max}, \hh(x)}$}\\
  \log\paren*{\exp\bracket*{h(x, y_{\max})} + \mu} & \text{if $y = \hh(x)$}\\
  \log\paren*{\exp\bracket*{h(x,\hh(x))} -\mu} & \text{if $y = y_{\max}$}.
\end{cases} 
\end{align}
Note that the hypotheses $\ov h_{\mu}$ has the following property:
\begin{align}
\label{eq:property-h-mu}
\sum_{y\in \sY}e^{h(x, y)} = \sum_{y\in \sY}e^{\ov h_{\mu}(x, y)},\, \forall \mu \in \mathbb{R}.
\end{align}
\begin{restatable}
  {lemma}{LemmaSup}
\label{lemma:lemma-sup}
Assume that $\sH$ is symmetric and complete. Then, for any $h\in \sX$
and $x\in \sX$, the following equality holds:
\small
\begin{align*}
& \sC_{\ell_{\tau}^{\mathrm{comp}}}(h, x) - \inf_{\mu \in \Rset}\sC_{\ell_{\tau}^{\mathrm{comp}}}(\ov h_{\mu},x)\\
& = \sup_{\mu\in \Rset} \bigg\{p(x,y_{\max})\paren*{ \Phi^{\tau}\paren*{\frac{\sum_{y'\in \sY} e^{h(x, y')}}{e^{h(x, y_{\max})}}-1}-\Phi^{\tau}\paren*{\frac{\sum_{y'\in \sY} e^{h(x, y')}}{e^{h(x, \hh(x))-\mu}}-1}}\\
& \mspace{60mu} +  p(x,\hh(x))\paren*{ \Phi^{\tau}\paren*{\frac{\sum_{y'\in \sY} e^{h(x, y')}}{e^{h(x, \hh(x))}}-1}-\Phi^{\tau}\paren*{\frac{\sum_{y'\in \sY} e^{h(x, y')}}{e^{h(x, y_{\max})+\mu}}-1}} \bigg\}\\
& = \begin{cases}
\frac{1}{\tau-1}\bracket*{\sum_{y'\in \sY}e^{h(x,y')}}^{1-\tau}\bracket*{\frac{\paren*{p(x,y_{\max})^{\frac1{2 - \tau }}+p(x,\hh(x))^{\frac1{2 - \tau }}}^{2-\tau}}{\paren*{e^{h(x,y_{\max})} + e^{h(x,\hh(x))}}^{1 - \tau }}-\frac{p(x,y_{\max})}{\paren*{e^{h(x,y_{\max})}}^{1 - \tau }} - \frac{p(x,\hh(x))}{\paren*{e^{h(x,\hh(x))}}^{1 - \tau }}} & \tau \in [0,2)/\curl*{1}\\
p(x,y_{\max})\log\bracket*{\frac{\paren*{e^{h(x,y_{\max})} + e^{h(x,\hh(x))}}p(x,y_{\max})}{e^{h(x,y_{\max})}\paren*{p(x,y_{\max})+p(x,\hh(x))}}}+p(x,\hh(x))\log\bracket*{\frac{\paren*{e^{h(x,y_{\max})} +e^{h(x,\hh(x))}}p(x,\hh(x))}{ e^{h(x,\hh(x))}\paren*{p(x,y_{\max})+p(x,\hh(x))}}} & \tau =1 \\
\frac{1}{\tau-1}\bracket*{\sum_{y'\in \sY}e^{h(x,y')}}^{1-\tau}\bracket*{\frac{p(x,y_{\max})}{\paren*{e^{h(x,y_{\max})} + e^{h(x,\hh(x))}}^{1 - \tau }}-\frac{p(x,y_{\max})}{\paren*{e^{h(x,y_{\max})}}^{1 - \tau }} - \frac{p(x,\hh(x))}{\paren*{e^{h(x,\hh(x))}}^{1 - \tau }} } & \tau \in [2,\plus \infty).
\end{cases}
\end{align*}
\normalsize
\end{restatable}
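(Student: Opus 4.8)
The plan is to establish the two displayed equalities in turn. For the first one, note that the hypotheses $\ov h_{\mu}$ of \eqref{eq:value-h-mu} agree with $h$ at every label outside $\curl*{\hh(x), y_{\max}}$ and, by \eqref{eq:property-h-mu}, keep $S = \sum_{y'\in \sY} e^{h(x,y')} = \sum_{y'\in \sY} e^{\ov h_{\mu}(x,y')}$ fixed. Since $\ell_{\tau}^{\rm{comp}}(g, x, y) = \Phi^{\tau}\paren*{\frac{\sum_{y'\in\sY}e^{g(x,y')}}{e^{g(x,y)}} - 1}$, expanding $\sC_{\ell_{\tau}^{\rm{comp}}}(h, x) - \sC_{\ell_{\tau}^{\rm{comp}}}(\ov h_{\mu}, x) = \sum_{y\in\sY} p(x,y)\paren*{\ell_{\tau}^{\rm{comp}}(h,x,y) - \ell_{\tau}^{\rm{comp}}(\ov h_{\mu},x,y)}$ cancels all summands with $y\notin\curl*{\hh(x),y_{\max}}$ and leaves exactly the two terms at $\hh(x)$ and $y_{\max}$, where $e^{\ov h_{\mu}(x,\hh(x))} = e^{h(x,y_{\max})}+\mu$ and $e^{\ov h_{\mu}(x,y_{\max})} = e^{h(x,\hh(x))}-\mu$. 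Because $\sC_{\ell_{\tau}^{\rm{comp}}}(h,x)$ does not depend on $\mu$, we have $\sC_{\ell_{\tau}^{\rm{comp}}}(h,x) - \inf_{\mu}\sC_{\ell_{\tau}^{\rm{comp}}}(\ov h_{\mu},x) = \sup_{\mu}\paren*{\sC_{\ell_{\tau}^{\rm{comp}}}(h,x) - \sC_{\ell_{\tau}^{\rm{comp}}}(\ov h_{\mu},x)}$, which together with the cancellation is precisely the first displayed line (reading $e^{h(x,\hh(x))-\mu}$ as $e^{h(x,\hh(x))}-\mu$ and $e^{h(x,y_{\max})+\mu}$ as $e^{h(x,y_{\max})}+\mu$).

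For the second equality I would solve the resulting one-variable problem explicitly. Put $p_1 = p(x,\hh(x))$, $p_2 = p(x,y_{\max})$, $A = e^{h(x,\hh(x))}+e^{h(x,y_{\max})}$, and parametrize the two free scores by $a = e^{\ov h_{\mu}(x,\hh(x))}\in (0,A)$ and $A-a = e^{\ov h_{\mu}(x,y_{\max})}$; then $\inf_{\mu}\sC_{\ell_{\tau}^{\rm{comp}}}(\ov h_{\mu},x)$ differs from its value at $h$ only through $\inf_{a\in(0,A)} F(a)$ with $F(a) = p_1 \Phi^{\tau}\paren*{\tfrac{S}{a}-1} + p_2 \Phi^{\tau}\paren*{\tfrac{S}{A-a}-1}$. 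Using $\frac{\partial \Phi^{\tau}}{\partial u}(u) = (1+u)^{-\tau}$ from \eqref{eq:Phi1-derivative} gives $\frac{d}{da}\Phi^{\tau}\paren*{\tfrac Sa - 1} = -\frac{a^{\tau-2}}{S^{\tau-1}}$, hence $F'(a) = S^{1-\tau}\paren*{-p_1 a^{\tau-2} + p_2 (A-a)^{\tau-2}}$, whose interior zero satisfies $\frac{a^{\star}}{A-a^{\star}} = \paren*{\tfrac{p_1}{p_2}}^{\frac{1}{2-\tau}}$ for $\tau\neq 2$, i.e.\ $a^{\star} = A\,\frac{p_1^{1/(2-\tau)}}{p_1^{1/(2-\tau)}+p_2^{1/(2-\tau)}}$. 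Substituting back and using the identity $p_i\,\paren*{p_i^{1/(2-\tau)}}^{\tau-1} = p_i^{1/(2-\tau)}$ collapses $p_1(a^{\star})^{\tau-1} + p_2(A-a^{\star})^{\tau-1}$ to $A^{\tau-1}\paren*{p_1^{1/(2-\tau)}+p_2^{1/(2-\tau)}}^{2-\tau}$; subtracting $\inf_{\mu}\sC_{\ell_{\tau}^{\rm{comp}}}(\ov h_{\mu},x)$ from the two-term value at $h$ and simplifying with $\Phi^{\tau}\paren*{\tfrac Sa -1} = \tfrac{1}{1-\tau}\paren*{S^{1-\tau}a^{\tau-1}-1}$ then yields the first branch of the claimed formula.

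The case analysis is where the real care is needed and is the main obstacle. From $\frac{d^2}{da^2}\Phi^{\tau}\paren*{\tfrac Sa - 1} = -(\tau-2)\frac{a^{\tau-3}}{S^{\tau-1}}$ the map $a\mapsto\Phi^{\tau}\paren*{\tfrac Sa - 1}$ is convex for $\tau\in[0,2)$ and concave for $\tau>2$; so $F$ is convex for $\tau<2$ (and $a^{\star}$ is the genuine minimizer, $\tau=1$ excepted), but concave for $\tau>2$, in which case the infimum over the open interval is only approached at an endpoint. Since $p_2 = p(x,y_{\max})\ge p(x,\hh(x)) = p_1$ and $\Phi^{\tau}(+\infty) = \tfrac1{\tau-1}$ for $\tau>1$, the relevant endpoint is $a\to 0^{+}$, which gives $\inf_{a} F(a) = \tfrac{p_1+p_2-S^{1-\tau}p_2 A^{\tau-1}}{\tau-1}$; this produces the $\tau\in[2,\infty)$ branch, and the sub-case $\tau=2$ also follows directly since $\Phi^{2}\paren*{\tfrac Sa - 1} = 1-\tfrac aS$ is linear in $a$. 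The case $\tau=1$ is handled separately: there $\Phi^{1}\paren*{\tfrac Sa - 1} = \log S - \log a$, so minimizing $F$ amounts to maximizing $p_1\log a + p_2\log(A-a)$, giving $a^{\star} = \tfrac{p_1 A}{p_1+p_2}$ and the logarithmic middle branch. Finally, the degenerate configurations (namely $\hh(x)=y_{\max}$, $p_1=0$, or $\mu$ reaching the boundary of its admissible interval) are dealt with by continuity/limits, and the consistency of the $\tau\neq 1$ branches with the $\tau=1$ one follows, as elsewhere in the paper, from l'H\^opital's rule.
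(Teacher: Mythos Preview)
Your proposal is correct and follows essentially the same route as the paper. The paper's own proof is a two-line sketch: it expands $\sC_{\ell_{\tau}^{\rm{comp}}}(h,x)$, invokes \eqref{eq:value-h-mu}--\eqref{eq:property-h-mu} to obtain the first equality, and then simply says ``the second equality can be obtained by taking the derivative with respect to $\mu$.'' You carry out exactly this plan, with the cosmetic change of reparametrizing $\mu\mapsto a=e^{h(x,y_{\max})}+\mu\in(0,A)$, and you supply the second-order analysis (convexity of $a\mapsto\Phi^{\tau}(S/a-1)$ for $\tau<2$, concavity for $\tau>2$) that justifies why the stationary point is the minimizer for $\tau\in[0,2)$ and why the infimum moves to the endpoint $a\to 0^{+}$ for $\tau\ge 2$; the paper leaves all of this implicit. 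Your observation that the displayed $e^{h(x,\hh(x))-\mu}$ and $e^{h(x,y_{\max})+\mu}$ must be read as $e^{h(x,\hh(x))}-\mu$ and $e^{h(x,y_{\max})}+\mu$ (consistent with \eqref{eq:value-h-mu}) is also correct.
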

\begin{proof}
For the comp-sum loss $\ell_{\tau}^{\rm{comp}}$, the conditional $\ell_{\tau}^{\rm{comp}}$-risk can be expressed as follows:
\begin{align*}
\sC_{\ell_{\tau}^{\rm{comp}}}(h, x)
& = \sum_{y\in \sY} p(x,y) \ell_{\tau}^{\rm{comp}}(h, x, y)\\
 & = \sum_{y\in \sY} p(x,y) \Phi^{\tau}\paren*{\sum_{y'\in \sY} e^{h(x, y') - h(x, y)}-1}\\
& = p(x,y_{\max}) \Phi^{\tau}\paren*{\sum_{y'\in \sY} e^{h(x, y') - h(x, y_{\max})}-1}+  p(x,\hh(x)) \Phi^{\tau}\paren*{\sum_{y'\in \sY} e^{h(x, y') - h(x, \hh(x))}-1}\\
& +\sum_{y\notin \curl*{y_{\max},\hh(x)}}p(x,y) \Phi^{\tau}\paren*{\sum_{y'\in \sY} e^{h(x, y') - h(x, y)}-1}.
\end{align*}
Therefore, by \eqref{eq:value-h-mu} and \eqref{eq:property-h-mu}, we
obtain the first equality. The second equality can be obtained by
taking the derivative with respect to $\mu$.
\end{proof}

\begin{restatable}
  {lemma}{LemmaInf}
\label{lemma:lemma-inf}
Assume that $\sH$ is symmetric and complete. Then, for any $h\in \sX$
and $x\in \sX$, the following equality holds
\begin{align*}
& \inf_{h\in \sH}\paren*{\sC_{\ell_{\tau}^{\mathrm{comp}}}(h, x) - \inf_{\mu \in \Rset}\sC_{\ell_{\tau}^{\mathrm{comp}}}(\ov h_{\mu},x)}\\
& = 
\begin{cases}
\frac{2^{2-\tau}}{1-\tau}\bracket*{\frac{p(x,y_{\max})+p(x,\hh(x))}{2}-\bracket*{\frac{p(x,y_{\max})^{\frac1{2 - \tau }}+p(x,\hh(x))^{\frac1{2 - \tau }}}{2}}^{2 - \tau }} & \tau \in [0,1)\\
p(x,y_{\max})\log\bracket*{\frac{2p(x,y_{\max})}{p(x,y_{\max})+p(x,\hh(x))}} + p(x,\hh(x))\log\bracket*{\frac{2p(x,\hh(x))}{p(x,y_{\max})+p(x,\hh(x))}} & \tau =1 \\
\frac{2}{(\tau-1)n^{\tau-1}}\paren*{\bracket*{\frac{p(x,y_{\max})^{\frac1{2-\tau}}+p(x,\hh(x))^{\frac1{2-\tau}}}{2}}^{2-\tau}-\frac{p(x,y_{\max})+p(x,\hh(x))}{2}} & \tau \in (1,2)\\
\frac{1}{(\tau-1)n^{\tau-1}}
\paren*{p(x,y_{\max}) - p(x,\hh(x))} & \tau \in [2,\plus \infty).
\end{cases}
\end{align*}
\end{restatable}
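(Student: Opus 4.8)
The plan is to read the closed form off Lemma~\ref{lemma:lemma-sup} and then minimize it over $\sH$. By that lemma, for a fixed $h$ the quantity $\sC_{\ell_{\tau}^{\rm{comp}}}(h,x)-\inf_{\mu}\sC_{\ell_{\tau}^{\rm{comp}}}(\ov h_{\mu},x)$ depends on $h$ only through the three scalars $a=e^{h(x,y_{\max})}$, $b=e^{h(x,\hh(x))}$ and $S=\sum_{y\in\sY}e^{h(x,y)}$, together with the fixed conditional probabilities $p_1=p(x,y_{\max})$ and $p_2=p(x,\hh(x))$. So the first step is to observe that, by the symmetry and completeness of $\sH$, taking $\inf_{h\in\sH}$ is the same as minimizing that explicit function over the feasible set of triples $(a,b,S)$: one always has $0<a\le b$ (since $\hh(x)$ is the arg-max, $h(x,\hh(x))\ge h(x,y_{\max})$), and $a+b\le S$, together with an upper bound on $S$ coming from the fact that every score is $\le h(x,\hh(x))$ (so the $n-2$ remaining exponentiated scores contribute at most $(n-2)b$).

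I would then carry out the minimization in two stages. For fixed $(a,b)$ the expression from Lemma~\ref{lemma:lemma-sup} involves $S$ only through the factor $S^{1-\tau}$, so it is monotone in $S$ and the optimal $S$ is pinned at an endpoint of its admissible interval. For $\tau<1$ the sign of $\tfrac{1}{\tau-1}S^{1-\tau}$ makes the infimum favor small $S$, hence $S\to a+b$; since $a+b$ carries no dependence on $n$, the number of classes drops out, matching the $\tau\in[0,1)$ case. For $\tau>1$ the infimum instead pushes $S$ to its upper endpoint, which is precisely where the $n^{\tau-1}$ factor appears. For $\tau=1$ the exponent $1-\tau$ vanishes and $S$ disappears from the expression altogether. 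After substituting the extremal $S$, the resulting function is homogeneous of degree $0$ in $(a,b)$, so the second stage reduces to a one-variable minimization over $r=a/b\in(0,1]$, handled by elementary calculus.

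The heart of the second stage is locating the minimizing $r$. For $\tau\in[0,1]$ the clean point is that, because $p_1\ge p_2$, the unconstrained stationary point of the one-variable function lies at $r>1$; hence the constraint $a\le b$ is active and the optimum is $a=b$. Plugging $a=b$ (and the corresponding $S$) into Lemma~\ref{lemma:lemma-sup} and simplifying — using the relation $q(2-\tau)=1$ for $q=\tfrac{1}{2-\tau}$, and, at $\tau=1$, a limiting (l'H\^opital) computation — reproduces the first two cases of the statement. For $\tau\in(1,2)$ the minimizer is again $r=1$, but establishing this now requires checking that the one-variable function is monotone on $(0,1]$, which once more leans on $p_1\ge p_2$. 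For $\tau\in[2,\infty)$ one works with the (slightly different) closed form of Lemma~\ref{lemma:lemma-sup} and, after a further analysis of the optimal $(a,b)$, obtains the linear expression $\tfrac{p_1-p_2}{(\tau-1)n^{\tau-1}}$. Throughout I would use H\"older's inequality — with the conjugate pair $q=\tfrac1{2-\tau}$, $q'=\tfrac1{\tau-1}$ when $\tau\in(1,2)$, and super-additivity of $t\mapsto t^{\tau-1}$ when $\tau\ge2$ — both to confirm that the bracketed quantities in Lemma~\ref{lemma:lemma-sup} have the sign needed for the monotonicity arguments and to verify that the infimum is genuinely attained (or approached, when $n>2$ forces some scores to $-\infty$) at the claimed configuration.

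The main obstacle I anticipate is the $\tau\ge2$ regime. Unlike for $\tau<2$, the one-variable optimization in $r$ is not resolved at an obvious endpoint, so it demands a more careful stationarity analysis, after which one must still check that the optimal value collapses exactly to $\tfrac{p_1-p_2}{(\tau-1)n^{\tau-1}}$. A secondary, more mechanical, source of care is keeping precise track of which endpoint of the $S$-range is active in each regime and arguing that this extremal configuration is realizable (in the limit) within the symmetric, complete hypothesis set.
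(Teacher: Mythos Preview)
Your plan is exactly the approach the paper takes: its proof of Lemma~\ref{lemma:lemma-inf} is the single sentence ``by using Lemma~\ref{lemma:lemma-sup} and taking infimum with respect to $e^{h(x,1)},\ldots,e^{h(x,n)}$, the equality is proved directly,'' and your two-stage optimization (first over $S$ via monotonicity of $S^{1-\tau}$, then over the ratio $r=a/b$ using homogeneity and the constraint $a\le b$ forced by $\hh(x)$ being the argmax) is precisely the calculation that sentence is hiding. Your anticipation that the $\tau\geq 2$ regime is the most delicate is accurate as well --- the boundary behavior and the appearance of $n^{\tau-1}$ require more bookkeeping there --- but this is still the same route, just worked out in detail.
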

\begin{proof}
By using Lemma~\ref{lemma:lemma-sup} and taking infimum with respect to $e^{h(x,1)},\ldots, e^{h(x,n)}$, the equality is proved directly.
\end{proof}
Let $\alpha=p(x,y_{\max})+p(x,\hh(x))\in [0,1]$ and
$\beta=p(x,y_{\max}-p(x,\hh(x)))\in [0,1]$. Then, using the fact that
$p(x,y_{\max})= \frac{\alpha+\beta}{2}$ and
$p(x,\hh(x))= \frac{\alpha-\beta}{2}$, we can rewrite $\inf_{h\in
  \sH}\paren*{\sC_{\ell_{\tau}^{\mathrm{comp}}}(h, x) - \inf_{\mu \in
    \Rset}\sC_{\ell_{\tau}^{\mathrm{comp}}}(\ov h_{\mu},x)}$ as
\begin{align}
\label{eq:Psi-tau}
\inf_{h\in \sH}\paren*{\sC_{\ell_{\tau}^{\mathrm{comp}}}(h, x) - \inf_{\mu \in \Rset}\sC_{\ell_{\tau}^{\mathrm{comp}}}(\ov h_{\mu},x)} = \Psi_{\tau}(\alpha,\beta)= 
\begin{cases}
\frac{2^{1-\tau}}{1-\tau}\bracket*{\alpha-\bracket*{\frac{\paren*{\alpha+\beta}^{\frac1{2 - \tau }}+\paren*{\alpha-\beta}^{\frac1{2 - \tau }}}{2}}^{2 - \tau }} & \tau \in [0,1)\\
\frac{\alpha+\beta}{2}\log\bracket*{\frac{\alpha+\beta}{\alpha}} + \frac{\alpha-\beta}{2}\log\bracket*{\frac{\alpha-\beta}{\alpha}} & \tau =1 \\
\frac{1}{(\tau-1)n^{\tau-1}}\paren*{\bracket*{\frac{\paren*{\alpha+\beta}^{\frac1{2 - \tau }}+\paren*{\alpha-\beta}^{\frac1{2 - \tau }}}{2}}^{2 - \tau}-\alpha} & \tau \in (1,2)\\
\frac{1}{(\tau-1)n^{\tau-1}}\,
\beta & \tau \in [2,\plus \infty).
\end{cases}
\end{align}
By taking the partial derivative of $\Psi_{\tau}(\alpha,\cdot)$ with respect to $\alpha$ and analyzing the minima, we obtain the following result.
\begin{restatable}
  {lemma}{LemmaPsiTau}
\label{lemma:Psi-tau}
For any $\tau \in [0,\plus\infty)$ and $\alpha\in [0,1]$, the following inequality holds for any $\beta\in [0,1]$,
\begin{align*}
\Psi_{\tau}(\alpha,\beta)\geq \Psi_{\tau}(1,\beta)= \sT_{\tau}(\beta)= \begin{cases}
\frac{2^{1-\tau}}{1-\tau}\bracket*{1 -\bracket*{\frac{\paren*{1 + \beta}^{\frac1{2 - \tau }} +  \paren*{1 - \beta}^{\frac1{2 - \tau }}}{2}}^{2 - \tau }} & \tau \in [0,1)\\
\frac{1+\beta}{2}\log\bracket*{1+\beta} + \frac{1-\beta}{2}\log\bracket*{1-\beta} & \tau =1 \\
\frac{1}{(\tau-1)n^{\tau-1}}\bracket*{\bracket*{\frac{\paren*{1 + \beta}^{\frac1{2 - \tau }} +  \paren*{1 - \beta}^{\frac1{2 - \tau }}}{2}}^{2 - \tau }-1} & \tau \in (1,2)\\
\frac{1}{(\tau-1)n^{\tau-1}}\,
\beta & \tau \in [2,\plus \infty).
\end{cases}
\end{align*}
\end{restatable}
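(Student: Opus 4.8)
The plan is to prove, for each fixed $\beta$, that the map $\alpha \mapsto \Psi_{\tau}(\alpha, \beta)$ is non-increasing on the admissible range $\beta \le \alpha \le 1$; since $\Psi_{\tau}(1, \beta) = \sT_{\tau}(\beta)$ is exactly what one gets by substituting $\alpha = 1$ into \eqref{eq:Psi-tau}, this yields the stated inequality (the right endpoint $\alpha=1$ is where the minimum sits). Two of the four cases are essentially free: for $\tau \in [2, \infty)$ the expression $\Psi_{\tau}(\alpha, \beta) = \frac{\beta}{(\tau - 1) n^{\tau - 1}}$ does not involve $\alpha$ at all, and for $\tau = 1$ a direct differentiation gives $\frac{\partial}{\partial \alpha} \Psi_{1}(\alpha, \beta) = \frac{1}{2} \log\!\paren*{1 - \frac{\beta^{2}}{\alpha^{2}}} \le 0$.

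For $\tau \in [0, 1) \cup (1, 2)$ I would expose the power-mean structure hidden in \eqref{eq:Psi-tau}. Writing $q = \frac{1}{2 - \tau}$ and $M_{q}(a, b) = \paren[\big]{\tfrac{a^{q} + b^{q}}{2}}^{1/q}$ for the $q$-th power mean, and using $\alpha = M_{1}(\alpha + \beta, \alpha - \beta)$, one checks that $\Psi_{\tau}(\alpha, \beta)$ equals a strictly positive constant (namely $\frac{2^{1-\tau}}{1-\tau}$ or $\frac{1}{(\tau-1)n^{\tau-1}}$) times $\alpha - M_{q}(\alpha + \beta, \alpha - \beta)$ when $\tau \in [0, 1)$, and the analogous positive constant times $M_{q}(\alpha + \beta, \alpha - \beta) - \alpha$ when $\tau \in (1, 2)$. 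By positive homogeneity, $M_{q}(\alpha + \beta, \alpha - \beta) = \alpha\, m(\beta/\alpha)$ with $m(r) = M_{q}(1 + r, 1 - r)$, so $\frac{\partial}{\partial \alpha} M_{q}(\alpha + \beta, \alpha - \beta) = m(r) - r\, m'(r)$ where $r = \beta/\alpha$. Everything thus reduces to showing $m(r) - r\, m'(r) \ge 1$ when $q < 1$ and $m(r) - r\, m'(r) \le 1$ when $q > 1$; since $m(0) = 1$, both follow from the tangent-line characterization of concavity, respectively convexity, of $m$ on $[0,1]$ (evaluate the tangent line of $m$ at $r$ at the point $0$).

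It remains to establish that $m(r) = \paren[\big]{\tfrac{(1 + r)^{q} + (1 - r)^{q}}{2}}^{1/q}$ is concave on $[0, 1]$ when $0 < q < 1$ and convex when $q > 1$; this is the crux. The convex case is immediate: $\norm*{\cdot}_{q}$ is a norm for $q \ge 1$, $r \mapsto (1 + r, 1 - r)$ is affine, and $m$ is a positive multiple of $r \mapsto \norm*{(1+r,1-r)}_{q}$, hence convex. For $0 < q < 1$, the map $(a, b) \mapsto (a^{q} + b^{q})^{1/q}$ is concave on $\Rset_{\ge 0}^{2}$ — it is positively homogeneous of degree one and superadditive there by the reverse Minkowski inequality — and the segment $\curl*{(1 + r, 1 - r) : r \in [0, 1]}$ lies in $\Rset_{\ge 0}^{2}$, so its restriction $m$ is concave. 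The main technical nuisance is the endpoint $r = 1$ (attained when $\alpha = \beta$), where $m$ fails to be differentiable for $q < 1$; this is harmless because one can run the whole argument without derivatives, using that $\psi_{\tau}(s) := \abs*{m(s) - 1}$ is convex on $[0, 1]$ with $\psi_{\tau}(0) = 0$, so that $s \mapsto \psi_{\tau}(s)/s$ is non-decreasing, which immediately makes $\Psi_{\tau}(\alpha, \beta)$ — a positive constant times $\beta \cdot \psi_{\tau}(\beta/\alpha)\big/(\beta/\alpha)$ — non-increasing in $\alpha$, and hence minimized at $\alpha = 1$.
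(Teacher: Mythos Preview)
Your proof is correct and follows the same high-level idea the paper sketches in the single sentence preceding the lemma (``By taking the partial derivative of $\Psi_{\tau}(\alpha,\cdot)$ with respect to $\alpha$ and analyzing the minima''): show $\alpha\mapsto\Psi_\tau(\alpha,\beta)$ is non-increasing on $[\beta,1]$ so the minimum sits at $\alpha=1$. The paper provides no further detail, whereas you give a complete argument; your power-mean reformulation $\Psi_\tau(\alpha,\beta)=C\,\alpha\,\psi_\tau(\beta/\alpha)$ with $\psi_\tau$ convex and $\psi_\tau(0)=0$ is a genuinely cleaner execution than a direct case-by-case derivative computation, since it replaces the four separate sign analyses by a single convexity fact (and, as you note, sidesteps the non-differentiability at $\alpha=\beta$ when $q<1$).
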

We denote by $\sT_{\tau}(\beta)= \Psi_{\tau}(1,\beta)$ and call it the
$\sH$-consistency comp-sum transformation, and denote by
$\Gamma_{\tau}$ the inverse of $\sT_{\tau}$: $\Gamma_{\tau}(t) =
\sT_{\tau}^{-1}(t)$. We then present the proofs of
Theorem~\ref{Thm:bound_comp_sum} and Theorem~\ref{Thm:tightness} in
the below.  \ignore{By using standard inequalities, $\Gamma_{\tau}$
  can be upper bounded as follows
\begin{align*}
\Gamma_{\tau}(t)\leq \begin{cases}
\sqrt{2^{\frac{\tau}{2-\tau}}(2-\tau) t} & \tau\in [0,1)\\
\sqrt{2^{2-\tau}(2-\tau)n^{\frac{(\tau-1)(4-\tau)}{2-\tau}} t } & \tau\in [1,2) \\
\ignore{\sqrt{\frac{2n^{\frac{2\tau-2}{2-\tau}}}{2-\tau} t } & \tau\in [1,2) \\}
(\tau - 1) n^{\tau - 1} t & \tau \in [2,\plus \infty).
\end{cases}
\end{align*}}

\BoundCompSum*
\begin{proof}
Using previous lemmas, we can lower bound the calibration gap of
comp-sum losses as follows, for any $h\in \sH$,
\begin{align*}
& \sC_{\ell_{\tau}^{\mathrm{comp}}}(h, x) - \sC^*_{\ell_{\tau}^{\mathrm{comp}}}\paren*{\sH,x} \\
& \geq \sC_{\ell_{\tau}^{\mathrm{comp}}}(h, x) - \inf_{\mu \in \Rset}\sC_{\ell_{\tau}^{\mathrm{comp}}}(\ov h_{\mu},x) \\
& \geq \inf_{h\in \sH}\paren*{\sC_{\ell_{\tau}^{\mathrm{comp}}}(h, x) - \inf_{\mu \in \Rset}\sC_{\ell_{\tau}^{\mathrm{comp}}}(\ov h_{\mu},x)}\\
& =
\begin{cases}
\frac{2^{2-\tau}}{1-\tau}\bracket*{\frac{p(x,y_{\max})+p(x,\hh(x))}{2}-\bracket*{\frac{p(x,y_{\max})^{\frac1{2 - \tau }}+p(x,\hh(x))^{\frac1{2 - \tau }}}{2}}^{2 - \tau }} & \tau \in [0,1)\\
p(x,y_{\max})\log\bracket*{\frac{2p(x,y_{\max})}{p(x,y_{\max})+p(x,\hh(x))}} + p(x,\hh(x))\log\bracket*{\frac{2p(x,\hh(x))}{p(x,y_{\max})+p(x,\hh(x))}} & \tau =1 \\
\frac{2}{(\tau-1)n^{\tau-1}}\paren*{\bracket*{\frac{p(x,y_{\max})^{\frac1{2-\tau}}+p(x,\hh(x))^{\frac1{2-\tau}}}{2}}^{2-\tau}-\frac{p(x,y_{\max})+p(x,\hh(x))}{2}} & \tau \in (1,2)\\
\frac{1}{(\tau-1)n^{\tau-1}}
\paren*{p(x,y_{\max}) - p(x,\hh(x))} & \tau \in [2,\plus \infty)
\end{cases}
\tag{By Lemma~\ref{lemma:lemma-inf}}\\
& \geq \sT_{\tau}\paren*{p(x,y_{\max}) - p(x,\hh(x))} \tag{By \eqref{eq:Psi-tau} and Lemma~\ref{lemma:Psi-tau}}\\
& = \sT_{\tau}\paren*{\sC_{\ell_{0-1}}(h, x) - \sC^*_{\ell_{0-1}}\paren*{\sH,x}} \tag{by \citep[Lemma~ 3]{AwasthiMaoMohriZhong2022multi}}
\end{align*}
Therefore, taking $\sP$ be the set of all distributions, $\sH$ be the
symmetric and complete hypothesis set, $\e=0$ and
$\Psi(\beta)= \sT_{\tau}(\beta)$ in \citep[Theorem~
  4]{AwasthiMaoMohriZhong2022multi}, or, equivalently, $\Gamma(t) =
\Gamma_{\tau}(t)$ in \citep[Theorem~
  5]{AwasthiMaoMohriZhong2022multi}, we obtain for any hypothesis
$h\in\sH$ and any distribution,
\begin{align*}
\sR_{\ell_{0-1}}(h)-\sR^*_{\ell_{0-1}}(\sH)\leq \Gamma_{\tau}\paren*{\sR_{\ell_{\tau}^{\mathrm{comp}}}(h)-\sR^*_{\ell_{\tau}^{\mathrm{comp}}}(\sH)+\sM_{\ell_{\tau}^{\mathrm{comp}}}(\sH)}-\sM_{\ell_{0-1}}(\sH).
\end{align*}
\end{proof}

\Tightness*
\begin{proof}
For any $\beta\in [0,1]$, we consider the distribution that
concentrates on a singleton $\curl*{x_0}$ and satisfies
$p(x_0,1)= \frac{1+\beta}{2}$, $p(x_0,2)= \frac{1-\beta}{2}$,
$p(x_0,y)=0,\,3\leq y\leq n$. We take $h_{\tau}\in \sH$ such that
$e^{h_{\tau}(x,1)}=e^{h_{\tau}(x,2)}$, $e^{h_{\tau}(x,y)}=0,\, 3\leq
y\leq n$.  \ignore{We take the hypotheses $\curl*{h_{\tau}:\tau
    \in[0,\infty)}\in \sH$ using the following criteria, when $\tau\in
       [0,1]$, take $h_{\tau}\in \sH$ such that
       $e^{h_{\tau}(x,1)}=e^{h_{\tau}(x,2)}$, $e^{h_{\tau}(x,y)}=0,\,
       3\leq y\leq n$; when $\tau\in (1,\plus\infty)$, take
       $h_{\tau}\in \sH$ such that
       $e^{h_{\tau}(x,1)}=e^{h_{\tau}(x,2)}= \cdots=e^{h_{\tau}(x,n-1)}=e^{h_{\tau}(x,n)}$.}
  Then,
\begin{align*}
\sR_{\ell_{0-1}}(h_{\tau})- \sR_{\ell_{0-1},\sH}^*+\sM_{\ell_{0-1},\sH} & = \sR_{\ell_{0-1}}(h_{\tau}) - \mathbb{E}_{X} \bracket* {\sC^*_{\ell_{0-1}}(\sH,x)}= \sC_{\ell_{0-1}}(h_{\tau},x_0) - \sC^*_{\ell_{0-1}}\paren*{\sH,x_0}= \beta
\end{align*}
and for any $\tau\in[0,1]$,
\begin{align*}
& \sR_{\ell_{\tau}^{\rm{comp}}}(h_{\tau}) - \sR_{\ell_{\tau}^{\rm{comp}}}^*(\sH)+ \sM_{\ell_{\tau}^{\rm{comp}}}(\sH)\\ & = \sR_{\ell_{\tau}^{\rm{comp}}}(h_{\tau}) - \mathbb{E}_{X} \bracket* {\sC^*_{\ell_{\tau}^{\rm{comp}}}(\sH,x)}\\
& = \sC_{\ell_{\tau}^{\mathrm{comp}}}(h_{\tau},x_0) - \sC^*_{\ell_{\tau}^{\mathrm{comp}}}\paren*{\sH,x_0}\\
& =p(x_0,1) \ell_{\tau}^{\rm{comp}}(h_{\tau}, x_0, 1) + p(x_0,2) \ell_{\tau}^{\rm{comp}}(h_{\tau}, x_0, 2) - \inf_{h\in \sH} \bracket*{p(x_0,1) \ell_{\tau}^{\rm{comp}}(h, x_0, 1) + p(x_0,2) \ell_{\tau}^{\rm{comp}}(h, x_0, 2)}\\
& = \sT_{\tau}(\beta), \tag{By \eqref{eq:comp-loss} and \eqref{eq:comp-sum-Cstar}}
\end{align*}
which completes the proof.
\end{proof}

\section{Approximations of \texorpdfstring{$\sT_{\tau}$}{T} and \texorpdfstring{$\Gamma_{\tau}$}{Gamma}}
\label{app:Gamma-upper-bound}
In this section, we show how $\sT_{\tau}$ can be lower bounded by its polynomial approximation $\wt \sT_{\tau}$, and accordingly, $\Gamma_{\tau}$ can then be upper bounded by $\wt \Gamma_{\tau}= \wt \sT_{\tau}^{-1}$. By analyzing the Taylor expansion, we obtain for any $\beta\in [-1,1]$,
\begin{equation}
\label{eq:inequality-auxiliary-1}
\begin{aligned}
\paren*{\frac{(1+\beta)^r+(1-\beta)^r}{2}}^{\frac1r} & \geq 1+\frac{\beta^2}{2}\paren*{1-\frac1r}, \text{ for all } r\geq 1\\
\paren*{\frac{(1+\beta)^r+(1-\beta)^r}{2}}^{\frac1r} & \leq 1 - \frac{\beta^2}{2}\paren*{1-r}, \text{ for all } \frac12\leq r\leq 1.
\end{aligned}
\end{equation}
and 
\begin{align}
\label{eq:inequality-auxiliary-2}
\frac{1+\beta}{2}\log\bracket*{1+\beta} + \frac{1-\beta}{2}\log\bracket*{1-\beta}\geq \frac{\beta^2}{2}.
\end{align}
For $\tau \in [0,1)$, we have
\begin{align*}
\sT_{\tau}(\beta) 
& = \frac{2^{1-\tau}}{1-\tau}\bracket*{1-\bracket*{\frac{\paren*{1+\beta}^{\frac1{2 - \tau }}+\paren*{1-\beta}^{\frac1{2 - \tau }}}{2}}^{2 - \tau }}\\
& \geq \frac{2^{1-\tau}}{1-\tau}\bracket*{1-\bracket*{1 - \frac{\beta^2}{2}\frac{1-\tau}{2-\tau}}} \tag{using \eqref{eq:inequality-auxiliary-1} with $r= \frac{1}{2-\tau}\in\bracket*{\frac12,1}$}\\
& = \frac{\beta^2}{2^{\tau}(2-\tau)}\\
& = \wt \sT_{\tau}(\beta).
\end{align*}
For $\tau\in (1,2)$, we have
\begin{align*}
\sT_{\tau}(\beta) 
& = \frac{1}{(\tau-1)n^{\tau-1}}\bracket*{\bracket*{\frac{\paren*{1+\beta}^{\frac1{2 - \tau }}+\paren*{1-\beta}^{\frac1{2 - \tau }}}{2}}^{2 - \tau }-1}\\
& \geq \frac{1}{(\tau-1)n^{\tau-1}}\bracket*{\bracket*{1 + \frac{\beta^2}{2}(\tau-1)}-1} \tag{using \eqref{eq:inequality-auxiliary-1} with $r= \frac{1}{2-\tau}\geq 1$}\\
& = \frac{\beta^2}{2n^{\tau-1}}\\
& = \wt \sT_{\tau}(\beta).
\end{align*}
For $\tau=1$, we have
\begin{align*}
\sT_{\tau}(\beta) & = \frac{1+\beta}{2}\log\bracket*{1+\beta} + \frac{1-\beta}{2}\log\bracket*{1-\beta}\\
&\geq \frac{\beta^2}{2} \tag{using \eqref{eq:inequality-auxiliary-2}}\\
& = \wt \sT_{\tau}(\beta).
\end{align*}
For $\tau\geq 2$,  $\sT_{\tau}(\beta)= \frac{\beta}{(\tau-1)n^{\tau-1}}= \wt \sT_{\tau}(\beta)$.
Therefore, for any $\tau\in [0,\plus\infty)$, 
\begin{align*}
\sT_{\tau}(\beta)\geq \wt\sT_{\tau}(\beta)=
\begin{cases}
\frac{\beta^2}{2^{\tau}(2-\tau)} & \tau \in [0,1)\\
\frac{\beta^2}{2n^{\tau-1}} & \tau \in [1,2)\\
\frac{\beta}{(\tau-1)n^{\tau-1}}& \tau \in [2,\plus \infty).
\end{cases}
\end{align*}
Furthermore, by using Taylor expansion, we have
\begin{align*}
\lim_{\beta \to 0^{+}}\frac{\wt\sT_{\tau}(\beta)}{\sT_{\tau}(\beta)} = c>0 \text{ for some constant $c>0$}.
\end{align*}
Thus, the order of polynomials $\wt\sT_{\tau}(\beta)$ is tightest.
Since
$\Gamma_{\tau}= \sT_{\tau}^{-1} $ and $\wt \Gamma_{\tau}=
\wt \sT_{\tau}^{-1}$, we also obtain for any $\tau\in [0,\plus\infty)$, 
\begin{align*}
\Gamma_{\tau}(t)\leq  \wt \Gamma_{\tau}(t)= \wt\sT_{\tau}^{-1}(t)= \begin{cases}
\sqrt{2^{\tau}(2-\tau) t} & \tau\in [0,1)\\
\sqrt{2n^{\tau-1} t } & \tau\in [1,2) \\
(\tau - 1) n^{\tau - 1} t & \tau \in [2,\plus \infty).
\end{cases}
\end{align*}

\section{Characterization of minimizability gaps
  (proofs of Theorem~\ref{Thm:gap-upper-bound} and
  Theorem~\ref{Thm:gap-upper-bound-determi})}
\label{app:gap-upper-bound}
\GapUpperBound*
\begin{proof}
Using the fact that $\Phi^{\tau}$ is concave and non-decreasing, by
\eqref{eq:concave-Phi1}, we can then upper bound the minimizability
gaps for different $\tau \geq 0$ as follows,
\begin{align*}
\sM_{\ell_{\tau}^{\rm{comp}}}(\sH)
\leq \Phi_{\tau}\paren*{\sR^*_{\ell_{\tau=0}^{\rm{comp}}}(\sH)} - \E_x[\sC^*_{\ell_{\tau}^{\rm{comp}}}(\sH, x)].
\end{align*}
By definition, the conditional $\ell_{\tau}^{\rm{comp}}$-risk can be expressed as follows: 
\begin{align}
\label{eq:cond-comp-sum}
\sC_{\ell_{\tau}^{\rm{comp}}}(h, x)  =  \sum_{y\in \sY} p(x,y) \Phi_{\tau}\paren*{\sum_{y'\neq y}\exp\paren*{h(x,y')-h(x,y)}}.
\end{align}
Note that $\sC_{\ell_{\tau}^{\rm{comp}}}(h, x)$ is convex and differentiable with respect to $h(x,y)$s, by taking the partial derivative and using the derivative of $\Phi_{\tau}$ given in \eqref{eq:Phi1-derivative}, we obtain
\begin{equation}
\label{eq:cond-comp-sum-deriv}
\begin{aligned}
&\frac{\partial \sC_{\ell_{\tau}^{\rm{comp}}}(h, x)}{\partial h(x,y)}\\
& =p(x,y)\frac{\partial \Phi_{\tau}}{\partial u}\paren*{\sum_{y'\neq y}\exp\paren*{h(x,y')-h(x,y)}}\paren*{-\sum_{y'\neq y}\exp\paren*{h(x,y')-h(x,y)}}\\
& + \sum_{y'\neq y} p(x,y')\frac{\partial \Phi_{\tau}}{\partial u}\paren*{\sum_{y''\neq y'}\exp\paren*{h(x,y'')-h(x,y')}}\paren*{\exp\paren*{h(x,y)-h(x,y')}}\\
& =p(x,y)\frac{-\sum_{y'\neq y}\exp\paren*{h(x,y')-h(x,y)}}{\bracket*{\sum_{y'\in \sY}\exp\paren*{h(x,y')-h(x,y)}}^{\tau}} + \sum_{y'\neq y} p(x,y')\frac{\exp\paren*{h(x,y)-h(x,y')}}{\bracket*{\sum_{y''\in \sY}\exp\paren*{h(x,y'')-h(x,y')}}^{\tau}}
\end{aligned}
\end{equation}
Let $\sS(x,y) = \sum_{y'\in \sY}\exp\paren*{h(x,y')-h(x,y)}$. Then,
$\exp\paren*{h(x,y)-h(x,y')} = \frac{\sS(x,y')}{\sS(x,y)}$ and thus
\eqref{eq:cond-comp-sum-deriv} can be written as
\begin{align}
\label{eq:cond-comp-sum-deriv-S}
\frac{\partial \sC_{\ell_{\tau}^{\rm{comp}}}(h, x)}{\partial h(x,y)} = p(x,y)\frac{-\sS(x,y)+1}{\sS(x,y)^{\tau}} + \sum_{y'\neq y} p(x,y') \frac{1}{\sS(x,y')^{\tau-1}\sS(x,y)}
\end{align}
It is straightforward to verify that 
\begin{align}
\label{eq:solution}
\sS^*(x,y) =
\begin{cases}
\frac{\sum_{y'\in \sY}p(x,y')^{\frac{1}{2-\tau}}}{p(x,y)^{\frac{1}{2-\tau}}} & \tau \neq 2\\
\frac{1}{\mathds{1}_{y= \argmax_{y'\in \sY}p(x,y')}} & \tau =2
\end{cases}
\end{align}
satisfy
\begin{align*}
\frac{\partial \sC_{\ell_{\tau}^{\rm{comp}}}(h, x)}{\partial h(x,y)} = 0, \forall y\in \sY.
\end{align*}
When $\sH$ is symmetric and complete, \eqref{eq:solution} can be
attained by some $h^*\in \sH$.  Since
$\sC_{\ell_{\tau}^{\rm{comp}}}(h, x)$ is convex and differentiable with
respect to $h(x,y)$s, we know that $h^*$ achieves the minimum of
$\sC_{\ell_{\tau}^{\rm{comp}}}(h, x)$ within $\sH$.  Then,
\begin{align*}
\sC^*_{\ell_{\tau}^{\rm{comp}}}(\sH, x) 
& =  \sC_{\ell_{\tau}^{\rm{comp}}}(h^*, x)\\
& = \sum_{y\in \sY} p(x,y) \Phi_{\tau}\paren*{\sum_{y'\neq y}\exp\paren*{h^*(x,y')-h^*(x,y)}}\\
& = \sum_{y\in \sY} p(x,y) \Phi_{\tau}\paren*{\sS^*(x,y)-1}\tag{by the def. of $\sS(x,y)$.}\\
& = \begin{cases}
\sum_{y\in \sY} p(x,y) \frac{1}{1 - \tau} \paren*{\sS^*(x,y)^{1 - \tau} - 1} & \tau \geq 0, \tau\neq1\\
\sum_{y\in \sY} p(x,y) \log\bracket*{\sS^*(x,y) } & \tau=1\\
\end{cases} \tag{by \eqref{eq:comp-loss}.}\\
&  = \begin{cases}
\frac{1}{1 - \tau} \paren*{\bracket*{\sum_{y\in \sY}p(x,y)^{\frac{1}{2-\tau}}}^{2 - \tau} - 1} & \tau\geq 0, \tau\neq1, \tau \neq 2\\
-\sum_{y\in \sY} p(x,y) \log\bracket*{p(x,y) } & \tau=1\\
1 - \max_{y\in \sY}p(x,y) & \tau =2.
\end{cases}
\tag{by \eqref{eq:solution}}
\end{align*}
\ignore{ Since $\Phi_{\tau}= \int_{0}^u \frac{1}{(1+t)^{\tau}}\,dt$ is
  decreasing with respect to $\tau\geq 0$, for any $h\in \sH$ and
  $x\in \sX$, $\sC_{\ell_{\tau}^{\rm{comp}}}(h, x)$ expressed in
  \eqref{eq:cond-comp-sum} is a decreasing function of $\tau \geq 0$,
  we obtain that $\sC^*_{\ell_{\tau}^{\rm{comp}}}(\sH, x)= \inf_{h\in
    \sH}\sC_{\ell_{\tau}^{\rm{comp}}}(h, x)$ is also a decreasing
  function of $\tau \geq 0$, which concludes the proof.}
\end{proof}

\GapUpperBoundDetermi*
\begin{proof}
Using the fact that $\Phi_{\tau}$ is concave and non-decreasing, by
\eqref{eq:concave-Phi1}, we can then upper bound the minimizability
gaps for different $\tau \geq 0$ as follows,
\begin{align*}
\sM_{\ell_{\tau}^{\rm{comp}}}(\sH)
\leq \Phi_{\tau}\paren*{\sR^*_{\ell_{\tau=0}^{\rm{comp}}}(\sH)} - \E_x[\sC^*_{\ell_{\tau}^{\rm{comp}}}(\sH, x)].
\end{align*}
Let $y_{\max} = \argmax p(x,y)$. By definition, for any deterministic distribution, the conditional $\ell_{\tau}^{\rm{comp}}$-risk can be expressed as follows:
\begin{equation}
\label{eq:cond-comp-sum-determi}
\begin{aligned}
\sC_{\ell_{\tau}^{\rm{comp}}}(h, x)  =  & \Phi_{\tau}\paren*{\sum_{y'\neq y_{\max}}\exp\paren*{h(x,y')-h(x,y_{\max})}}\\
& = 
\begin{cases}
\frac{1}{1 - \tau} \paren*{\paren*{1 + \frac{\sum_{y'\neq y_{\max}}\exp\paren*{h(x,y')}}{\exp(h(x,y_{\max}))}}^{1 - \tau} - 1} \\
\log\paren*{1 + \frac{\sum_{y'\neq y_{\max}}\exp\paren*{h(x,y')}}{\exp(h(x,y_{\max}))}}.
\end{cases}
\end{aligned}
\end{equation}
Since for any $\tau>0$, $\Phi_{\tau}$ is increasing, under the
assumption of $\sH$, $h^*$ that satisfies
\begin{align}
\label{eq:solution-determi}
h^*(x,y) = 
\begin{cases}
\Lambda & y = y_{\max}\\
-\Lambda & \text{otherwise}
\end{cases}
\end{align}
achieves the minimum of $\sC_{\ell_{\tau}^{\rm{comp}}}(h, x)$ within $\sH$.
Then, 
\begin{align*}
\sC^*_{\ell_{\tau}^{\rm{comp}}}(\sH, x) 
& =  \sC_{\ell_{\tau}^{\rm{comp}}}(h^*, x)\\
& = 
\begin{cases}
\frac{1}{1 - \tau} \paren*{\paren*{1 + \frac{\sum_{y'\neq y_{\max}}\exp\paren*{h^*(x,y')}}{\exp(h^*(x,y_{\max}))}}^{1 - \tau} - 1} \\
\log\paren*{1 + \frac{\sum_{y'\neq y_{\max}}\exp\paren*{h^*(x,y')}}{\exp(h^*(x,y_{\max}))}}.
\end{cases}\\
& = \begin{cases}
\frac{1}{1 - \tau} \paren*{\bracket*{1 + e^{-2 \Lambda}(n - 1)}^{1 - \tau} - 1} & \tau\geq 0, \tau\neq1\\
\log\bracket*{1 + e^{-2 \Lambda}(n - 1) } & \tau=1.
\end{cases}
\tag{by \eqref{eq:solution-determi}}
\end{align*}
Since $\sC^*_{\ell_{\tau}^{\rm{comp}}}(\sH, x) $ is independent of $x$, we have $\E_x[\sC^*_{\ell_{\tau}^{\rm{comp}}}(\sH, x)]= \sC^*_{\ell_{\tau}^{\rm{comp}}}(\sH, x)$ and thus
\begin{align*}
\sM_{\ell_{\tau}^{\rm{comp}}}(\sH)
\leq \Phi_{\tau}\paren*{\sR^*_{\ell_{\tau=0}^{\rm{comp}}}(\sH)} - \sC^*_{\ell_{\tau}^{\rm{comp}}}(\sH, x),
\end{align*}
which concludes the proof.
\end{proof}

\section{Proof of Lemma~\ref{lemma:lemma-compare}}
\label{app:lemma-compare}
\LemmaCompare*
\begin{proof}
For any $u_1 \geq u_2\geq 0$ and $\tau \neq 1$, we have
\begin{align*}
& \frac{\partial \paren*{\Phi_{\tau}(u_1)-\Phi_{\tau}(u_2)}}{\partial \tau}\\
& = \frac{\paren*{(1 + u_1)^{1 - \tau} - (1 + u_2)^{1 - \tau}}}{(1-\tau)^2} + \frac{1}{1-\tau}\paren*{\paren*{1 + u_2}^{1-\tau}\log(1+u_2)-\paren*{1 + u_1}^{1-\tau}\log(1+u_1)}\\
& = \frac{g(u_1,\tau) - g(u_2,\tau)}{(1-\tau)^2}
\end{align*}
where $g(t,\tau) =(1 + t)^{1 - \tau} - (1-\tau)\paren*{1 +
  t}^{1-\tau}\log(1+t)$. By taking the partial derivative, we obtain
for any $\tau \neq 1$ and $t\geq 0$,
\begin{align*}
\frac{\partial g}{\partial t} 
 = -(1-\tau)^2(1+t)^{\tau}\log (1+t)\leq  0
\end{align*}
Therefore, for any $u_1 \geq u_2\geq 0$ and $\tau \neq 1$,
$g(u_1,\tau) \leq  g(u_2,\tau)$ and
\begin{align*}
\frac{\partial \paren*{\Phi_{\tau}(u_1)-\Phi_{\tau}(u_2)}}{\partial \tau} \leq 0,
\end{align*}
which implies that for any $u_1 \geq u_2\geq 0$ and $\tau\neq 1$,
$\Phi_{\tau}(u_1)-\Phi_{\tau}(u_2)$ is a non-increasing function of
$\tau$.  Moreover, since for $x\geq 1$, $\frac{1}{\tau-1}
\paren*{x^{\tau-1} - 1} \to \log(x)$ as $\tau \to 1$, we know that for
any $u_1 \geq u_2\geq 0$, $\Phi_{\tau}(u_1)-\Phi_{\tau}(u_2)$ is
continuous with respect to $\tau=1$. Therefore, we conclude that for
any $u_1 \geq u_2\geq 0$, $\Phi_{\tau}(u_1)-\Phi_{\tau}(u_2)$ is
non-increasing with respect to $\tau$.
\end{proof}

\section{Proof of adversarial \texorpdfstring{$\sH$}{H}-consistency bound for adversarial comp-sum losses (Theorem~\ref{Thm:bound_comp_rho_adv})}
\label{app:deferred_proofs_adv_comp}

\BoundCompRhoAdv*
\begin{proof}
Let $\ov \sH_\gamma(x) = \curl*{h\in\sH:\inf_{x':\norm*{x-x'}\leq
    \gamma}\rho_h(x', \hh(x)) > 0}$ and $p(x)=(p(x, 1), \ldots, p(x,
c))$.  For any $x\in \sX$ and $h\in \sH$, we define $h\paren*{x,
  \curl*{1}^h_x}, h\paren*{x, \curl*{2}^h_x},\ldots, h\paren*{x,
  \curl*{c}^h_x}$ by sorting the scores $\curl*{h(x, y):y\in \sY}$ in
increasing order, and $p_{\bracket*{1}}(x), p_{\bracket*{2}}(x),
\ldots, p_{\bracket*{c}}(x)$ by sorting the probabilities $\curl*{p(x,
  y):y\in \sY}$ in increasing order.  Note
$\curl*{c}^h_x= \hh(x)$. Since $\sH$ is symmetric and locally
$\rho$-consistent, for any $x\in \sX$, there exists a hypothesis $h^*
\in \sH$ such that
\begin{align*}
& \inf_{x'\colon \norm*{x - x'}\leq \gamma}\abs*{h^*(x', i) - h^*(x', j)}  \geq \rho, \forall i\neq j \in \sY\\
& p(x,\curl*{k}^{h^*}_{x'})  =p_{\bracket*{k}}(x), \forall x'\in \curl*{x'\colon \norm*{x - x'}\leq \gamma}, \forall k\in
\sY.
\end{align*}
Then, we have
\begin{align*}
&\sC^*_{\wt \ell^{\mathrm{comp}}_{\tau,\rho}}(\sH,x)\\
& \leq \sC_{\wt \ell^{\mathrm{comp}}_{\tau,\rho}}(h^*,x)\nonumber\\
& = \sum_{y\in \sY}  \sup_{x':\norm*{x-x'}\leq \gamma}p(x, y)\Phi^{\tau}\paren*{ \sum_{y'\neq y} \Phi_{\rho}\paren*{h^*(x',y)-h^*(x',y')}}\nonumber\\
& =
\sum_{i=1}^c \sup_{x':\norm*{x-x'}\leq \gamma} p(x,\curl*{i}^{h^*}_{x'})\Phi^{\tau} \bracket*{\sum_{j=1}^{i-1}\Phi_{\rho}\paren*{h^*(x',\curl*{i}^{h^*}_{x'})-h^*(x',\curl*{j}^{h^*}_{x'})}+ \sum_{j=i+1}^{c}\Phi_{\rho}\paren*{h^*(x',\curl*{i}^{h^*}_{x'})-h^*(x',\curl*{j}^{h^*}_{x'})}}\\
& = \sum_{i=1}^c \sup_{x':\norm*{x-x'}\leq \gamma} p(x,\curl*{i}^{h^*}_{x'})\Phi^{\tau} \bracket*{\sum_{j=1}^{i-1}\Phi_{\rho}\paren*{h^*(x',\curl*{i}^{h^*}_{x'})-h^*(x',\curl*{j}^{h^*}_{x'})} + c-i} \tag{$\Phi_{\rho}(t)=1,\,\forall t\leq 0$}\\
& = \sum_{i=1}^c \sup_{x':\norm*{x-x'}\leq \gamma} p(x,\curl*{i}^{h^*}_{x'})\Phi^{\tau} (c-i) \tag{$\inf_{x':\norm*{x-x'}\leq \gamma}\abs*{h^*(x',i)-h^*(x',j)}\geq \rho$ for any $i\neq j$ and $\Phi_{\rho}(t)=0,\,\forall t\geq \rho$}\\
& = \sum_{i=1}^c p_{\bracket*{i}}(x) \Phi^{\tau}(c-i) \tag{$p(x,\curl*{k}^{h^*}_{x'})  =p_{\bracket*{k}}(x), \forall x'\in \curl*{x'\colon \norm*{x - x'}\leq \gamma}, \forall k\in
\sY$}.
\end{align*}

Note $\ov \sH_{\gamma}(x)\neq\emptyset$ under the assumption. Then, use the derivation above, we obtain
\begin{align*}
& \Delta\sC_{\wt \ell^{\mathrm{comp}}_{\tau,\rho},\sH}(h, x)\\
& =  \sum_{i=1}^c \sup_{x':\norm*{x-x'}\leq \gamma} p(x,\curl*{i}^{h}_{x'})\Phi^{\tau} \bracket*{\sum_{j=1}^{i-1}\Phi_{\rho}\paren*{h(x',\curl*{i}^{h}_{x'})-h(x',\curl*{j}^{h}_{x'})} + c-i} -  \sum_{i=1}^c p_{\bracket*{i}}(x) \Phi^{\tau}(c-i)\\
&\geq \Phi^{\tau}(1)\, p(x, \hh(x))\mathds{1}_{h\not \in \ov \sH_{\gamma}(x)} +  \sum_{i=1}^c \sup_{x':\norm*{x-x'}\leq \gamma} p(x,\curl*{i}^{h}_{x'}) \Phi^{\tau}\paren*{c-i} -  \sum_{i=1}^c p_{\bracket*{i}}(x) \Phi^{\tau}(c-i) \tag{$\Phi_{\rho}$ is non-negative and $\Phi^{\tau}$ is non-decreasing}\\
& \geq \Phi^{\tau}(1)\,p(x, \hh(x))\mathds{1}_{h\not \in \ov \sH_{\gamma}(x)} + \sum_{i=1}^c p(x,\curl*{i}^{h}_{x}) \Phi^{\tau}(c-i) - \sum_{i=1}^c p_{\bracket*{i}}(x) \Phi^{\tau}(c-i) \tag{$\sup_{x':\norm*{x-x'}\leq \gamma} p(x,\curl*{i}^{h}_{x'})\geq p(x,\curl*{i}^{h}_{x} $}\\
& = \Phi^{\tau}(1)\,p(x, \hh(x))\mathds{1}_{h\not \in \ov \sH_{\gamma}(x)} +\Phi^{\tau}(1) \paren*{\max_{y\in \sY}p(x, y) - p(x, \hh(x))} \\
&\qquad+ 
\begin{bmatrix}
\Phi^{\tau}(1)\\
\Phi^{\tau}(1)\\
\Phi^{\tau}(2)\\
\vdots\\
\Phi^{\tau}(c-1)
\end{bmatrix}
\cdot
\begin{bmatrix}
p(x,\curl*{c}^{h}_{x})\\
p(x,\curl*{c-1}^{h}_{x})\\
p(x,\curl*{c-2}^{h}_{x})\\
\vdots\\
p(x,\curl*{1}^{h}_{x})\\
\end{bmatrix}
-\begin{bmatrix}
\Phi^{\tau}(1)\\
\Phi^{\tau}(1)\\
\Phi^{\tau}(2)\\
\vdots\\
\Phi^{\tau}(c-1)
\end{bmatrix}
\cdot
\begin{bmatrix}
p_{\bracket*{c}}(x)\\
p_{\bracket*{c-1}}(x)\\
p_{\bracket*{c-2}}(x)\\
\vdots\\
p_{\bracket*{1}}(x)\\
\end{bmatrix}
\tag{$p_{[c]}(x)= \max_{y\in \sY}p(x, y)$, $\curl*{c}^{h}_x = \hh(x)$ and $\Phi^{\tau}(0)=0$}\\
& \geq \Phi^{\tau}(1)\, p(x, \hh(x))\mathds{1}_{h\not \in \ov \sH_{\gamma}(x)} + \Phi^{\tau}(1) \paren*{\max_{y\in \sY}p(x, y) - p(x, \hh(x))} \tag{ rearrangement inequality for $\Phi^{\tau}(1)\leq \Phi^{\tau}(1) \leq \Phi^{\tau}(2)\leq \cdots\leq \Phi^{\tau}(c-1)$ and $p_{\bracket*{c}}(x)\geq \cdots \geq  p_{\bracket*{1}}(x)$}\\
& = \Phi^{\tau}(1)\paren*{\max_{y\in \sY}p(x, y) - p(x, \hh(x))\mathds{1}_{h\in \ov \sH_{\gamma}(x)}}
\end{align*}
for any $h\in\sH$. Since $\sH$ is symmetric and $\ov \sH_{\gamma}(x)\neq\emptyset$, we have 
\begin{align*}
\Delta\sC_{\ell_{\gamma},\sH}(h, x)
& = \sC_{\ell_{\gamma}}(h, x)-\sC^*_{\ell_{\gamma}}(\sH,x)\\
& = \sum_{y\in \sY} p(x,y) \sup_{x':\norm*{x-x'}\leq \gamma}\mathds{1}_{\rho_h(x', y) \leq 0}-\inf_{h\in \sH}\sum_{y\in \sY} p(x,y) \sup_{x':\norm*{x-x'}\leq \gamma}\mathds{1}_{\rho_h(x', y) \leq 0}\\
& = \paren*{1-p(x,\hh(x))}\mathds{1}_{h\in \ov \sH_{\gamma}(x)}+\mathds{1}_{h\not \in \ov \sH_{\gamma}(x)} - \inf_{h\in \sH}\bracket*{\paren*{1-p(x,\hh(x))}\mathds{1}_{h\in \ov \sH_{\gamma}(x)}+ \mathds{1}_{h\not \in \ov \sH_{\gamma}(x)}}\\
& = \paren*{1-p(x,\hh(x))}\mathds{1}_{h\in \ov \sH_{\gamma}(x)} +\mathds{1}_{h\not \in \ov \sH_{\gamma}(x)} - \paren*{1-\max_{y\in \sY} p(x,y)} \tag{$\sH$ is symmetric and $\ov \sH_{\gamma}(x)\neq\emptyset$}\\
& = \max_{y\in \sY}p(x, y) - p(x, \hh(x))\mathds{1}_{h\in \ov \sH_{\gamma}(x)}.
\end{align*}
Therefore, by the definition, we obtain
\begin{align*}
\sR_{\ell_{\gamma}}(h)- \sR^*_{\ell_{\gamma}}(\sH)+\sM_{\ell_{\gamma}}(\sH)
& =
\mathbb{E}_{X}\bracket*{\Delta\sC_{\ell_{\gamma}}(h, x)}\\
& = \mathbb{E}_{X}\bracket*{\max_{y\in \sY}p(x, y) - p(x, \hh(x))\mathds{1}_{h\in \ov \sH_{\gamma}(x)}}\\
&\leq
\Phi^{\tau}(1)\,\mathbb{E}_{X}\bracket*{\Delta\sC_{\wt \ell^{\mathrm{comp}}_{\tau,\rho},\sH}(h, x)}\\
& = \Phi^{\tau}(1) \paren*{\sR_{\wt \ell^{\mathrm{comp}}_{\tau,\rho}}(h)-\sR^*_{\wt \ell^{\mathrm{comp}}_{\tau,\rho}}(\sH) + \sM_{\wt \ell^{\mathrm{comp}}_{\tau,\rho}}(\sH)},
\end{align*}
which implies that
\begin{align*}
\sR_{\ell_{\gamma}}(h)- \sR^*_{\ell_{\gamma}}(\sH) \leq \Phi^{\tau}(1)\paren*{\sR_{\wt \ell^{\mathrm{comp}}_{\tau,\rho}}(h)-\sR^*_{\wt \ell^{\mathrm{comp}}_{\tau,\rho}}(\sH) + \sM_{\wt \ell^{\mathrm{comp}}_{\tau,\rho}}(\sH)} - \sM_{\ell_{\gamma}}(\sH).
\end{align*}
\end{proof}

%%%%%%%%%%%%%%%%%%%%%%%%%%%%%%%%%%%%%%%%%%%%%%%%%%%%%%%%%%%%%%%%%%%%%%%%%%%%%%%
%%%%%%%%%%%%%%%%%%%%%%%%%%%%%%%%%%%%%%%%%%%%%%%%%%%%%%%%%%%%%%%%%%%%%%%%%%%%%%%

\section{Learning bounds (proof of Theorem~\ref{th:genbound})}
\label{app:genbound}

\GenBound*
\begin{proof}
  By the standard Rademacher complexity bounds \citep{MohriRostamizadehTalwalkar2018}, the following holds
  with probability at least $1 - \delta$ for all $h \in \sH$:
\[
\abs*{\sR_{\ell_{\tau}^{\rm{comp}}}(h) - \h \sR_{\ell_{\tau}^{\rm{comp}}, S}(h)}
\leq 2 \Rad_m^\tau(\sH) +
B_\tau \sqrt{\tfrac{\log (2/\delta)}{2m}}.
\]
Fix $\e > 0$. By the definition of the infimum, there exists $h^* \in
\sH$ such that $\sR_{\ell_{\tau}^{\rm{comp}}}(h^*) \leq
\sR_{\ell_{\tau}^{\rm{comp}}}^*(\sH) + \e$. By definition of
$\h h_S$, we have
\begin{align*}
  & \sR_{\ell_{\tau}^{\rm{comp}}}(\h h_S) - \sR_{\ell_{\tau}^{\rm{comp}}}^*(\sH)\\
  & = \sR_{\ell_{\tau}^{\rm{comp}}}(\h h_S) - \h \sR_{\ell_{\tau}^{\rm{comp}}, S}(\h h_S) + \h \sR_{\ell_{\tau}^{\rm{comp}}, S}(\h h_S) - \sR_{\ell_{\tau}^{\rm{comp}}}^*(\sH)\\
  & \leq \sR_{\ell_{\tau}^{\rm{comp}}}(\h h_S) - \h \sR_{\ell_{\tau}^{\rm{comp}}, S}(\h h_S) + \h \sR_{\ell_{\tau}^{\rm{comp}}, S}(h^*) - \sR_{\ell_{\tau}^{\rm{comp}}}^*(\sH)\\
  & \leq \sR_{\ell_{\tau}^{\rm{comp}}}(\h h_S) - \h \sR_{\ell_{\tau}^{\rm{comp}}, S}(\h h_S) + \h \sR_{\ell_{\tau}^{\rm{comp}}, S}(h^*) - \sR_{\ell_{\tau}^{\rm{comp}}}^*(h^*) + \e\\
  & \leq
  2 \bracket*{2 \Rad_m^\tau(\sH) +
B_\tau \sqrt{\tfrac{\log (2/\delta)}{2m}}} + \e.
\end{align*}
Since the inequality holds for all $\e > 0$, it implies:
\[
\sR_{\ell_{\tau}^{\rm{comp}}}(\h h_S) - \sR_{\ell_{\tau}^{\rm{comp}}}^*(\sH)
\leq 
4 \Rad_m^\tau(\sH) +
2 B_\tau \sqrt{\tfrac{\log (2/\delta)}{2m}}.
\]
Plugging in this inequality in the bound of
Theorem~\ref{Thm:bound_comp_sum} completes the proof.
\end{proof}

\end{document}